\newtheorem{theorem}{Theorem}
\newtheorem{lemma}{Lemma}
\newtheorem{definition}{Definition}
\definecolor{au-blue}{cmyk}{1,0.8,0,0.15}
\definecolor{au-darkblue}{cmyk}{1,0.8,0,0.75}
\definecolor{au-purple}{cmyk}{0.7,0.7,0,0}
\definecolor{au-darkpurple}{cmyk}{0.7,0.7,0,0.75}
\definecolor{au-cyan}{cmyk}{1,0,0,0}
\definecolor{au-darkcyan}{cmyk}{1,0,0,0.75}
\definecolor{au-turkis}{cmyk}{0.8,0,0.4,0}
\definecolor{au-darkturkis}{cmyk}{0.8,0,0.4,0.75}
\definecolor{au-green}{cmyk}{0.6,0,1,0}
\definecolor{au-darkgreen}{cmyk}{0.6,0,1,0.75}
\definecolor{au-yellow}{cmyk}{0,0.3,1,0}
\definecolor{au-darkyellow}{cmyk}{0,0.3,1,0.75}
\definecolor{au-orange}{cmyk}{0,0.6,1,0}
\definecolor{au-darkorange}{cmyk}{0,0.6,1,0.75}
\definecolor{au-red}{cmyk}{0,1,1,0}
\definecolor{au-darkred}{cmyk}{0,1,1,0.75}
\definecolor{au-magenta}{cmyk}{0,1,0,0}
\definecolor{au-darkmagenta}{cmyk}{0,1,0,0.75}
\definecolor{au-grey}{cmyk}{0,0,0,0.6}
\definecolor{au-darkgrey}{cmyk}{0,0,0,0.85}
\newcommand{\brackets}[1]{\left( #1 \right)}
\newcommand{\abs}[1]{\left| #1 \right|}
\newcommand{\bigmid}{~\big\vert~}
\newcommand{\numberthis}{\addtocounter{equation}{1}\tag{\theequation}}
\newcommand{\train}{S}
\newcommand{\onot}{O}
\DeclareMathOperator*{\expectation}{\mathbb{E}}
\renewcommand{\epsilon}{\varepsilon}
\renewcommand{\phi}{\varphi}
\newcommand{\Dist}{\mathcal{D}}
\DeclareMathOperator*{\E}{\mathbb{E}}
\DeclareMathOperator*{\sign}{sign}
\newcommand{\Hyp}{\mathcal{H}}
\newcommand{\Xs}{\mathcal{X}}
\DeclareMathOperator{\supp}{supp}
\newcommand{\Conv}{\Delta}
\newcommand{\eps}{\varepsilon}
\newcommand{\Alg}{\mathcal{A}}
\newcommand{\Algsub}{\mathcal{A}^*_{\nu}}
\newcommand{\Loss}{\mathcal{L}}
\newcommand{\Natural}{\mathbb{N}}
\newcommand{\Concept}{\mathcal{C}}
\DeclareMathOperator{\Err}{Err}
\newcommand{\Margin}{\mathcal{M}}
\newcommand{\dft}{\mathcal{D}_{f\!,t}}
\newenvironment{customlem}[1]
  {\innercustomlem}
  {\endinnercustomlem}
\patchcmd{\@algocf@start}{-1.5em}{-1pt}{}{}
\let\oldnl\nl \newcommand{\nonl}{\renewcommand{\nl}{\let\nl\oldnl}}
\title{Optimal Weak to Strong Learning}
\author{Kasper Green Larsen\thanks{larsen@cs.au.dk, Aarhus University, Denmark}~ and Martin Ritzert\thanks{ritzert@cs.au.dk, Aarhus University, Denmark 
\\This work was supported by Independent Research Fund Denmark (DFF) Sapere Aude Research Leader grant No 9064-00068B and DIREC, Digital Research Center Denmark.}}
\date{\vspace{-5ex}}
\begin{document}

\maketitle

\begin{abstract}
  The classic algorithm AdaBoost allows to convert a weak learner, that is an algorithm that produces a hypothesis which is slightly better than chance, into a strong learner, achieving arbitrarily high accuracy when given enough training data.
  We present a new algorithm that constructs a strong learner from a weak learner but uses less training data than AdaBoost and all other weak to strong learners to achieve the same generalization bounds.
  A sample complexity lower bound shows that our new algorithm uses the minimum possible amount of training data and is thus optimal.
  Hence, this work settles the sample complexity of the classic problem of constructing a strong learner from a weak learner.
\end{abstract}

\section{Introduction}

The field of boosting has been started from a classic question in learning theory asking whether classifiers that are just slightly better than random guessing can be used to create a classifier with arbitrarily high accuracy when given enough training data.
This question was initially asked by Kearns and Valiant \citep{kearns1988learning,kearns1994cryptographic} and ignited the line of research that eventually lead to the development of AdaBoost~\citep{freund1997decision}, the prototype boosting algorithm to date.
AdaBoost carefully combines the predictions of several inaccurate classifiers trained with a focus on different parts of the training data to come up with a voting classifier that performs well everywhere.

We quantify the performance of an inaccurate learner by its \emph{advantage} $\gamma$ over random guessing. 
Said loosely, a $\gamma$\emph{-weak learner} will correctly classify new data points with probability at least $1/2+\gamma$.
In contrast, given $0< \eps,\delta < 1$ and enough training data a \emph{strong learner} outputs with probability $1-\delta$ over the choice of the training data and possible random choices of the algorithm a hypothesis that correctly classifies new data points with probability at least $1-\eps$.
The number of samples $m(\eps,\delta)$ such that the learning algorithm achieves the desired accuracy and confidence levels is called the \emph{sample complexity}.
The sample complexity is the key metric for the performance of a strong learner and depends on the weak learner's advantage $\gamma$, the weak learner's flexibility measured in terms of the VC-dimension, as well as $\eps$ and $\delta$.
Essentially, a construction with low sample complexity makes the most out of the available training data.

AdaBoost~\citep{freund1997decision} is the classic algorithm for constructing a strong learner from a $\gamma$-weak learner. 
If the weak learner outputs a hypothesis from a base set of hypotheses $\Hyp$, then AdaBoost constructs a strong learner by taking a weighted majority vote among several hypotheses $h_1,\dots,h_t$ from $\Hyp$. 
Each of these hypotheses is obtained by invoking the $\gamma$-weak learning algorithm on differently weighted versions of a set of training samples $S$. 
The number of samples required by AdaBoost for strong learning depends both on the advantage $\gamma$ of the weak learner and the complexity of the hypothesis set $\Hyp$.
If we let $d$ denote the VC-dimension of $\Hyp$, i.e. the cardinality of the largest set of data points $x_1,\dots,x_d$ such that every classification of $x_1,\dots,x_d$ can be realized by a hypothesis $h \in \Hyp$, then it is known that AdaBoost is a strong learner, which for error $\eps$ and failure probability $\delta$, requires
\begin{align}
  \label{eq:ada}
O\left(\frac{d \ln(1/(\eps \gamma)) \ln(d/(\eps \gamma))}{\eps \gamma^2} + \frac{\ln(1/\delta)}{\eps} \right),
\end{align}
samples.
This sample complexity is state-of-the-art for producing a strong learner from a $\gamma$-weak learner. 
However, is this the best possible sample complexity? This is the main question we ask and answer in this work.

First, we present a new algorithm for constructing a strong learner from a weak learner and prove that it requires only
\begin{align*}
  O\left(\frac{d}{\eps \gamma^2} + \frac{\ln(1/\delta)}{\eps} \right)
\end{align*}
samples. 
In addition to improving over AdaBoost by two logarithmic factors, we complement our new algorithm by a lower bound, showing that any algorithm for converting a $\gamma$-weak learner to a strong learner requires
\begin{align*}
  \Omega\left(\frac{d}{\eps \gamma^2}  + \frac{\ln(1/\delta)}{\eps}\right)
\end{align*}
samples. 
Combining these two results, we have a tight bound on the sample complexity of weak to strong learning.
In the remainder of the section, we give a more formal introduction to weak and strong learning as well as present our main results and survey previous work. 

\subsection{Weak and strong learning}
Consider a binary classification task in which there is an unknown concept $c : \Xs \to \{-1,1\}$ assigning labels to a ground set $\Xs$. The goal is to learn or approximate $c$ to high accuracy.
Formally, we assume that there is an unknown but fixed data distribution $\Dist$ over $\Xs$.
A learning algorithm then receives a set $S$ of i.i.d. samples $x_1,\dots,x_m$ from $\Dist$ together with their labels $c(x_1),\dots,c(x_m)$ and produces a hypothesis $h$ with $h \approx c$ based on $S$ and the labels.
To measure how well $h$ approximates $c$, it is assumed that a new data point $x$ is drawn from the same unknown distribution $\Dist$, and the goal is to minimize the probability of mispredicting the label of $x$. 
We say that a learning algorithm is a weak learner if it satisfies the following:
\begin{definition}
  \label{def:weak}
  Let $\Concept \subseteq \Xs \to \{-1,1\}$ be a set of concepts and $\Alg$ a learning algorithm. We say that $\Alg$ is a $\gamma$\emph{-weak learner} for $\Concept$, if there is a constant $\delta_0 < 1$ and an integer $m_0 \in \Natural$, such that for every distribution $\Dist$ over $\Xs$ and every concept $c \in \Concept$, when given $m_0$ i.i.d. samples $S=x_1,\dots,x_{m_0}$ from $\Dist$ together with their labels $c(x_1),\dots,c(x_{m_0})$, it holds with probability at least $1-\delta_0$ over the sample $S$ and the randomness of $\Alg$, that $\Alg$ outputs a hypothesis $h : \Xs \to \{-1,1\}$ such that
  \begin{align*}
    \Loss_{\Dist}(h) ~=~ \Pr_{x \sim \Dist}\big[h(x) \neq c(x)\big] ~\leq~ 1/2-\gamma.
  \end{align*}
\end{definition}
A $\gamma$-weak learner thus achieves an advantage of $\gamma$ over random guessing when given $m_0$ samples. 
Note that $\Alg$ knows neither the distribution $\Dist$, nor the concrete concept $c \in \Concept$ but achieves the advantage $\gamma$ for all $\Dist$ and $c$.
We remark that in several textbooks (e.g. \citet{mohri2018foundations}) a weak learner needs to work for any arbitrary $\delta > 0$ while \cref{def:weak} only requires the existence of some $\delta_0$.
Thus, every weak learner satisfying the definition of Mohri et al. also satisfies \cref{def:weak}, making our results more general.

In contrast to a weak learner, a strong learner can obtain arbitrarily high accuracy: \begin{definition}
   Let $\Concept \subseteq \Xs \to \{-1,1\}$ be a set of concepts and $\Alg$ a learning algorithm. We say that $\Alg$ is a \emph{strong learner} for $\Concept$, if for all $0 < \eps, \delta < 1$, there is some number $m(\eps,\delta)$ such that for every distribution $\Dist$ over $\Xs$ and every concept $c \in \Concept$, when given $m=m(\eps,\delta)$ i.i.d. samples $S=x_1,\dots,x_{m}$ from $\Dist$ together with their labels $c(x_1),\dots,c(x_{m})$, it holds with probability at least $1-\delta$ over the sample $S$ and the randomness of $\Alg$, that $\Alg$ outputs a hypothesis $h : \Xs \to \{-1,1\}$ such that
  \begin{align*}
    \Loss_{\Dist}(h) ~=~ \Pr_{x \sim \Dist}\big[h(x) \neq c(x)\big] ~\leq~ \eps.
  \end{align*}
\end{definition}
The definition of a strong learner is essentially identical to the classic notion of $(\eps,\delta)$-PAC learning in the realizable setting. 
Unlike the $\gamma$-weak learner, we here require the learner to output a classifier with arbitrarily high accuracy ($\eps$ small) and confidence ($\delta$ small) when given enough samples $S$.

Kearns and Valiant \citep{kearns1988learning,kearns1994cryptographic} asked whether one can always obtain a strong learner when given access only to a $\gamma$-weak learner for a $\gamma > 0$. 
This was answered affirmatively by \citet{schapire1990strength} and is the motivation behind the design of AdaBoost~\citep{freund1997decision}.
If we let $\Hyp$ denote the set of hypotheses that a $\gamma$-weak learner may output from, then AdaBoost returns a \emph{voting} classifier $f(x) = \sign(\sum_{i=1}^t \alpha_i h_i(x))$ where each $h_i \in \Hyp$ is the output of the $\gamma$-weak learner when trained on some carefully weighted version of the training set $S$ and each $\alpha_i$ is a real-valued weight. 
In terms of sample complexity $m(\eps,\delta)$, the number of samples stated in \cref{eq:ada} is sufficient for AdaBoost.
There are several ways to prove this.
For instance, it can be argued that when given $m$ samples, AdaBoost combines only $t = O(\gamma^{-2} \ln m)$ hypotheses $h_1,\dots,h_t$ from $\Hyp$ in order to produce an $f$ that perfectly classifies all the training data $S$, i.e. $f(x_i)=c(x_i)$ for all $x_i \in S$. Using that the class $\Hyp^t$ can generate at most $O(\binom{m}{d}^t)$ distinct classifications of $m$ points (i.e. its growth function is bounded by this), one can intuitively invoke classic generalization bounds for PAC-learning in the realizable case to conclude that the hypothesis $f$ satisfies
\begin{align}
  \label{eq:adagen}
  \Loss_\Dist(f) ~\leq~ \onot\left(\frac{t d \ln(m/d) + \ln(1/\delta)}{m} \right) 
  ~=~ \onot\left(\frac{d \ln(m/d) \ln m}{\gamma^2 m} + \frac{\ln(1/\delta)}{m}\right)
\end{align}
with probability at least $1-\delta$ over $S$ (and potentially the randomness of the weak learner).
Using $\Loss_\Dist(f) = \eps$ and solving \cref{eq:adagen} for $m$ gives the sample complexity stated in \cref{eq:ada}. 
This is the best sample complexity bound of any weak to strong learner prior to this work.

Our main upper bound result is a new algorithm with better sample complexity than AdaBoost and other weak to strong learners. 
It guarantees the following:
\begin{theorem}
    \label{thm:finalintro}
    Assume we are given access to a $\gamma$-weak learner for some $0 < \gamma < 1/2$, using a base hypothesis set $\Hyp \subseteq \Xs \to \{-1,1\}$ of VC-dimension $d$. 
    Then there is a universal constant $\alpha>0$ and an algorithm $\Alg$, such that $\Alg$ is a strong learner with sample complexity $m(\eps,\delta)$ satisfying
\begin{align*}
  m(\eps,\delta) ~\leq~ \alpha \cdot \left(\frac{d \gamma^{-2}}{\eps} + \frac{\ln(1/\delta)}{\eps}\right).
\end{align*}
\end{theorem}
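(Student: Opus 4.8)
The plan is first to reduce to a fixed constant confidence: it suffices to build, for some absolute constant $\delta_0 < 1$, a learner that from $O(d\gamma^{-2}/\eps)$ samples outputs a hypothesis of error at most $\eps$ with probability at least $1-\delta_0$, since the confidence can then be amplified to $1-\delta$ at an additive cost of $O(\ln(1/\delta)/\eps)$ samples by standard arguments (running a few independent copies and selecting by a held-out sample, being careful --- as in the optimal PAC-learning literature --- not to multiply the leading term by $\ln(1/\delta)$). The constant-confidence learner has two layers: an \emph{inner} boosting routine that turns $n$ samples into a voting classifier over $\Hyp$ that is correct on all $n$ of them \emph{with margin} $\Omega(\gamma)$, and an \emph{outer} sub-sampling-plus-majority-vote wrapper applied on top of it, analysed jointly in the spirit of recent optimal PAC learners.

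For the inner routine I would run AdaBoost (or a margin-maximising variant) for $T = O(\gamma^{-2}\ln n)$ rounds on its $n$ input samples. The standard potential-function analysis of AdaBoost shows that after this many rounds the fraction of input samples on which the normalised vote $f = \sign(\sum_i \alpha_i h_i)$, $\sum_i |\alpha_i| = 1$, has margin below $\gamma/3$ drops below $1/n$, hence is zero; so $f$ classifies all $n$ samples correctly with margin at least $\gamma/3$. The reason to insist on a margin is that, by the (tight) margin-based generalisation bound for convex combinations of a VC class of dimension $d$, such an $f$ satisfies $\Loss_{\Dist}(f) = O\big(\tfrac{d}{\gamma^2 n}\ln(\tfrac{\gamma^2 n}{d}) + \tfrac{\ln(1/\delta_0)}{n}\big)$ with probability $\ge 1-\delta_0$, and crucially with \emph{no} dependence on the number of rounds $T$. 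This already eliminates one of AdaBoost's two logarithmic factors; the remaining single logarithm is exactly what the known margin lower bounds show is unavoidable for a \emph{single} voting classifier, which is why the outer layer is needed.

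For the outer layer I would draw several sub-samples of $S$ (equivalently, partition $S$ into blocks and feed unions of blocks to the inner routine) and output the majority vote $F$ of the resulting voting classifiers. The analysis parallels that of optimal ERM-based PAC learners: since each inner classifier is correct on all of its own input points, a point $x \in S$ can be misclassified by $F$ only when many of the inner classifiers \emph{not} trained on $x$ err on it; bounding the expected number of such $x$ by a leave-one-block-out argument and passing back to the true error removes the $\ln n$, giving $\Loss_{\Dist}(F) = O\big((d\gamma^{-2} + \ln(1/\delta_0))/m\big)$. The twist relative to the Boolean case is that the inner classifiers are $T$-term voting classifiers, so a black-box complexity count would reinstate a factor $T = \Theta(\gamma^{-2}\ln n)$; this is avoided by propagating the inner routine's margin guarantee into the wrapper analysis, so that the effective complexity contributed by each inner classifier is $d/\gamma^2$ rather than $dT$.

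I expect the crux to be precisely this interaction: running a sub-sampling-and-voting analysis in which the base classifiers are large-margin voters rather than members of a fixed VC class. One must choose the number and sizes of the sub-samples so that each is large enough for the inner margin bound to bite, so that the total number of inner invocations is controlled, and --- most delicately --- so that the margin generalisation bound can be applied under the conditioning induced by the leave-one-out step rather than only in expectation over a fresh i.i.d.\ draw. Once this is in place, setting $\eps = \Theta(d\gamma^{-2}/m)$, solving for $m$, and adding back the confidence-amplification cost yields $m(\eps,\delta) \le \alpha\,(d\gamma^{-2}/\eps + \ln(1/\delta)/\eps)$ as claimed.
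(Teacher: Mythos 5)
Your overall architecture is the same as the paper's (an inner margin-boosting routine such as AdaBoost$^*_\nu$, an outer Hanneke-style sub-sample-and-majority-vote wrapper, and a separate standard treatment of the $\ln(1/\delta)/\eps$ term), but there is a genuine gap at the step you treat as known. You invoke a ``tight'' margin-based bound of the form $\Loss_{\Dist}(f) = O\big(\tfrac{d}{\gamma^2 n}\ln(\tfrac{\gamma^2 n}{d}) + \tfrac{\ln(1/\delta_0)}{n}\big)$ for voters with margin $\Omega(\gamma)$ on all $n$ samples. No such upper bound is known: the available bounds (Breiman's min-margin bound, and the $k$-th margin bound of Gao--Zhou) have the form $O\big(\tfrac{d\ln(n/d)\ln n}{\gamma^2 n}\big)$, with $\ln(n/d)$ rather than $\ln(\gamma^2 n/d)$. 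The expression you wrote matches the \emph{lower} bound of Gr{\o}nlund et al., not any proven upper bound, and the distinction is exactly what decides the problem. The wrapper needs the inner guarantee at \emph{constant} error (error $1/200$ on the conditional distribution $\Dist(\cdot \mid h_i \text{ errs})$, certified from the $N_i$ conditionally i.i.d.\ points of the left-out block on which the other branches have margin $\gamma/2$). At constant error, $\ln(n/d)$ does not vanish --- it becomes $\Theta(\ln(1/\gamma))$ --- so with the actually-known bounds the required block size is $\Theta\big(d\gamma^{-2}\ln(1/\gamma)\ln(d/\gamma)\big)$ and these factors propagate straight through the induction into the final sample complexity. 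Contrary to your claim, the leave-one-block-out argument does not ``remove the $\ln n$'': it only removes log factors of the $\ln(1/\eps)$ type, not those that turn into functions of $\gamma$.

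Filling this gap is precisely the main technical contribution of the paper: a new generalization theorem stating that $m \geq \alpha(d\gamma^{-2} + \ln(1/\delta))$ samples already force every $f \in \Conv(\Hyp)$ with all margins at least $\gamma$ to have $\Loss_\Dist(\mathrm{sign}(f)) \leq 1/200$, with no $\ln(1/\gamma)$ or $\ln(d/\gamma)$ factors. Its proof is not an ``obvious modification'' of the classical margin arguments; it randomly sparsifies $f$ into a sub-sampled voter $g$, uses Erd\H{o}s' Littlewood--Offord lemma to show $|g(x)|$ is rarely small even off-sample, and then bounds via Rademacher complexity the effective VC dimension of the sign class of voters whose predictions are large in absolute value on most of a sample-plus-ghost set. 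You correctly identify that the interaction between the margin guarantee and the wrapper is the crux and that a black-box complexity count over the $T$-term voters would be too lossy, but the route you propose for resolving it rests on an unproven bound; without a result of the strength of the paper's constant-error theorem, the proposal as written yields at best $O\big((d\gamma^{-2}\ln(1/\gamma)\ln(d/\gamma) + \ln(1/\delta))/\eps\big)$, not the optimal bound claimed.
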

We remark that it is often required that a strong learner runs in polynomial time given a polynomial-time weak learner.
This is indeed the case for our new algorithm.

Next, we complement our algorithm from \cref{thm:finalintro} by the following lower bound:
\begin{theorem}
\label{thm:lowerintro}
There is a universal constant $\alpha > 0$ such that for all integers $d \in \Natural$ and every $2^{-d} < \gamma < 1/80$, there is a finite set $\Xs$, a concept class $\Concept \subset \Xs \to \{-1,1\}$ and a hypothesis set $\Hyp \subseteq \Xs \to \{-1,1\}$ of VC-dimension at most $d$, such that for every $(\eps,\delta)$ with $0 < \eps < 1$ and $0 < \delta < 1/3$, there is a distribution $\Dist$ over $\Xs$ such that the following holds:
\begin{enumerate}
    \item For every $c \in \Concept$ and every distribution $\Dist'$ over $\Xs$, there is an $h \in \Hyp$ with \\
    $\Pr_{x \sim \Dist'}[h(x) \neq c(x)] \leq 1/2-\gamma$.
\item For any algorithm $\Alg$, there is a concept $c \in \Concept$ such that $\Alg$ requires at least
    \begin{align*}
    m ~\geq~ \alpha \cdot \left(\frac{d \gamma^{-2}}{\eps} +  \frac{\ln(1/\delta)}{\eps}\right)
  \end{align*}
  samples $S$ and labels $c(S)$ to guarantee $\Loss_\Dist(h_S) \leq \eps$ with probability at least $1-\delta$ over $S$, where $h_S$ is the hypothesis produced by $\Alg$ on $S$ and $c(S)$.
\end{enumerate}
\end{theorem}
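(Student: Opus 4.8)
The plan is to prove the two lower-bound terms separately and then combine them, since they come from essentially orthogonal phenomena: the $d\gamma^{-2}/\eps$ term is an information-theoretic bound coming from a rich family of hard concepts that the weak learner cannot distinguish without many samples, while the $\ln(1/\delta)/\eps$ term is the standard confidence lower bound for PAC learning in the realizable setting. I would state and prove two separate lemmas, each constructing a suitable $\Xs$, $\Concept$, $\Hyp$, and $\Dist$, and then take a disjoint union of the two constructions (placing them on disjoint pieces of the domain with appropriate probability mass split, e.g.\ half the mass on each part) to get a single instance witnessing $m \geq \alpha(d\gamma^{-2}/\eps + \ln(1/\delta)/\eps)$. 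Condition~1 (the weak learnability hypothesis) must be verified for the combined construction: on each piece we need an $h\in\Hyp$ achieving error $\le 1/2-\gamma$ against any reweighting $\Dist'$, and on the "other" piece the hypothesis can be allowed to do arbitrarily badly as long as $\Hyp$ is closed under taking the union of a good hypothesis on one part with an arbitrary labeling on the other — so I would design $\Hyp$ as a product-like class to make this automatic while keeping VC-dimension $O(d)$.

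For the main $d\gamma^{-2}/\eps$ term I would use a standard reduction to distinguishing distributions / Fano-type or Assouad-type argument. Concretely, take $\Xs$ to consist of $N = \Theta(d/(\eps\gamma^2))$ "heavy" points each of probability mass $\Theta(\eps\gamma^2/d)$ plus possibly a bulk point, and let $\Concept$ be parametrized so that on a VC-shattered set of size $d$ the concept can be chosen freely, with the construction arranged so that a learner seeing $m$ samples sees each relevant coordinate only $O(m\eps\gamma^2/d)$ times; the weak learner's advantage $\gamma$ forces the hypothesis set to be "confusable" at scale $\gamma$, so that unless $m\eps\gamma^2/d = \Omega(1)$ the algorithm cannot identify the concept well enough on an $\eps$-fraction of mass. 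I would make this rigorous by fixing a prior over $c\in\Concept$ (uniform over $2^{\Theta(d)}$ concepts differing on the shattered coordinates), bounding the mutual information between $S,c(S)$ and $c$ by $O(m\cdot(\text{per-sample info}))$, and showing that small mutual information forces expected error $> \eps$, contradicting the $(\eps,\delta)$ guarantee with $\delta < 1/3$. The role of the $\gamma$-weak-learnability hypothesis (Condition~1) is the subtle constraint: it restricts how "spread out" the hard family can be, and matching the VC-dimension bound $d$ to the number of free coordinates while simultaneously ensuring \emph{every} reweighting admits a $(1/2-\gamma)$-accurate $h\in\Hyp$ is exactly where the $\gamma^{-2}$ (rather than $\gamma^{-1}$) factor enters — this I expect to be the main obstacle, and I would handle it by a construction in which each hypothesis in $\Hyp$ correctly labels a carefully chosen $(1/2+\gamma)$-fraction of a group of correlated points, so that averaging over the group gives the advantage against any $\Dist'$ while no single sample reveals much.

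For the $\ln(1/\delta)/\eps$ term I would use the classic construction: two points $a,b$ with $\Pr[a] = 1-\eps'$ and $\Pr[b] = \eps'$ for $\eps' = \Theta(\eps)$, the concept fixed on $a$ and free on $b$, so that with probability $(1-\eps')^m \ge e^{-2\eps' m}$ the sample never contains $b$ and the algorithm must guess the label of $b$; if $e^{-2\eps' m} > \delta$ then the algorithm fails with probability $> \delta/2$ on one of the two choices of label on $b$, forcing $m = \Omega(\ln(1/\delta)/\eps)$. Weak learnability here is trivial since the constant hypothesis agreeing with $c$ on the heavy point $a$ already has error $\le \eps' \le 1/2-\gamma$. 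Finally I would combine: put the first instance on mass $1/2$ and the second on mass $1/2$, note VC-dimension adds to $O(d)$ (rescale $d$ by the constant), re-verify Condition~1 using the product structure of $\Hyp$, and conclude that any algorithm succeeding on the union with probability $1-\delta$ must in particular succeed on each part, yielding $m \geq \alpha(d\gamma^{-2}/\eps + \ln(1/\delta)/\eps)$ as claimed. The bookkeeping of constants (ensuring $\eps < 1$, $\delta < 1/3$, and the ranges $2^{-d} < \gamma < 1/80$ are exactly what the two sub-constructions need) is routine but needs care; the conceptual core is the information-theoretic argument for the first term together with the weak-learnability-preserving design of $\Hyp$.
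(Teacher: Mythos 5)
There is a genuine gap, and it sits exactly where you locate the ``main obstacle.'' Your information-theoretic core uses a prior over only $2^{\Theta(d)}$ concepts (free on a shattered set of size $d$) and hopes that the per-sample information is suppressed to $O(\eps\gamma^2)$ bits so that $m\eps\gamma^2/d = \Omega(1)$ is forced. In the realizable setting this mechanism is not available: labels are the deterministic values $c(x)$, so every sample landing on a point where the candidate concepts disagree reveals a full bit, and a Fano/Assouad argument with $2^{\Theta(d)}$ concepts can only yield $m = \Omega(d/\eps)$, with no $\gamma^{-2}$. Your construction is also internally inconsistent on the error side: if the concept is free only on $\Theta(d)$ points of mass $\Theta(\eps\gamma^2/d)$ each, the total mass the learner can fail on is $O(\eps\gamma^2) \ll \eps$, so even a learner that knows nothing about the free coordinates already has error below $\eps$. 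To force error $\eps$ from unidentified labels you need the free region to carry mass $\Omega(\eps)$ spread over points of mass $\Theta(\eps\gamma^2/d)$, i.e.\ $\Omega(d\gamma^{-2})$ free coordinates and hence $2^{\Omega(d\gamma^{-2})}$ concepts --- and then Condition~1 becomes the real problem: you must exhibit a hypothesis class of VC-dimension at most $d$ such that \emph{every} one of these exponentially many labelings is $\gamma$-weakly learnable under \emph{every} reweighting $\Dist'$. By a minimax/boosting duality this is equivalent to every such labeling being the sign of a margin-$\gamma$ convex combination over $\Hyp$, which is a strong structural requirement, not something a ``product-like'' or ``correlated groups'' design gives for free; your one-sentence sketch does not address how a class with so few effective degrees of freedom realizes margin $\gamma$ on essentially all labelings of $\Theta(d\gamma^{-2})$ points.

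This is precisely what the paper's proof supplies and what your proposal is missing: it invokes a probabilistic construction (Lemma~\ref{lem:randH}, from \citet{kasper2019lowerBoundBoosting}) producing $\Hyp$ over a domain of size $u = \Theta(d\gamma^{-2})$ such that at least half of all $2^u$ labelings admit margin $2\gamma$ in $\Conv(\Hyp)$, with $|\Hyp| = \Theta\big(\gamma^{-2}\ln u\,\ln(\gamma^{-2}\ln u)\,e^{\Theta(\gamma^2 u)}\big)$ so that $\mathrm{VC}(\Hyp) \le \lg|\Hyp| = O(\gamma^2 u + d) = O(d)$ --- this is also where the restriction $\gamma > 2^{-d}$ is used. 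The hard distribution then puts mass $1-(u-1)/(4m)$ on one point and spreads the rest over the other $u-1$ points, so only about $u/4$ of them are ever sampled; a conditional-entropy calculation shows $H(c \mid S, c(S)) \ge u/2$, while an algorithm achieving error below $\alpha' d\gamma^{-2}/m$ would pin $c$ inside a Hamming ball of radius $u/100$, giving entropy at most $0.09u$ --- a contradiction unless the failure probability is at least $1/3$. Your treatment of the $\ln(1/\delta)/\eps$ term and the idea of combining the two terms are fine and standard, but the $d\gamma^{-2}/\eps$ term as you outline it would not go through without replacing the mutual-information step by (or re-deriving) a weakly-learnable concept class of entropy $\Omega(d\gamma^{-2})$.
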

The first statement of \cref{thm:lowerintro} says that the concept class $\Concept$ can be $\gamma$-weakly learned. 
The second point then states that any learner requires as many samples as our new algorithm. 
Moreover, the lower bound does not require the algorithm to even use a $\gamma$-weak learner, nor does it need to run in polynomial time for the lower bound to apply.
Furthermore, the algorithm is even allowed to use the full knowledge of the set $\Concept$ and the distribution $\Dist$. The only thing it does not know is which concept $c \in \Concept$ provides the labels $c(S)$ to the training samples. The lower bound thus matches our upper bound except possible for very small $\gamma < 2^{-d}$. We comment further on this case in \cref{sec:concl}.

In the next section, we present the overall ideas in our new algorithm, as well as a new generalization bound for voting classifiers that is key to our algorithm. 
Finally, we sketch the main ideas in the lower bound.

\subsection{Main ideas and voting classifiers}
One of the key building blocks in our new algorithm is voting classifiers.
To formally introduce voting classifiers, define from a hypothesis set $\Hyp \subseteq \Xs \to \{-1,1\}$ the set of all convex combinations $\Conv(\Hyp)$ of hypotheses in $\Hyp$. That is, $\Conv(\Hyp)$ contains all functions $f$ of the form $f(x) = \sum_{i=1}^t \alpha_i h_i(x)$ with $\alpha_i > 0$ and $\sum_i \alpha_i = 1$. AdaBoost can be thought of as producing a voting classifier $g(x) =\sign(f(x))$ for an $f \in \Conv(\Hyp)$ by appropriate normalization of the weights it uses.

Classic work on understanding the surprisingly high accuracy of AdaBoost introduced the notion of \emph{margins}~\citep{bartlett1998margin}.
For a function $f \in \Conv(\Hyp)$, and a sample  $x$ with label $y$, the margin of $f$ on $(x,y)$ is $yf(x)$. Notice that the margin is positive if and only if $\sign(f(x))$ correctly predicts the label $y$ of $x$. It was empirically observed that AdaBoost produces voting classifiers $g(x) = \sign(f(x))$ where $f$ has large margins. This inspired multiple generalization bounds based on the margins of a voting classifier, considering both the minimum and the $k$-th margin \citep{grove1998boosting,breiman1999prediction,bennett2000column,ratsch2002maximizing,ratsch2005efficient,mathiasen2019optimal}. 
The simplest bound when all margins are assumed to be at least $\gamma$, is Breiman's min margin bound:
\begin{theorem}[\citet{breiman1999prediction}]
  Let $c \in \Xs \to \{-1,1\}$ be an unknown concept, $\Hyp \subseteq \Xs \to \{-1,1\}$ a hypothesis set of VC-dimension $d$ and $\Dist$ an arbitrary distribution over $\Xs$.
  With probability at least $1-\delta$ over a set of $m$ samples $S \sim \Dist^m$, it holds for every voting classifier $g(x) = \sign(f(x))$ with $f \in \Conv(\Hyp)$ satisfying $c(x)f(x) \geq \gamma$ on all $x \in S$, that:
  \begin{align*}
    \Loss_\Dist(g) =  
      \onot\brackets{\frac{d \ln(m/d) \ln m}{\gamma^2 m}}
  \end{align*}
\end{theorem}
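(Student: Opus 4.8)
The plan is to prove the bound via the classical \emph{sparsification} (sampling) argument, paired with a uniform-convergence statement of the fast, realizable-case type; the latter is what separates this bound from the older $\onot(\sqrt{d\ln^2 m/(\gamma^2 m)})$ margin bounds. Fix $f=\sum_{i=1}^t\alpha_i h_i\in\Conv(\Hyp)$ with $c(x)f(x)\ge\gamma$ for all $x\in S$, write $g=\sign(f)$, and set $k=\lceil (8/\gamma^2)\ln(4m)\rceil=\Theta(\gamma^{-2}\ln m)$. Let $\mathcal A_k$ be the set of all $k$-fold averages $\tfrac1k\sum_{\ell=1}^k h_\ell$ of (not necessarily distinct) hypotheses from $\Hyp$, and draw a random $\hat f=\tfrac1k\sum_{\ell=1}^k h_{j_\ell}\in\mathcal A_k$ by sampling $j_1,\dots,j_k$ i.i.d.\ from the weights $(\alpha_i)_i$, so that $\E_{\hat f}[\hat f(x)]=f(x)$ for every $x$. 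Two Hoeffding estimates drive the argument: if $c(x)f(x)\ge\gamma$ then $\Pr_{\hat f}[c(x)\hat f(x)\le\gamma/2]\le e^{-k\gamma^2/8}\le 1/(4m)$, and if $c(x)f(x)\le 0$ (which includes every $x$ on which $g$ errs) then $\Pr_{\hat f}[c(x)\hat f(x)>\gamma/2]\le e^{-k\gamma^2/8}\le 1/(4m)$.

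Next I would bound the complexity of $\mathcal A_k$: each member is determined by $k$ hypotheses from $\Hyp$, so by Sauer--Shelah the family of sets $\{x:c(x)\tilde f(x)\le\gamma/2\}$, $\tilde f\in\mathcal A_k$, cuts out at most $\Pi_{\Hyp}(2m)^k\le(2em/d)^{dk}$ subsets of any $2m$-point set, where $\Pi_\Hyp$ is the growth function of $\Hyp$. The standard realizable-case uniform convergence bound (one-sided relative Chernoff together with the double-sample symmetrization) then gives: with probability at least $1-\delta$ over $S\sim\Dist^m$, every $\tilde f\in\mathcal A_k$ whose empirical margin error at threshold $\gamma/2$ on $S$ is \emph{zero} satisfies $\Pr_{x\sim\Dist}[c(x)\tilde f(x)\le\gamma/2]\le\beta$ with $\beta=\onot\!\big((dk\ln(m/d)+\ln(1/\delta))/m\big)$. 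Call this event $E$, and condition on it henceforth.

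To finish, I would combine the two ingredients. Sample $\hat f\in\mathcal A_k$ as above and, independently, a fresh $x\sim\Dist$, and let $B$ be the event (over $\hat f$) that $c(x_i)\hat f(x_i)>\gamma/2$ for all $x_i\in S$; the first Hoeffding estimate and a union bound over the $m$ training points give $\Pr_{\hat f}[B]\ge 3/4$. When $B$ holds, $\hat f$ has zero empirical $\gamma/2$-margin error, so under $E$ its true $\gamma/2$-margin error is at most $\beta$, whence
\[
 \Pr_{\hat f,x}\big[c(x)\hat f(x)\le\gamma/2\ \text{and}\ B\big]
 \;=\;\E_{\hat f}\big[\indicator(B)\cdot\Pr_{x}[c(x)\hat f(x)\le\gamma/2]\big]\;\le\;\beta .
\]
Conversely, for any fixed $x$ with $c(x)f(x)\le 0$ the second Hoeffding estimate gives $\Pr_{\hat f}[c(x)\hat f(x)\le\gamma/2]\ge 1-1/(4m)\ge 3/4$, and $\Pr_{\hat f}[B]\ge 3/4$, so by inclusion--exclusion both hold with probability at least $1/2$; integrating over such $x$,
\[
 \Pr_{\hat f,x}\big[c(x)\hat f(x)\le\gamma/2\ \text{and}\ B\big]\;\ge\;\tfrac12\Pr_{x}[c(x)f(x)\le 0]\;\ge\;\tfrac12\Loss_\Dist(g).
\]
Comparing the two displays yields $\Loss_\Dist(g)\le 2\beta=\onot\!\big(d\gamma^{-2}\ln m\,\ln(m/d)/m\big)$, the $\ln(1/\delta)$ term being absorbed for constant $\delta$ (or kept as a lower-order additive term), which is the claimed bound.

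The step I expect to be the main obstacle is obtaining the \emph{fast} $1/m$ rate rather than a $\sqrt{\cdot}$ rate: this requires phrasing uniform convergence over $\mathcal A_k$ as a realizable-case bound --- true error small whenever empirical error is exactly zero --- which in turn forces the sparsification to be calibrated so that the sampled approximator really does attain zero empirical $\gamma/2$-margin error with constant probability, pinning down $k=\Theta(\gamma^{-2}\ln m)$ as the smallest value with $m e^{-k\gamma^2/8}\le 1/4$. A secondary point needing care is keeping the three sources of randomness --- the sample $S$ (through the event $E$), the sparse approximator $\hat f$, and the fresh test point $x$ --- cleanly separated, so that the union and inclusion--exclusion steps in the last paragraph are valid.
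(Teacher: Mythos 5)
Your proposal is correct, but note that the paper does not prove this theorem at all: it is quoted from \citet{breiman1999prediction} as background, so there is no in-paper proof to compare against. What you give is the classical sparsification route to the min-margin bound (in the spirit of \citep{bartlett1998margin}): sample $k=\Theta(\gamma^{-2}\ln m)$ base hypotheses from the weights of $f$, observe that with constant probability the sampled average has zero empirical $\gamma/2$-margin error, apply a realizable-type (growth-function) uniform convergence bound to the finite-complexity class of $k$-fold averages, and transfer back to $g=\sign(f)$ by the probabilistic-method comparison of the two bounds on $\Pr_{\hat f,x}[c(x)\hat f(x)\le\gamma/2 \wedge B]$. The Hoeffding constants, the union bound giving $\Pr[B]\ge 3/4$, the growth-function count $\Pi_\Hyp(2m)^k\le(2em/d)^{dk}$, and the final inclusion--exclusion step all check out; the only cosmetic gap is the $\ln(1/\delta)/m$ term, which the theorem statement (as reproduced in the paper) suppresses and which you correctly flag as a lower-order additive term. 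It is worth seeing how this contrasts with the paper's own machinery in \cref{sec:constantgen} for \cref{thm:constantgen}: there the $\ln m$ factor coming from your union bound over the discretized class (via $k=\Theta(\gamma^{-2}\ln m)$, chosen so that $me^{-k\gamma^2/8}$ is small) is avoided precisely because only constant error $1/200$ is targeted, so $t=O(\gamma^{-2})$ suffices, and the loss of per-point margin guarantees is compensated by the Littlewood-Offord anti-concentration bound (\cref{lem:smallunlikely}) plus a Rademacher-complexity bound on the effective VC-dimension of $\sign(\Conv_\delta^\mu(\Hyp,P))$, rather than by a union bound over all sparse averages; your approach, by contrast, yields the full rate in $\eps$ but necessarily pays both logarithmic factors.
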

The resemblance to the generalization performance of AdaBoost in \cref{eq:adagen} is no coincidence. Indeed, a small twist to AdaBoost, presented in the AdaBoost$^*_\nu$ algorithm~\citep{ratsch2005efficient}, ensures that the voting classifier produced by AdaBoost$^*_\nu$ from a $\gamma$-weak learner has all margins at least $\gamma/2$. This gives an alternative way of obtaining the previous best sample complexity in \cref{eq:ada}. We remark that more refined generalization bounds based on margins exist, such as the $k$-th margin bound by \citet{gao2013doubt} which is known to be near-tight~\citep{marginsGradient}.
These bounds take the whole sequence of margins $c(x_i)f(x_i)$ of all samples $x_i \in S$ into account, not only the smallest. However, none of these bounds leads to better generalization from a $\gamma$-weak learner.

We note that the notion of margins has not only been considered in the context of boosting algorithms but also plays a key role in understanding the generalization performance of Support Vector Machines, see e.g. the recent works~\citep{svmUpper, svmLower} giving tight SVM generalization bounds in terms of margins.

In our new algorithm, we make use of a voting classifier with good margins as a subroutine. Concretely, we invoke AdaBoost$^*_\nu$ to obtain margins of at least $\gamma/2$ on all training samples. At first sight, this seems to incur logarithmic losses, at least if the analysis by Breiman is tight.
Moreover, \citet{kasper2019lowerBoundBoosting} proved a generalization lower bound showing that there are voting classifiers with margins~$\gamma$ on all training samples, but where at least one of the logarithmic factors in the generalization bound must occur. 
To circumvent this, we first notice that the lower bound only applies when $m$ is sufficiently larger than $d \gamma^{-2}$. 
We carefully exploit this loophole and prove a new generalization bound for voting classifiers:
\begin{theorem}
  \label{thm:constantgen}
  Let $c \in \Xs \to \{-1,1\}$ be an unknown concept, $\Hyp \subseteq \Xs \to \{-1,1\}$ a hypothesis set of VC-dimension $d$ and $\Dist$ an arbitrary distribution over $\Xs$. 
  There is a universal constant $\alpha > 0$, such
that with probability at least $1-\delta$ over a set of $m \geq \alpha(d \gamma^{-2} + \ln(1/\delta))$ samples $S \sim \Dist^m$, every voting classifier $g(x) = \sign(f(x))$ with $f \in \Conv(\Hyp)$ satisfying $c(x)f(x) \geq \gamma$ on all $x \in S$ achieves
  \begin{align*}
  \Loss_\Dist(g) \leq \tfrac{1}{200}.
  \end{align*}
\end{theorem}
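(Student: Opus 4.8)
\emph{Proof proposal.} The plan is to derive \cref{thm:constantgen} from the classical margin‑based generalization bound for convex combinations of a base class (the ``ensemble margin bound'', see e.g.\ \citet{mohri2018foundations}), specialized to the regime where the empirical margin loss vanishes. Concretely, for a fixed $\rho>0$, with probability at least $1-\delta$ over $S\sim\Dist^m$, every $f\in\Conv(\Hyp)$ satisfies, up to absolute constants,
\begin{align*}
  \Loss_\Dist(\sign(f)) ~\le~ \frac1m\sum_{x\in S}\indicator[c(x)f(x)\le\rho] + \frac{2}{\rho}\,\mathfrak{R}_m(\Hyp) + \sqrt{\frac{\ln(1/\delta)}{2m}},
\end{align*}
where $\mathfrak{R}_m(\Hyp)$ is the Rademacher complexity of $\Hyp$ on $m$ points. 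This uses two standard facts: a linear functional over $\Conv(\Hyp)$ is maximized at an extreme point, so $\mathfrak{R}_m(\Conv(\Hyp))=\mathfrak{R}_m(\Hyp)$; and the contraction lemma applied to the $1/\rho$‑Lipschitz ramp loss that sandwiches $\indicator[\,\cdot\le 0\,]$ and $\indicator[\,\cdot\le\rho\,]$. Taking $\rho=\gamma$ and using the hypothesis $c(x)f(x)\ge\gamma$ on all $x\in S$, the empirical term is $0$, so it only remains to control $\mathfrak{R}_m(\Hyp)$.

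The key quantitative input — and the point at which we escape the logarithmic factors in Breiman's bound — is the \emph{log‑free} estimate $\mathfrak{R}_m(\Hyp)\le C_0\sqrt{d/m}$ for a VC class of dimension $d$, with $C_0$ universal. Rather than Massart's finite‑class lemma (which via Sauer--Shelah only yields $O(\sqrt{d\ln(m/d)/m})$), I would invoke Dudley's entropy integral together with Haussler's bound $\ln N(\Hyp,\epsilon)=O(d\ln(1/\epsilon))$ on the $\epsilon$‑covering numbers of a VC class in the empirical $\ell_2$ metric; since $\int_0^1\sqrt{d\ln(1/\epsilon)}\,d\epsilon=O(\sqrt d)$, chaining gives $\hat{\mathfrak{R}}_S(\Hyp)=O(\sqrt{d/m})$ uniformly over $S$, hence the same for $\mathfrak{R}_m(\Hyp)$.

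Assembling the two steps: with probability at least $1-\delta$, every voting classifier $g=\sign(f)$ with $f\in\Conv(\Hyp)$ of margin at least $\gamma$ on $S$ obeys $\Loss_\Dist(g)\le \tfrac{2C_0}{\gamma}\sqrt{d/m}+\sqrt{\ln(1/\delta)/(2m)}$. If $m\ge\alpha(d\gamma^{-2}+\ln(1/\delta))$, the first term is at most $2C_0/\sqrt\alpha$, and after separating the cases $\delta\le 1/2$ (where $\ln(1/\delta)\ge\ln 2$ lets us absorb lower‑order constants) and $\delta>1/2$ (where $\ln(1/\delta)$ may be replaced by an absolute constant, while $m\ge\alpha d\gamma^{-2}\ge\alpha$ anyway), the second term is $O(1/\sqrt\alpha)$; choosing the universal constant $\alpha$ large enough makes the sum at most $1/200$.

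I expect the main obstacle to be exactly the log‑free Rademacher bound. The naive alternative — first sparsify $f$ into an average $\tilde f$ of $N=O(\gamma^{-2})$ hypotheses drawn according to its weights (so by Hoeffding $\tilde f$ keeps margin $\ge\gamma/2$ on all but a tiny constant fraction of $S$, and errs with margin $\ge\gamma/2$ on only a tiny fraction of the $\Dist$‑mass where $\sign(f)$ errs), then take a relative‑deviation union bound over the $\tau_\Hyp(m)^N$ behaviours of such $\tilde f$ — reintroduces a term $N\ln\tau_\Hyp(m)=\Theta(d\gamma^{-2}\ln(1/\gamma))$ in the regime $m=\Theta(d\gamma^{-2})$, i.e.\ precisely the $\ln(1/\gamma)$ loss we must avoid. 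Only the chaining argument (equivalently, convexifying for free and exploiting that the Dudley integral for the base VC class converges) removes it; the remainder is routine assembly of standard generalization tools together with bookkeeping of the universal constants.
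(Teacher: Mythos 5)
Your proposal is correct, but it takes a genuinely different route from the paper's proof of \cref{thm:constantgen}. You apply the classical first-order margin bound for convex ensembles (Koltchinskii--Panchenko style, as in Mohri et al.): with $\rho=\gamma$ fixed in advance (legitimate here, since $\gamma$ is a parameter of the statement, so no union over margin levels is needed), the empirical term vanishes by hypothesis and the complexity term $\gamma^{-1}\mathfrak{R}_m(\Hyp)$ is controlled by the log-free chaining estimate $\mathfrak{R}_m(\Hyp)=O(\sqrt{d/m})$ from Haussler's covering bound; at target error $1/200$ the slow rate $\gamma^{-1}\sqrt{d/m}+\sqrt{\ln(1/\delta)/m}$, together with your case split on $\delta$, gives exactly the claimed threshold $m=O(d\gamma^{-2}+\ln(1/\delta))$. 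The paper instead sparsifies $f$ into a random sub-voter $g\sim\dft$, uses Erd\H{o}s's Littlewood--Offord lemma to show $|g(x)|$ is rarely small even off-sample, and runs a ghost-sample/growth-function argument in which the class $\hat{\Conv}_\delta^\mu(P)$ of signs of such voters has effective VC dimension $O(\max\{\delta|P|,\mu^{-2}d\})$ --- a bound that itself rests on the same log-free $\sqrt{d/|P'|}$ Rademacher estimate for VC classes that you identified as the key quantitative input, so that ingredient is shared and indeed standard (the paper cites van der Vaart--Wellner for it). What each approach buys: yours is shorter, entirely off-the-shelf, and makes transparent that the logarithmic losses in Breiman-type bounds are an artifact of the growth-function route once only constant accuracy is demanded; the paper's anti-concentration machinery is the authors' candidate technique for the sub-constant-error regime (e.g.\ improving $k$-th margin bounds), where your slow-rate bound degrades to $m=\Theta(d\gamma^{-2}\eps^{-2})$ versus the $\tilde{O}(d\gamma^{-2}\eps^{-1})$ of min-margin bounds and would no longer suffice. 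For the theorem as stated, both arguments deliver the same conclusion.
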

The value $1/200$ is arbitrary and chosen to match the requirements in the proof of  \cref{thm:finalintro}.
Notice how our new generalization bound avoids the logarithmic factors when aiming merely at generalization error $1/200$.
Breiman's bound would only guarantee that $d \gamma^{-2} \ln(1/\gamma) \ln(d/\gamma)$ samples suffice for such a generalization error.
While the focus of previous work on generalization bounds was not on the constant error case, we remark that any obvious approaches to modify the previous proofs could perhaps remove the $\ln m$ factor but not the $\ln(m/d)$ factor.
The $\ln(m/d)$ factor turns into $\Theta(\ln(1/\gamma))$ when solving for $m$ in $d \ln(m/d)/(\gamma^{2}m)=1/200$ and is insufficient for our purpose. 

With the new generalization bound on hand, we can now construct our algorithm for producing a strong learner from a $\gamma$-weak learner. 
Here we use as template the sample optimal algorithm by \citet{hanneke2016optimal} for PAC learning in the realizable case (which improved over a previous near-tight result by \citet{simonPAC}). 
Given a training set $S$, his algorithm carefully constructs a number of sub-samples $S_1,S_2,\dots,S_k$ of $S$ and trains a hypothesis $h_i$ on each $S_i$ using empirical risk minimization. As the final classifier, he returns the voter $g(x) = \sign\!\big((1/k)\sum_{i=1}^k h_i(x)\big)$.

For our new algorithm, we use Hanneke's approach to construct sub-samples $S_1,\dots,S_k$ of a training set $S$.
We then run AdaBoost$^*_\nu$ on each $S_i$ to produce a voting classifier $g_i(x) = \sign(f_i(x))$ for an $f_i \in \Conv(\Hyp)$ with margins $\gamma/2$ on all samples in $S_i$.
We finally return the voter $h(x) = \sign((1/k)\sum_{i=1}^k g_i(x))$. 
Our algorithm thus returns a majority of majorities.

To prove that our algorithm achieves the desired sample complexity $m(\eps,\delta)$ claimed in \cref{thm:finalintro}, we then revisit Hanneke's proof and show that it suffices for his argument that the base learning algorithm (in his case empirical risk minimization, in our case AdaBoost$^*_\nu$) achieves an error of at most $1/200$ when given $\tau$ samples. If this is the case, then his proof can be modified to show that the final error of the output voter drops to $O(\tau/m)$. 
Plugging in the $\tau = \alpha(d\gamma^{-2} + \ln(1/\delta))$ from our new generalization bound in \cref{thm:constantgen} completes the proof.

Let us remark that a lower bound by \citet{kasper2019lowerBoundBoosting} shows the existence of a voting classifier with simultaneously large margins and a generalization error with an additional log-factor.
It is thus conceivable that a simple majority vote is not sufficient and a majority of majorities is indeed necessary, although the lower bound only guarantees the \emph{existence} of a `bad' voter with good margins and not that \emph{all} such voters are `bad'.

In the following, we start by proving our new generalization bound (\cref{thm:constantgen}) in \cref{sec:constantgen}. 
We then proceed in \cref{sec:hanneke} to present our new algorithm and show that it gives the guarantees in \cref{thm:finalintro}. 
Finally, in \cref{sec:lowerBound} we give the proof of the lower bound in \cref{thm:lowerintro}. 
\section{New margin-based generalization bounds for voting classifiers}
\label{sec:constantgen}
In this section, we prove the new generalization bound stated in \cref{thm:constantgen}.
For ease of notation, we write that $\Dist$ is a distribution over $\Xs \times \{-1,1\}$  (and not just a distribution over $\Xs$) and implicitly assume that the label of each $x \in \Xs$ is $c(x)$ for the unknown concept $c$.
Moreover, for a voting classifier $g(x) = \sign(f(x))$ with $f \in \Conv(\Hyp)$, we simply refer to $f$ as the voting classifier and just remark that one needs to take the sign to make a prediction.
Finally, we think of the sample $S$ as a set of pairs $(x_i,y_i)$ with $x_i \in \Xs$ and $y_i = c(x_i) \in \{-1,1\}$. 

The key step in the proof of \cref{thm:constantgen} is to analyze the
generalization performance for a voting classifier obtained by
combining randomly drawn hypotheses among the hypotheses
$h_1,\dots,h_t$ making up a voting classifier $f = \sum_i \alpha_i
h_i$ from $\Conv(\Hyp)$.
We then relate that to the generalization performance of $f$ itself.
Formally, we define a distribution $\dft$ for every $f$ and look at a random hypothesis from $\dft$.
We start by defining this distribution.

Let $f(x) = \sum_h \alpha_h h(x) \in \Conv(\Hyp)$ be a voting classifier.
Let $\Dist_f$ be the distribution over $\Hyp$ (the base hypotheses used in $f$) where $h$ has probability $\alpha_h$. 
Consider drawing $t$ i.i.d. hypotheses $h'_1,\dots,h'_t$ from $\Dist_f$ and then throwing away each $h'_i$ independently with probability $1/2$.
Let $t'$ be the number of remaining hypotheses, denote them $h_1,\dots,h_{t'}$, and let $g = \frac{1}{t'}\sum_{i=1}^{t'} h_i$. 
One can think of $g$ as a sub-sampled version of $f$ with replacement.
Denote by $\dft$ the distribution over $g$.

\paragraph{Key properties of $\bm{\Dist_{\!f\!,\,t}}$.}
In the following, we analyze how a random $g$ from $\dft$ behaves and show that while it behaves similar to $f$ it produces with good probability predictions that are big in absolute value (even if $f(x)\approx 0$).
First, we note that predictions made by a random $g$ are often close to those made by $f$.
The proof uses a standard concentration bound and is given in supplementary material.
\begin{lemma}
  \label{lem:closeapprox}
  For any $x \in \Xs$, any $f \in \Conv(\Hyp)$, and any $\mu > 0$:
  \(
    \Pr_{g \sim \dft}[|f(x)-g(x)| \geq \mu] < 5 e^{-\mu^2t/32}.
  \)
\end{lemma}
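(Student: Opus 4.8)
The plan is to condition on the random number $t'$ of surviving hypotheses and apply a Hoeffding-type bound twice: once to rule out that $t'$ is small, and once, given $t'$, to control the deviation of $g(x)$ from $f(x)$. First I would dispose of the trivial regime: since $f(x)$ and $g(x)$ are both averages of values in $\{-1,1\}$ we have $|f(x)-g(x)|\le 2$ always, so the claim is vacuous for $\mu>2$, and I may assume $\mu\le 2$.

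The key observation is that conditioned on $t'=k$ — indeed on the entire vector of deletion coins — the surviving hypotheses $h_1,\dots,h_k$ are still i.i.d.\ from $\Dist_f$, because the original $t$ draws are i.i.d.\ and independent of the coins. Hence $h_1(x),\dots,h_k(x)$ are i.i.d.\ values in $\{-1,1\}$ with mean $\E_{h\sim\Dist_f}[h(x)] = \sum_h \alpha_h h(x) = f(x)$, so by Hoeffding's inequality for averages of $[-1,1]$-valued variables,
\[
  \Pr\!\big[\,|f(x)-g(x)| \ge \mu \bigmid t'=k\,\big] ~\le~ 2\exp(-k\mu^2/2),
\]
which is at most $2\exp(-t\mu^2/32)$ as soon as $k \ge t/16$. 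To bound the complementary event I would write $t' = \sum_{i=1}^t Z_i$ with $Z_i$ i.i.d.\ $\mathrm{Bernoulli}(1/2)$; Hoeffding then gives $\Pr[t' \le t/16] \le \exp\!\big(-2t(\tfrac12-\tfrac1{16})^2\big) = \exp(-49t/128)$. This event also absorbs the degenerate case $t'=0$ on which $g$ is undefined, so $g$ is only ever evaluated when $t'>t/16$. Summing the conditional bound over $k>t/16$ against the law of $t'$ and adding the tail estimate,
\[
  \Pr_{g\sim\dft}\!\big[|f(x)-g(x)|\ge\mu\big] ~\le~ \exp(-49t/128) + 2\exp(-t\mu^2/32).
\]

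To finish, I would note that since $\mu\le 2$ we have $\mu^2/32 \le 1/8 \le 49/128$, so the first term is at most $\exp(-t\mu^2/32)$ and the whole expression is $\le 3\,e^{-\mu^2 t/32} < 5\,e^{-\mu^2 t/32}$ (the constant $5$ in the statement is not optimized). The argument is otherwise entirely routine; the only point that needs a little care is that the ``sample size'' $t'$ underlying $g$ is itself random and can be as small as $0$, so $g$ need not even be well defined — isolating the low-probability event $\{t'\le t/16\}$ before invoking the conditional Hoeffding bound is what handles this, and I expect that bookkeeping, rather than any analytic subtlety, to be the only obstacle.
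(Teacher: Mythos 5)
Your proof is correct, but it takes a cleaner decomposition than the paper's. The paper proves the lemma in two stages via the intermediate classifier $g'(x)=\tfrac1t\sum_{i=1}^t h'_i(x)$ built from all $t$ first-stage draws: it bounds $|f(x)-g'(x)|\ge\mu/2$ by Hoeffding with replacement, bounds $\Pr[t'<t/4]$ by a Chernoff bound, and then, conditioning on the realized draws and on $t'\ge t/4$, treats $h_1,\dots,h_{t'}$ as a uniform sample \emph{without} replacement from $h'_1,\dots,h'_t$ and applies Hoeffding for sampling without replacement to $|g'(x)-g(x)|\ge\mu/2$; the three failure events give the constant $5$. You instead condition on the deletion coins, note that the survivors are then still i.i.d.\ from $\Dist_f$ with mean $f(x)$, and apply a single conditional Hoeffding bound together with a tail bound on $t'$. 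This avoids the intermediate $g'$ and the without-replacement inequality entirely, yields the slightly better constant $3$, and — by explicitly restricting to $\mu\le 2$ and checking $\mu^2/32\le 49/128$, and by absorbing the degenerate event $t'=0$ into the tail — it handles edge cases ($\mu$ close to $2$, $g$ undefined when $t'=0$) that the paper's write-up glosses over: the paper's final domination $e^{-t/16}\le e^{-\mu^2 t/32}$ in fact needs $\mu\le\sqrt2$, so your version is, if anything, the more airtight of the two. What the paper's route buys in exchange is that it never needs the (correct, but worth stating carefully) observation that conditioning on the coin pattern leaves the surviving hypotheses i.i.d.; in your write-up that observation is the crux, and you justify it properly by conditioning on the full coin vector rather than merely on $t'$.
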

Next, $g$ rarely makes predictions $g(x)$ that are small in absolute value:
\begin{lemma}
  \label{lem:smallunlikely}
  For any $x \in \Xs$, any $f \in \Conv(\Hyp)$, and any $\mu \geq 1/t$:
  \(
    \Pr_{g \sim \dft}[|g(x)| \leq \mu] ~\leq~ 2 \mu \sqrt{t}.
  \)
\end{lemma}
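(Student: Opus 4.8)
The plan is to reduce the claim to an anti-concentration (small-ball) estimate for a sum of independent $\pm 1$ random variables, after a de-biasing trick that eliminates any dependence on $f(x)$.

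Fix $x$ and $f$, and let $p = \Pr_{h \sim \Dist_f}[h(x)=1]$, so that $f(x) = 2p-1$. Condition on how the $t$ i.i.d.\ draws from $\Dist_f$ split on the coordinate $x$: say $a$ of the $t$ sampled hypotheses have $h(x)=1$ and $b=t-a$ have $h(x)=-1$. After each hypothesis is independently discarded with probability $1/2$, let $A\sim\mathrm{Bin}(a,1/2)$ and $B\sim\mathrm{Bin}(b,1/2)$ be the numbers of surviving hypotheses of the two kinds; these are independent, $t'=A+B$, and $t'g(x)=A-B$. Setting $B'=b-B$, which is again $\mathrm{Bin}(b,1/2)$ and independent of $A$, we obtain $t'g(x)=A-B=(A+B')-b=W-b$ with $W:=A+B'\sim\mathrm{Bin}(t,1/2)$. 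The key point is that $W$ is an \emph{unbiased} binomial, no matter what $p$ and $b$ are. Hence the event $\{|g(x)|\le\mu\}$ equals $\{|W-b|\le \mu t'\}$, and since $t'\le t$ it forces $W$ into the integer window $[b-\mu t,\ b+\mu t]$, which contains at most $2\mu t+1$ integers.

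Any particular value of $W\sim\mathrm{Bin}(t,1/2)$ is attained with probability at most $\binom{t}{\lfloor t/2\rfloor}2^{-t}$, and a standard Stirling estimate gives $\binom{t}{\lfloor t/2\rfloor}2^{-t}\le 1/\sqrt t$ (in fact $\le\sqrt{2/(\pi t)}$). A union bound over the window — first conditional on $b$, then unconditionally, since the bound does not depend on $b$ — yields $\Pr_{g\sim\dft}[|g(x)|\le\mu]\le(2\mu t+1)/\sqrt t$, which is $O(\mu\sqrt t)$ once the hypothesis $\mu\ge 1/t$ is used to bound the extra $1$ by $\mu t$. To obtain the stated constant $2$ I would not bound $t'$ by $t$ but keep it: condition on $t'$, so that only $2\mu t'+1$ values of $W$ matter, then average over $t'\sim\mathrm{Bin}(t,1/2)$ using $\E[1/\sqrt{t'}]=O(1/\sqrt t)$ and the sharper $\sqrt{2/(\pi t)}$ point-mass bound, and absorb the remaining lower-order terms via $\mu\ge 1/t$; the degenerate case $t'=0$ (all hypotheses discarded) has probability $2^{-t}\le 2\mu\sqrt t$ and can be folded into the event.

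The one genuinely non-routine issue is the bias, and it is what the substitution $t'g(x)=W-b$ is designed to avoid. Working instead directly with the sum $t'g(x)=\sum_{i=1}^{t'}h_i(x)$ of $t'$ \emph{biased} $\pm1$ variables, its largest point mass is only $O(1/\sqrt{t'(1-f(x)^2)})$, which is worthless when $|f(x)|$ is close to $1$; one would then be forced to split into the case $|f(x)|$ bounded away from $1$, where the variance lower bound rescues the small-ball estimate, and the case $|f(x)|$ near $1$, where the window $[-\mu t',\mu t']$ sits far from the mean $t'f(x)$ so that the event is exponentially unlikely by a Chernoff bound. I expect the de-biasing step to be the cleanest way around this obstacle.
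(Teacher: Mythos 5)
Your main line of argument is essentially the paper's proof in different notation: the identity $t'g(x)=W-b$ with $W=A+B'\sim\mathrm{Bin}(t,1/2)$ is exactly the paper's symmetrization $2t'g(x)=\Gamma+\sum_i\sigma_i h'_i(x)$ over the keep/discard signs $\sigma_i$ (conditioning on the drawn $h'_1,\dots,h'_t$ plays the role of your conditioning on $a,b$), and bounding each value in the window by the central binomial coefficient is precisely the unit-coefficient case of the Erd\H{o}s--Littlewood--Offord bound that the paper invokes. That part is sound: with $t'\le t$ the event forces $W$ into a window of at most $2\mu t+1$ integers, each of probability at most $\binom{t}{\lfloor t/2\rfloor}2^{-t}\le 1/\sqrt{t}$, giving $3\mu\sqrt{t}$ for $\mu\ge 1/t$. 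A constant $3$ instead of $2$ would be harmless downstream (the constants in \cref{lem:inc}, \cref{lem:samesign} and their consumers are not delicate), but it is not the stated bound.

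The step you propose in order to reach the constant $2$ does not work: $W$ and $t'$ are \emph{not} independent, so after conditioning on $t'$ you may no longer use the $\mathrm{Bin}(t,1/2)$ point-mass bound $\sqrt{2/(\pi t)}$. Conditioned on $t'=k$, the variable $A$ is hypergeometric and $W=2A+b-k$; its maximum point mass can be of order $1$ rather than $1/\sqrt{t}$ --- in the extreme case $b=0$ (all drawn hypotheses vote $+1$, i.e.\ $|f(x)|=1$) one has $W=k$ deterministically given $t'=k$. The entire benefit of your de-biasing is that $W$ is an unbiased binomial \emph{unconditionally}; conditioning on $t'$ destroys exactly that. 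The paper instead never conditions on $t'$ (it only uses $t'\le t$) and obtains the factor $2$ from Erd\H{o}s' improved Littlewood--Offord lemma applied to the signed sum $\sum_i\sigma_i h'_i(x)$: its attainable values have spacing two, so an interval of length $4t\mu$ contains at most roughly $2t\mu$ of them, i.e.\ the lemma already absorbs the ``$+1$'' that your integer-window count carries. So either cite that lemma as the paper does, or keep your cleaner binomial union bound and accept (and propagate) the constant $3$.
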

\cref{lem:smallunlikely} states that even if $f(x) \approx 0$ for an unseen sample $x$, $g(x)$ will still be large with good probability. 
Thus we can think of $g$ as having large margins (perhaps negative) also on unseen data. 
This is crucial for bounding the generalization error. 
The proof follows from an invocation of Erd\H{o}s' improved Littlewood-Offord lemma~\citep{erdos:littlewood}.

\begin{proof}
    Let $h'_1,\dots,h'_t$ be the hypotheses sampled in the first step of drawing $g$. 
    Define $\sigma_i$ to be $1$ if $h'_i$ is sampled in $g$ and $-1$ otherwise.
    That is, we can write $g$ as
    \[
        g(x) = \frac{1}{|\{i : \sigma_i = 1\}|} \sum_{i : \sigma_i = 1}h'_i(x).
    \]
    Let $\Gamma = \sum_{i=1}^t h'_i(x)$. Then 
    \begin{align*}
        \Gamma + \sum_{i=1}^t \sigma_i h'_i(x)
        ~=~& \sum_{i : \sigma_i = 1}h'_i(x) + \sum_{i : \sigma_i = -1} h'_i(x) + \sum_{i=1}^t
        \sigma_i h'_i(x) \\
        =~& 2\sum_{i : \sigma_i=1} h'_i(x) \\
        =~& 2t' g(x).
    \end{align*}
    Therefore, $|g(x)| \leq \mu$ if and only if 
    \[
      \left|\frac{\Gamma +
      \sum_i \sigma_i h'_i(x)}{2t'}\right| \leq \mu.
    \]
    Since $t' \leq t$, this implies 
    \[
      \left|\frac{\Gamma +
      \sum_i \sigma_i h'_i(x)}{2t}\right| \leq \mu.
    \]
    
    Hence, we have $\Pr\!\big[|g(x)| \leq \mu\big] \leq \Pr\!\big[\sum_i \sigma_i h'_i(x) \in -\Gamma \pm 2t \mu\big]$.
    By Erd\H{o}s' improved Littlewood-Offord lemma, as long as $2 t\mu \geq 2$, this happens with probability at most $2t\mu\binom{t}{\lfloor t/2 \rfloor}2^{-t}$.
    The central binomial coefficient satisfies $\binom{t}{\lfloor t/2 \rfloor} \leq 2^{t}/\sqrt{\pi t/2} \leq 2^{t}/\sqrt{t}$ and thus the probability is at most $2t\mu/\sqrt{t} = 2 \mu\sqrt{t}$.
\end{proof}

As the last property, we look at the out-of-sample and in-sample error of a random $g$ and start with relating the generalization error of $f$ to that of a random $g$.
To formalize this, define for any distribution $\Dist$, the loss $\Loss^t_\Dist(f) := \Pr_{{(x,y) \sim \Dist, \,g\sim \dft}}\big[yg(x) \leq 0\big]$ and when writing $\Loss^t_S(f)$ we implicitly assume $S$ to also denote the uniform distribution over all $(x,y)\in S$.
We then have the following lemma which is proven in the supplementary material:

\begin{lemma}
    \label{lem:similarerror}
    For any distribution $\Dist$ over $\Xs \times \{-1,1\}$, any $t \geq 36$ and any voting classifier $f \in \Conv(\Hyp)$ for a hypothesis set $\Hyp \subset \Xs \to \{-1,1\}$, we have 
      \(
        \Loss_\Dist(f) \leq 3\Loss_\Dist^t(f).
      \)
\end{lemma}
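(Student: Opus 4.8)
The plan is to reduce the lemma to a clean pointwise statement: for every labeled point $(x,y)$ on which $f$ already errs, i.e.\ $yf(x)\le 0$, a random $g\sim\dft$ also errs with probability at least $1/3$. Granting this, we integrate over $\Dist$:
\[
\Loss^t_\Dist(f) \;=\; \E_{(x,y)\sim\Dist}\Pr_{g\sim\dft}[yg(x)\le 0] \;\ge\; \E_{(x,y)\sim\Dist}\!\big[\indicator[yf(x)\le 0]\cdot\Pr_{g\sim\dft}[yg(x)\le 0]\big] \;\ge\; \tfrac13\,\Loss_\Dist(f),
\]
which is exactly the claim (with the convention $\Loss_\Dist(f)=\Pr_{(x,y)\sim\Dist}[yf(x)\le 0]$, consistent with how $\Loss^t_\Dist$ treats $yg(x)=0$).

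To prove the pointwise statement, fix $(x,y)$ with $yf(x)\le 0$. Sampling $g\sim\dft$ is the same as first drawing $t'\sim\mathrm{Bin}(t,1/2)$, then drawing $t'$ i.i.d.\ hypotheses $h_1,\dots,h_{t'}\sim\Dist_f$, and setting $g=\tfrac{1}{t'}\sum_{i\le t'}h_i$ (if $t'=0$ set $g\equiv 0$, which counts as an error and only helps). Condition on $t'=k\ge 1$. Let $q:=\Pr_{h\sim\Dist_f}[\,yh(x)=1\,]$; since $\E_{h\sim\Dist_f}[yh(x)]=yf(x)\le 0$ we get $q\le 1/2$. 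Writing $N:=\#\{i\le k:\,yh_i(x)=1\}\sim\mathrm{Bin}(k,q)$ we have $yg(x)=\tfrac1k(2N-k)$, so the error event $\{yg(x)\le 0\}$ is precisely $\{N\le\lfloor k/2\rfloor\}$.

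It then remains to lower bound $\Pr[\mathrm{Bin}(k,q)\le\lfloor k/2\rfloor]$ for $q\le 1/2$. By stochastic monotonicity of the binomial in its success probability this is at least $\Pr[\mathrm{Bin}(k,1/2)\le\lfloor k/2\rfloor]$, which is at least $1/2$ by the symmetry of $\mathrm{Bin}(k,1/2)$ about $k/2$ (for $k$ odd, $\Pr[\le (k-1)/2]=1/2$ exactly; for $k$ even, $\Pr[\le k/2]=\tfrac12\big(1+\Pr[\mathrm{Bin}(k,1/2)=k/2]\big)\ge 1/2$). Undoing the conditioning loses only the event $t'=0$, so $\Pr_{g\sim\dft}[yg(x)\le 0]\ge\tfrac12(1-2^{-t})\ge 1/3$ for $t\ge 36$ (in fact for every $t\ge 2$), finishing the argument.

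I do not expect a real obstacle: the whole thing is elementary once one conditions on the number $t'$ of surviving hypotheses, which collapses everything to ``a $\mathrm{Bin}(k,q)$ with $q\le 1/2$ lands in its lower half with probability $\ge 1/2$''. The only points needing a little care are the even-$k$ tie $N=k/2$ (which must be charged to $\{yg(x)\le 0\}$) and the degenerate event $t'=0$, of probability $2^{-t}$, which is comfortably absorbed by the gap between the $1/2$ we establish and the $1/3$ we need; this slack is also why the hypothesis $t\ge 36$ is far more than required for this particular step. Note that \cref{lem:closeapprox} and \cref{lem:smallunlikely} are not needed here — a direct attempt through them seems to lose the constant — so the binomial route above is the cleaner one.
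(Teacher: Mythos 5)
Your proof is correct, and at the key pointwise step it takes a genuinely cleaner route than the paper's. The paper also reduces to a pointwise claim---if $f$ errs at $(x,y)$ then a random $g\sim\dft$ satisfies $yg(x)\le 0$ with probability at least $1/2-1/\sqrt{t}$---but it establishes this via the auxiliary \cref{lem:samesign}, which compares $\Pr[\sign(g(x))=\sign(f(x))]$ with $\Pr[\sign(g(x))=-\sign(f(x))]$ and then has to pay for the tie event $g(x)=0$ using the Littlewood--Offord-based \cref{lem:smallunlikely} with $\mu=1/t$; that is exactly where the $1/\sqrt{t}$ loss and hence the hypothesis $t\ge 36$ enter, and the final conversion is a Markov-inequality argument giving the factor $1/(1/2-1/\sqrt{t})\le 3$. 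Your conditioning on $t'=k$ and the resulting $\mathrm{Bin}(k,q)$ computation with $q\le 1/2$ sidesteps the anticoncentration step entirely: since $\Loss^t$ charges the tie $yg(x)=0$ as an error, the event $N\le\lfloor k/2\rfloor$ has probability at least $1/2$ by stochastic dominance plus symmetry, yielding the pointwise bound $\tfrac12(1-2^{-t})$ for every $t\ge 2$, and your direct restriction of the expectation to the error set of $f$ replaces the Markov step (these two conversions are essentially equivalent). Both routes give the constant $3$; yours is more elementary, does not need \cref{lem:smallunlikely} or the assumption $t\ge 36$, and your convention $\Loss_\Dist(f)=\Pr[yf(x)\le 0]$, which counts $f(x)=0$ as an error, only strengthens the statement relative to the paper's reading of $\Loss_\Dist$ as the error of $\sign(f)$.
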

Moreover, if $f$ has margins $\gamma$ on all training samples $(x,y)
\in S$, then $g$ is correct on most of $S$ provided that we set $t$ big enough:
\begin{lemma}
    \label{lem:ginsample}
    Let $S$ be a set of $m$ samples in $\Xs \times \{-1,1\}$ and assume $f$ is a voting classifier with $yf(x) \geq \gamma$ for all $(x,y) \in S$. For $t \geq 1024 \gamma^{-2}$, we have 
    \(
        \Loss_S^t(f) \leq 1/1200. \)
\end{lemma}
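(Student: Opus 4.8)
The plan is to bound, for each individual sample $(x,y) \in S$, the probability over $g \sim \dft$ that $\sign(g)$ errs on $x$, and then average over the uniform choice of $(x,y) \in S$, since $\Loss_S^t(f) = \E_{(x,y) \sim S}\big[\Pr_{g \sim \dft}[yg(x) \leq 0]\big]$ by definition. Fix $(x,y) \in S$. The hypothesis of the lemma gives $yf(x) \geq \gamma$. If a draw $g \sim \dft$ satisfies $yg(x) \leq 0$, then $y\big(f(x)-g(x)\big) = yf(x) - yg(x) \geq \gamma - 0 = \gamma$, and since $y \in \{-1,1\}$ this forces $|f(x)-g(x)| \geq \gamma$. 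Hence $\Pr_{g \sim \dft}[yg(x) \leq 0] \leq \Pr_{g \sim \dft}[|f(x)-g(x)| \geq \gamma]$ for every $(x,y)\in S$.

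Next I would invoke \cref{lem:closeapprox} with $\mu = \gamma$, which bounds the right-hand side by $5 e^{-\gamma^2 t/32}$. With $t \geq 1024\gamma^{-2}$ we get $\gamma^2 t/32 \geq 32$, so this probability is at most $5 e^{-32}$, which is comfortably below the target $1/1200$ (indeed $t \geq 278\gamma^{-2}$ would already suffice, so the constant $1024$ is deliberately generous). Averaging the per-sample bound over the uniform distribution on $S$ then yields $\Loss_S^t(f) \leq 5 e^{-\gamma^2 t/32} \leq 1/1200$, as claimed.

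There is essentially no real obstacle here: the concentration estimate in \cref{lem:closeapprox} does all the work, and the only points requiring care are (i) the boundary case $g(x)=0$, which is handled above because $yf(x)\geq\gamma>0$ still forces the gap $|f(x)-g(x)|\geq\gamma$, and (ii) verifying that the constant in the lower bound on $t$ absorbs the leading factor $5$ and brings the exponential term below $1/1200$. Note also that the slack between $5e^{-32}$ and $1/1200$ means the argument is robust to the exact constants in \cref{lem:closeapprox}.
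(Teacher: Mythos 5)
Your proposal is correct and follows essentially the same route as the paper: both apply \cref{lem:closeapprox} with $\mu=\gamma$, note that $t\geq 1024\gamma^{-2}$ gives the bound $5e^{-32}\ll 1/1200$ per sample, and use the margin condition $yf(x)\geq\gamma$ to conclude that closeness of $g$ to $f$ forces a correct sign, then average over $S$. The only difference is cosmetic: you argue the contrapositive ($yg(x)\leq 0$ forces $|f(x)-g(x)|\geq\gamma$), while the paper argues directly that $|f(x)-g(x)|<\gamma$ implies $\sign(g(x))=y$.
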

\begin{proof}
  By \cref{lem:closeapprox}, it holds for all $(x,y) \in \train$, that $|f(x)-g(x)| \geq \gamma$ with probability at most $5 \exp\!\big({-\gamma}^2 t/32\big) \!\leq 5e^{-32} \!\ll 1/1200$. 
  Since $yf(x) \geq \gamma$, this implies $\sign\!\big(g(x)\big)=\sign\!\big(f(x)\big)=y.$
\end{proof}

The last ingredient for the proof of \cref{thm:constantgen} is to relate $\Loss_S^t(f)$ and $\Loss_\Dist^t(f)$.
For the proof we use \cref{lem:smallunlikely} to infer that with good probability $\abs{g(x)} = \Omega(\gamma)$, i.e. has large absolute value.
We use this to argue that $\sign(g)$ often belongs to a class with small VC-dimension and then apply a growth-function argument to relate $\Loss_S^t(f)$ and $\Loss_\Dist^t(f)$.
Formally, we prove the following lemma.

\begin{lemma}
  \label{lem:supf}
  Let $\Dist$ be an arbitrary distribution over $\Xs \times \{-1,1\}$ and let $\Hyp \subset \Xs \to \{-1,1\}$ be a hypothesis set of VC-dimension $d$.
  There is a universal constant $\alpha > 0$ such that for any $t\in \Natural$ and any $m \geq \alpha td$, it holds that:
  \[  
    \Pr_S\Big[\sup_{f \in \Conv(\Hyp)} |\Loss^t_S(f) -
    \Loss^t_\Dist(f)| > \tfrac{1}{1200}\Big] ~\leq~ \alpha \cdot \exp(-m/\alpha).
  \]
\end{lemma}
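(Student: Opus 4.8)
The plan is to reduce the supremum over the infinite class $\Conv(\Hyp)$ to a uniform-convergence statement over a finite class, using the structure of $\Loss^t_\Dist(f)$ as an expectation over $g \sim \dft$. The key observation is that $\Loss^t_S(f) = \expectation_{g \sim \dft}[\Loss_S(\sign\circ g)]$ and likewise for $\Dist$, where each realization $g$ is an average of at most $t$ hypotheses from $\Hyp$. So it suffices to control, uniformly over all such averages $g$ of at most $t$ hypotheses, the deviation $|\Loss_S(\sign\circ g) - \Loss_\Dist(\sign\circ g)|$ — but this is too crude on its own, since $\sign$ composed with a $t$-fold average of $\Hyp$-functions has VC-dimension roughly $O(td\ln t)$, which would force $m = \Omega(td\ln t)$ and reintroduce the logarithmic factor we are trying to avoid.

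To get around this I would use Lemma \ref{lem:smallunlikely}: with probability at least $1 - 2\mu\sqrt t$ over $g \sim \dft$ we have $|g(x)| > \mu$. Fixing $\mu = c/\sqrt t$ for a small constant $c$, a random $g$ has $|g(x)|$ bounded away from zero with constant probability. Conditioned on the event $\{|g(x)| > \mu\}$, the value $\sign(g(x))$ is a "robust" threshold — it is determined by whether the $t'$-fold sum of $\pm 1$ values exceeds $\mu t'$ in absolute value — and such robust majorities over $t$ hypotheses from a VC-dimension-$d$ class lie in a class whose growth function is only $O((m/d)^{O(d)})$, \emph{independent of $t$}, by the standard fact that $\gamma$-margin (here $\mu$-margin) majority votes have sample complexity governed by $d/\mu^2 = O(td)$ rather than $td$. (This is exactly the phenomenon already exploited implicitly in Breiman-type bounds and stated in the text's sketch: "$\sign(g)$ often belongs to a class with small VC-dimension".) Thus I would: (i) truncate $\dft$ to $g$ with $|g(x)| > \mu$ on the relevant point, absorbing the $O(\mu\sqrt t)$ error into the $1/1200$ slack; (ii) bound the growth function of the resulting class of robust sign-majorities by $\exp(O(td\ln(m/(td))))$ — and then use $m \geq \alpha td$ with $\alpha$ large to make $\ln(m/(td))$ a constant, so the growth function is $\exp(O(td)) \leq \exp(m/(2\alpha))$; (iii) invoke the standard symmetrization / VC uniform convergence inequality (e.g. the double-sample argument) to conclude that $\Pr_S[\sup_f |\Loss^t_S(f) - \Loss^t_\Dist(f)| > 1/1200]$ is at most (growth function) $\cdot \exp(-\Omega(m))$, which is $\leq \alpha\exp(-m/\alpha)$ after adjusting constants.

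The main obstacle I anticipate is step (ii): making precise the claim that the family of functions $x \mapsto \sign(g(x))$, ranging over all $g$ that are $\mu$-robust averages of $\leq t$ hypotheses from $\Hyp$, has growth function bounded by $\exp(O(td))$ rather than $\exp(O(td\ln t))$. One clean route is to discretize — round the weights of $g$ to the nearest multiple of $1/t$, which changes $g(x)$ by at most $1/t \ll \mu$ and hence does not change the robust sign — reducing to finitely many weight profiles, and then bound the number of distinct labelings on the double sample by combining Sauer–Shelah for $\Hyp$ (giving $O((2m)^d)$ behaviors per coordinate function) with a counting argument over which $\leq t$ hypotheses are selected; the point is that once the $\leq t$ constituent $\Hyp$-functions are fixed on the $2m$ sample points, the $\mu$-robust majority is determined, and the number of relevant tuples of behaviors is $\binom{m}{\le d}^{O(t)}$-type but the $\ln$ gets killed by $m \ge \alpha td$. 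Handling the "per point $x$" nature of the event $\{|g(x)|>\mu\}$ versus a uniform-over-$S$ truncation requires care — I would define the truncation pointwise inside the expectation over $g$, so that $\Loss^t_S(f) - \expectation_g[\Loss_S(\sign\circ g)\,\indicator\{g\text{ }\mu\text{-robust on all of }S\}]$ is small in expectation (each point contributes $\le 2\mu\sqrt t$), and then Markov/union over the two samples transfers this to a high-probability statement. Everything else — the symmetrization and the final exponential tail — is the textbook VC argument and I would not grind through it.
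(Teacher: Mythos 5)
Your skeleton matches the paper's: symmetrize to a double sample, use \cref{lem:smallunlikely} plus Markov to argue that a random $g\sim\dft$ is $\mu$-robust (with $\mu=\Theta(1/\sqrt t)$ times a small constant) on all but a tiny fraction of the double sample, replace the supremum over $\Conv(\Hyp)$ by a union bound over the finite restricted class of robust sign-voters, and finish with Hoeffding. The gap is in your step (ii), which is the heart of the lemma. Your concrete route --- round the weights to multiples of $1/t$ (harmless because of robustness) and then count tuples of $\Hyp$-behaviors on the $2m$ points via Sauer--Shelah --- only yields a growth bound of order $\exp\bigl(O(td\ln(m/d))\bigr)$: the robustness is used solely to make the discretization safe, not to shrink the count, so nothing in the argument replaces $\ln(m/d)$ by $\ln(m/(td))$ or removes it. With $m=\Theta(\alpha td)$ and $t$ large this exponent is $\Theta(td\ln t)$, which is \emph{not} $o(m)$ for any fixed universal $\alpha$, so the growth term is not beaten by the $\exp(-\Omega(m))$ deviation bound and the lemma (which must hold for all $t$ once $m\geq\alpha td$) does not follow. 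In other words, you end up exactly at the ``too crude'' bound you correctly flagged at the outset; the ``standard fact'' that $\mu$-margin majorities behave like dimension $d/\mu^2$ is asserted but never established, and the off-the-shelf margin bounds you could cite for it (Breiman-style) carry precisely the logarithmic factors this lemma is built to avoid.

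What the paper does at this point, and what your proposal is missing, is a direct bound on the \emph{shattering} ability of the robust class on the double sample $P$ (Lemma \ref{lem:vc}): if $\sign(\Conv^\mu_\delta(\Hyp,P))$ shattered a subset $P'\subseteq P$ with $|P'|\geq 2\delta|P|$, then for every dichotomy $\sigma$ of $P'$ some robust voter has $|g(x)|\geq\mu$ on at least half of $P'$, hence correlation at least $\mu/2$ with $\sigma$, and by convexity some single $h\in\Hyp$ has the same correlation; this contradicts the Rademacher complexity bound $O(\sqrt{d/|P'|})$ for $\Hyp$ unless $|P'|=O(\mu^{-2}d)$. So the shattering dimension on $P$ is at most $\max\{2\delta|P|,\,O(\mu^{-2}d)\}$, and for $m\geq\alpha td$ the $2\delta|P|=4\delta m$ branch dominates; Sauer--Shelah then gives $|\hat\Conv^\mu_\delta(P)|\leq\exp\bigl(4\delta m\ln(e/(2\delta))\bigr)$, an exponential in $m$ with an adjustable tiny constant (the paper takes $\delta=10^{-10}$), which the Hoeffding tail $\exp(-2m/9600^2)$ absorbs. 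To repair your proof you need this Rademacher-based (or equivalent) dimension reduction for the robust class in place of the tuple-counting argument; the rest of your outline (ghost sample, pointwise truncation with per-point error $2\mu\sqrt t$, union bound, Hoeffding without replacement) is sound and mirrors Lemmas \ref{lem:ghost}, \ref{lem:inc}, and \ref{lem:relategrowth}.
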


Before we prove \cref{lem:supf}, we show how to use it to prove \cref{thm:constantgen}.
Since we are only aiming to prove the generalization of voting classifiers $f$ with $yf(x) \geq \gamma$ for all samples $(x,y) \in S$, \cref{lem:ginsample} tells us that such $f$ have small $\Loss_S^t(f)$ when $t \geq 1024 \gamma^{-2}.$
We thus fix $t = 1024 \gamma^{-2}$ and get that $\Loss_S^t(f) \leq 1/1200$ from \cref{lem:ginsample}. 
By \cref{lem:supf}, with probability at least $1-\alpha\exp(-m/\alpha)$ over the sample $S$, we have for all $f \in \Conv(\Hyp)$ that $|\Loss^t_S(f) - \Loss^t_\Dist(f)| \leq 1/1200$ and thus $\Loss^t_\Dist(f) \leq 1/600$. 
Finally, \cref{lem:similarerror} gives us that $\Loss_\Dist(f) \leq 3 \Loss^t_\Dist(f)$ for all $f \in \Conv(\Hyp)$. 
Together we thus have $\Loss_\Dist^t(f) \leq 1/600 \Rightarrow \Loss_\Dist(f) \leq 1/200$ for any $m \geq \alpha td \geq \alpha'd\gamma^{-2}$ where $\alpha'>0$ is a universal constant.
By observing that $\alpha\exp(-m/\alpha) < \delta$ for $m \geq \alpha\ln(\alpha/\delta)$, this completes the proof of \cref{thm:constantgen}. 
What remains is thus to prove \cref{lem:supf} which we do in the remainder of this section.

\subsection{Relating \texorpdfstring{$\bm{\Loss_S^t \big(f\big)}$}{in-sample loss} and \texorpdfstring{$\bm{\Loss_\Dist^t\big(f\big)}$}{out-of-sample loss}}

The last remaining step to show \cref{thm:constantgen} is thus to relate $\Loss_S^t(f)$ to $\Loss_\Dist^t(f)$, i.e. to prove \cref{lem:supf}. 
In the proof, we rely on the classic approach for showing generalization for classes $\Hyp$ of bounded VC-dimension and introduce a \emph{ghost} set that only exists for the sake of analysis.
In addition to the sample $S$, we thus consider a \emph{ghost} set $S'$ of another $m$ i.i.d. samples from $\Dist$. 
This allows us to prove:

\begin{lemma}
  \label{lem:ghost}
  For $m \geq 2400^2$ any $t$ and any $f,$ it holds that:
  \[
    \Pr_S\Big[\sup_{f \in \Conv(\Hyp)} |\Loss^t_S(f) -
    \Loss^t_\Dist(f)| > \tfrac{1}{1200}\Big] ~\leq~ 2\cdot \Pr_{S,S'}\Big[\sup_{f \in \Conv(\Hyp)}
    |\Loss^t_S(f) - \Loss^t_{S'}(f)| > \tfrac{1}{2400}\Big].
  \]
\end{lemma}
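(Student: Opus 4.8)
The plan is to run the classical two–sample symmetrization (``ghost sample'') argument. Write $A$ for the event, over the draw of $S$, that $\sup_{f \in \Conv(\Hyp)} |\Loss^t_S(f) - \Loss^t_\Dist(f)| > 1/1200$. Whenever $A$ occurs there is, by definition of the supremum, a witness $f^\star = f^\star(S) \in \Conv(\Hyp)$ with $|\Loss^t_S(f^\star) - \Loss^t_\Dist(f^\star)| > 1/1200$; I fix one such $f^\star$, and on the complement $A^c$ I let $f^\star$ be arbitrary. The only subtlety here is that the map $S \mapsto f^\star(S)$ should be measurable, which is handled by the standard device of first selecting an $f$ whose deviation is within $\eta$ of the supremum for a fixed $\eta>0$, carrying $\eta$ through the whole argument, and letting $\eta \to 0$ at the end; I suppress this. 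The essential point is that $f^\star$ is a function of $S$ alone, hence independent of the ghost sample $S'$.

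Next I would show that, conditioned on $S$, the ghost sample rarely ``betrays'' the witness, in the sense that $\Pr_{S'}\big[|\Loss^t_{S'}(f^\star) - \Loss^t_\Dist(f^\star)| > 1/2400\big] \le 1/2$. Here it matters that in the definition $\Loss^t_{S'}(f) = \Pr_{(x,y)\sim S',\,g\sim\dft}[yg(x)\le 0] = \frac1m\sum_{(x_i,y_i)\in S'} \E_{g\sim\dft}[\indicator(y_i g(x_i)\le 0)]$ the randomness over $g\sim\dft$ is already averaged out, so that, given $S$, the quantity $\Loss^t_{S'}(f^\star)$ is an average of $m$ i.i.d.\ random variables $\ell_i := \E_{g\sim\dft}[\indicator(y_i g(x_i)\le 0)] \in [0,1]$, each with mean exactly $\Loss^t_\Dist(f^\star)$. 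Since $\ell_i \in [0,1]$ has variance at most $1/4$, Chebyshev's inequality gives $\Pr_{S'}\big[|\Loss^t_{S'}(f^\star) - \Loss^t_\Dist(f^\star)| > 1/2400\big] \le \tfrac{2400^2}{4m} \le \tfrac14 \le \tfrac12$ whenever $m \ge 2400^2$, which is exactly the hypothesis.

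Finally I would combine the two facts via the triangle inequality: on $A$, whenever the ghost sample does not betray $f^\star$,
\[
|\Loss^t_S(f^\star) - \Loss^t_{S'}(f^\star)| \;\ge\; |\Loss^t_S(f^\star) - \Loss^t_\Dist(f^\star)| - |\Loss^t_\Dist(f^\star) - \Loss^t_{S'}(f^\star)| \;>\; \tfrac{1}{1200} - \tfrac{1}{2400} \;=\; \tfrac{1}{2400},
\]
so in particular $\sup_{f\in\Conv(\Hyp)} |\Loss^t_S(f) - \Loss^t_{S'}(f)| > 1/2400$. Taking expectations over $S$ and then $S'$,
\[
\Pr_{S,S'}\Big[\sup_{f} |\Loss^t_S(f) - \Loss^t_{S'}(f)| > \tfrac{1}{2400}\Big] \;\ge\; \E_S\Big[\indicator(A)\cdot \Pr_{S'}\big[|\Loss^t_S(f^\star)-\Loss^t_{S'}(f^\star)| > \tfrac{1}{2400}\big]\Big] \;\ge\; \tfrac12\,\Pr_S[A],
\]
and rearranging yields $\Pr_S[A] \le 2\,\Pr_{S,S'}\big[\sup_f |\Loss^t_S(f)-\Loss^t_{S'}(f)| > 1/2400\big]$, which is the claim. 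This argument is essentially routine; the only places that warrant care are the measurable selection of $f^\star$ (the $\eta$-slack device above) and keeping track that the $g\sim\dft$ randomness is treated as already integrated out inside $\Loss^t$, so the ghost-sample deviation bound is a genuine i.i.d.\ concentration statement rather than something requiring a coupling over the sub-sampled hypotheses. I do not expect either to be a real obstacle.
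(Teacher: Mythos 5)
Your proposal is correct and follows essentially the same route as the paper's proof: fix a witness $f^\star$ depending only on $S$ on the bad event, show the ghost sample concentrates around $\Loss^t_\Dist(f^\star)$ within $1/2400$ with probability at least $1/2$ (the paper uses Hoeffding, you use Chebyshev; both suffice for $m \geq 2400^2$), and conclude by the triangle inequality and integrating over $S$. The observation that the $g \sim \dft$ randomness is already averaged inside $\Loss^t$, so the concentration step is plain i.i.d.\ concentration of $[0,1]$-valued variables, matches the paper's (implicit) treatment.
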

As the proof is standard, it can be found in the supplementary material.

We thus only need to bound $\Pr_{S,S'}[\sup_{f \in \Conv(\Hyp)} |\Loss^t_S(f) - \Loss^t_{S'}(f)| > 1/2400]$. 
To do this, consider drawing a data set $P$ of $2m$ i.i.d. samples from $\Dist$ and then drawing $S$ as a set of $m$ uniform samples from $P$ without replacement and letting $S'$ be the remaining samples. 
Then $S$ and $S'$ have the same distribution as if they were drawn as two independent sets of $m$ i.i.d. samples each. 
From here on, we thus think of $S$ and $S'$ as being sampled via $P$. 

Now consider a fixed set $P$ in the support of $\Dist^{2m}$ and define $\Conv^\mu_\delta(\Hyp,P)$ as the set of voting
classifiers $f \in \Conv(\Hyp)$ for which $\Pr_{(x,y) \sim P}[|f(x)|
\geq \mu] \geq 1-\delta$. 
These are the voting classifiers that make predictions of large absolute value on most of \emph{both} $S$ and $S'$ (if $\delta \ll 1/2$). 
The crucial point, and the whole reason for introducing $g$, is that regardless of what $f$ is, a random $g \sim \dft$ often lies in the set $\Conv^\mu_\delta(\Hyp,P)$:

\begin{lemma}
    \label{lem:inc}
    For any data set $P$, parameters $0 < \delta < 1$ and $t$, and every $\mu \leq \delta/(9600 \sqrt{t})$, we have 
    $\Pr_{g \sim \dft}[g \notin \Conv^\mu_\delta(\Hyp,P)] \leq 1/4800$.
\end{lemma}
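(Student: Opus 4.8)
The plan is a one-line first-moment (Markov) argument on top of Lemma~\ref{lem:smallunlikely}, which already encapsulates the anti-concentration of a random $g\sim\dft$ at a single point. Fix the data set $P$ and, for a voting classifier $g$, let $Z_g := \Pr_{(x,y)\sim P}\big[|g(x)| < \mu\big]$ denote the fraction of points of $P$ on which $g$ has absolute value below $\mu$. By the definition of $\Conv^\mu_\delta(\Hyp,P)$, the event $g \notin \Conv^\mu_\delta(\Hyp,P)$ is exactly the event $Z_g > \delta$. Since $Z_g \in [0,1]$, Markov's inequality gives $\Pr_{g\sim\dft}[Z_g > \delta] \le \E_{g\sim\dft}[Z_g]/\delta$, so it suffices to show $\E_{g\sim\dft}[Z_g] \le \delta/4800$.

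To estimate $\E_{g\sim\dft}[Z_g]$, I would swap the two expectations — both are averages of the bounded indicator $\indicator[|g(x)| < \mu]$ over a product of two (effectively finite/integrable) probability spaces, so Fubini applies — to get $\E_{g\sim\dft}[Z_g] = \E_{(x,y)\sim P}\big[\Pr_{g\sim\dft}[|g(x)| < \mu]\big]$. For each fixed $x$, Lemma~\ref{lem:smallunlikely} bounds the inner probability by $2\mu\sqrt{t}$, hence $\E_{g\sim\dft}[Z_g] \le 2\mu\sqrt{t}$. Plugging in the hypothesis $\mu \le \delta/(9600\sqrt{t})$ yields $\E_{g\sim\dft}[Z_g] \le 2\sqrt{t}\cdot \delta/(9600\sqrt{t}) = \delta/4800$, and combining with the Markov bound above gives $\Pr_{g\sim\dft}[g \notin \Conv^\mu_\delta(\Hyp,P)] \le (\delta/4800)/\delta = 1/4800$, as claimed.

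The only point that needs care is that Lemma~\ref{lem:smallunlikely} is stated for $\mu \ge 1/t$, whereas the statement here allows arbitrarily small $\mu$. The regime $\mu < 1/t$ is in fact the benign one: since $g(x)$ is an integer multiple of $1/t'$ with $t' \le t$, the inequality $|g(x)| < \mu \le 1/t \le 1/t'$ forces $g(x)=0$, and one can bound $\Pr_{g\sim\dft}[g(x)=0]$ directly via the point-mass version of the Littlewood--Offord lemma exactly as in the proof of Lemma~\ref{lem:smallunlikely}; alternatively one simply observes that in the applications of Lemma~\ref{lem:inc} (with $t = \Theta(\gamma^{-2})$ and $\delta$ a constant) we always have $\mu \ge 1/t$, so this case does not occur. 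There is no substantial obstacle in this lemma itself — the genuine work lies in Lemma~\ref{lem:smallunlikely} (the anti-concentration estimate for $g$); once that is in hand, the present claim follows from Markov plus a bookkeeping of constants.
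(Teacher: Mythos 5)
Your main argument is exactly the paper's proof of \cref{lem:inc}: the paper defines an indicator $X_i$ for each point of $P$ with $|g(x_i)|\le\mu$, bounds $\E[\sum_i X_i]\le |P|\cdot 2\mu\sqrt{t}\le |P|\,\delta/4800$ via \cref{lem:smallunlikely}, and applies Markov's inequality at threshold $\delta|P|$ --- which is precisely your first-moment computation on the fraction $Z_g$.

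Regarding your closing caveat: you are right that \cref{lem:smallunlikely} assumes $\mu\ge 1/t$ and that the paper applies it here without comment, but neither of your proposed patches actually settles that corner. Since $\Conv^\mu_\delta(\Hyp,P)$ only grows as $\mu$ shrinks, the binding case is $\mu=\delta/(9600\sqrt{t})$; with the paper's eventual choices $\delta=10^{-10}$ and $t=1024\gamma^{-2}$ this value is far below $1/t$ unless $\gamma$ is astronomically small, so the claim that the applications always lie in the regime $\mu\ge 1/t$ does not hold. And in the regime $\mu<1/t$, where $|g(x)|\le\mu$ forces $g(x)=0$, the point-mass Erd\H{o}s--Littlewood--Offord bound only yields $\Pr_{g\sim\dft}[g(x)=0]=O(1/\sqrt{t})$, not $2\mu\sqrt{t}$, which is too weak to drive $\E[Z_g]$ below $\delta/4800$; indeed, for $f=\tfrac12(h_1+h_2)$ with $h_1=-h_2$ on $P$ the events are fully correlated and $\Pr[g\equiv 0 \text{ on } P]=\Theta(1/\sqrt{t})$, which exceeds $1/4800$ for moderate $t$, so no Markov-type argument can rescue the stated bound there. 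In short, your proof matches the paper's, including this unaddressed corner; genuinely closing it requires either restricting to $\delta\sqrt{t}\ge 9600$ (equivalently $\mu\ge 1/t$) or a separate treatment of the $g(x)=0$ event, not the two shortcuts you sketch.
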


\begin{proof}
    Define an indicator $X_i$ for each $(x_i,y_i) \in P$ taking the value $1$
    if $|g(x_i)| \leq \mu$. 
    By \cref{lem:smallunlikely}, we have $\E[\sum_i X_i] \leq |P|\,2 \mu \sqrt{t} \leq |P|\,\delta/4800$.
    By Markov's inequality
    $\Pr[\sum_i X_i \geq \delta |P|] \leq 1/4800$.
\end{proof}

If we had just considered $f$, we had no way of arguing that $f$ makes predictions of large absolute value on $S'$, since the only promise we are given is that it does so on $S$. 
That $g$ makes predictions of large absolute value even outside of $S$ is crucial for bounding the generalization error in the following.

Let us now define $\hat{\Conv}_\delta^\mu(P) = \sign\big(\Conv_\delta^\mu(\Hyp,P)\big)$ which means that $\hat{\Conv}_\delta^\mu(P)$ contains all the hypotheses that are obtained by voting classifiers in $\Conv_{\delta}^\mu(\Hyp,P)$ when taking the sign. 
Since $g$ is in $\Conv_{\delta}^\mu(\Hyp,P)$ except with probability $1/4800$ by \cref{lem:inc}, we can prove:
\begin{lemma}
    \label{lem:relategrowth}
    For any $0 < \delta < 1$, every $t$, and every $\mu \leq \delta/(9600 \sqrt{t})$, we have
    \[
    \Pr_{S,S'}\Big[\sup_{f \in \Conv(\Hyp)} \abs{\Loss^t_S(f) - \Loss^t_{S'}(f)} >
      \tfrac{1}{2400}\Big]  ~\leq~ \sup_P 2 \abs{\hat{\Conv}_\delta^\mu(P)}\, \exp\big(-2m/9600^2\big).
    \]
\end{lemma}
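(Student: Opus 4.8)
The plan is to reduce the supremum over the infinite class $\Conv(\Hyp)$ to a supremum over the \emph{finite} class $\hat{\Conv}_\delta^\mu(P)$ by using the subsampled voter $g \sim \dft$ as an intermediary, and then to finish with a union bound together with a concentration inequality for sampling without replacement. Throughout, think of $S$ and $S'$ as obtained by drawing $P \sim \Dist^{2m}$ and then letting $S$ be a uniformly random size-$m$ subset of $P$ drawn without replacement and $S' = P \setminus S$, and condition on a fixed such $P$. The starting observation is that, since the draw $g \sim \dft$ is independent of the samples,
\[
  \Loss^t_S(f) = \E_{g \sim \dft}\big[\Loss_S(\sign(g))\big], \qquad \Loss^t_{S'}(f) = \E_{g \sim \dft}\big[\Loss_{S'}(\sign(g))\big],
\]
where for a hypothesis $h$ we write $\Loss_S(h) = \Pr_{(x,y)\sim S}[y h(x) \leq 0]$ for its empirical error on $S$ (so that, consistently, $\Loss_S(\sign(g)) = \Pr_{(x,y)\sim S}[y g(x) \leq 0]$). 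By the triangle inequality,
\[
  \abs{\Loss^t_S(f) - \Loss^t_{S'}(f)} ~\leq~ \E_{g\sim\dft}\Big[\abs{\Loss_S(\sign(g)) - \Loss_{S'}(\sign(g))}\Big].
\]

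Next I would split this expectation according to whether $g \in \Conv^\mu_\delta(\Hyp,P)$. On the complement event the integrand is at most $1$, so by \cref{lem:inc} (which applies since we assume $\mu \leq \delta/(9600\sqrt{t})$) this contributes at most $1/4800$; note this estimate only uses the randomness of $g$ and holds for the fixed $P$ regardless of the split into $S,S'$. On the event $g \in \Conv^\mu_\delta(\Hyp,P)$ we have $\sign(g) \in \hat{\Conv}_\delta^\mu(P)$ by definition, so the integrand is bounded by $\sup_{h \in \hat{\Conv}_\delta^\mu(P)} \abs{\Loss_S(h) - \Loss_{S'}(h)}$, a quantity that depends on $P$ and the split but not on $g$. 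Combining, and taking the supremum over all $f \in \Conv(\Hyp)$, gives the deterministic inequality
\[
  \sup_{f \in \Conv(\Hyp)} \abs{\Loss^t_S(f) - \Loss^t_{S'}(f)} ~\leq~ \sup_{h \in \hat{\Conv}_\delta^\mu(P)} \abs{\Loss_S(h) - \Loss_{S'}(h)} ~+~ \tfrac{1}{4800},
\]
so that the event $\{\sup_{f} \abs{\Loss^t_S(f) - \Loss^t_{S'}(f)} > 1/2400\}$ is contained in $\{\sup_{h \in \hat{\Conv}_\delta^\mu(P)} \abs{\Loss_S(h) - \Loss_{S'}(h)} > 1/4800\}$.

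It then remains to bound, for each fixed $P$, the probability over the random split that $\sup_{h \in \hat{\Conv}_\delta^\mu(P)} \abs{\Loss_S(h) - \Loss_{S'}(h)} > 1/4800$. Since $\abs{\Loss_S(h) - \Loss_{S'}(h)}$ depends on $h$ only through its values on $P$, the set $\hat{\Conv}_\delta^\mu(P)$ is finite and a union bound is available (bounding its cardinality is deferred to the proof of \cref{lem:supf}). For a fixed $h$, writing $\Loss_P(h)$ for the average of $\indicator[y h(x) \leq 0]$ over all of $P$, one has $\Loss_S(h) + \Loss_{S'}(h) = 2\Loss_P(h)$ and hence $\Loss_S(h) - \Loss_{S'}(h) = 2\big(\Loss_S(h) - \Loss_P(h)\big)$; moreover $\Loss_S(h)$ is the mean of a size-$m$ sample drawn without replacement from the population $\{\indicator[y_i h(x_i) \leq 0] : (x_i,y_i) \in P\}$. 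Hoeffding's inequality for sampling without replacement then yields $\Pr_{S,S'}[\abs{\Loss_S(h) - \Loss_P(h)} > \lambda] \leq 2\exp(-2m\lambda^2)$, and with $\lambda = 1/9600$ this is exactly $\Pr_{S,S'}[\abs{\Loss_S(h) - \Loss_{S'}(h)} > 1/4800] \leq 2\exp(-2m/9600^2)$. A union bound over $\hat{\Conv}_\delta^\mu(P)$ followed by taking the supremum over $P$ gives the stated bound.

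The main obstacle is getting the ordering of the two randomizations right: the reduction from $\Conv(\Hyp)$ to $\hat{\Conv}_\delta^\mu(P)$ via \cref{lem:inc} must be done for a \emph{fixed} $P$, before the split into $S,S'$ is revealed, since only then is $\hat{\Conv}_\delta^\mu(P)$ a set that is frozen when we apply the without-replacement concentration inequality over $S,S'$. The other delicate point is purely bookkeeping of the constants: the slack $1/4800$ lost in \cref{lem:inc}, the target threshold $1/2400$, and the deviation $1/9600$ fed into Hoeffding's bound are chosen precisely so that $1/2400 - 1/4800 = 1/4800 = 2\cdot(1/9600)$ and the exponent comes out as $2m/9600^2$; everything else is routine.
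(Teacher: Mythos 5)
Your proposal is correct and follows essentially the same route as the paper's proof: conditioning on $P$, using \cref{lem:inc} to pass from the expectation over $g \sim \dft$ to the finite class $\hat{\Conv}_\delta^\mu(P)$ at a cost of $1/4800$, and finishing with a union bound plus Hoeffding for sampling without replacement via $\Loss_S(h)+\Loss_{S'}(h)=2\Loss_P(h)$, with identical constant bookkeeping. The only difference is presentational (you state the reduction as a single deterministic inequality via the triangle inequality, where the paper manipulates the integral over $g$ step by step), which does not change the argument.
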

Again, the proof of the lemma can be found in the supplementary material.
What \cref{lem:relategrowth} gives us, is that it relates the generalization error to the growth function $|\hat{\Conv}_\delta^\mu(P)|$. 
The key point is that $\hat{\Conv}_\delta^\mu(P)$ was obtained from voting classifiers with predictions of large absolute value on all but a $\delta$ fraction of points in $P$. 
This implies that we can bound the VC-dimension of $\hat{\Conv}_\delta^\mu(P)$ when restricted to the point set $P$ using Rademacher complexity:

\begin{lemma}
    \label{lem:vc}
    Let $\Hyp$ be a hypothesis set of VC-dimension $d$. 
    For any $\delta,\mu>0$ and point set $P$, we have that the largest subset $P'$ of $P$ that $\hat{\Conv}_\delta^\mu(P) = \sign\big(\Conv_\delta^\mu(\Hyp,P)\big)$ can shatter, has size at most $|P'| = d' < \max\{2\delta |P|,\, 4\alpha^2 \mu^{-2}d\}$, where $\alpha>0$ is a universal constant.
\end{lemma}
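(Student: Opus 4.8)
The plan is to argue by contradiction, the engine being a chaining (Dudley) bound on the empirical Rademacher complexity of $\Hyp$. Let $P'\subseteq P$ with $|P'|=d'$ be shattered by $\hat{\Conv}_\delta^\mu(P)$. If $d'<2\delta|P|$ there is nothing to prove, so I would assume $d'\ge 2\delta|P|$ and aim to derive $d'\le 4\alpha^2\mu^{-2}d$; a harmless adjustment of the universal constant then yields the strict bound as stated.

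First I would unfold the definition of shattering. For every sign vector $\sigma\in\{-1,1\}^{P'}$, the definition $\hat{\Conv}_\delta^\mu(P)=\sign(\Conv_\delta^\mu(\Hyp,P))$ supplies a voting classifier $f_\sigma\in\Conv_\delta^\mu(\Hyp,P)$ with $\sign(f_\sigma(x))=\sigma_x$ for all $x\in P'$. Membership in $\Conv_\delta^\mu(\Hyp,P)$ forces $|f_\sigma(x)|<\mu$ on at most $\delta|P|\le d'/2$ of the points of $P$, hence of $P'$. Since $\sigma_x f_\sigma(x)=|f_\sigma(x)|\ge 0$ on all of $P'$ and is at least $\mu$ on the remaining $\ge d'/2$ points, this gives $\sum_{x\in P'}\sigma_x f_\sigma(x)\ge \mu d'/2$ for \emph{every} $\sigma$.

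Next I would average over a uniformly random $\sigma$ and pass to the supremum over the whole convex hull:
\[
\frac{\mu d'}{2}~\le~\E_\sigma\Big[\sum_{x\in P'}\sigma_x f_\sigma(x)\Big]~\le~\E_\sigma\Big[\sup_{f\in\Conv(\Hyp)}\sum_{x\in P'}\sigma_x f(x)\Big]~=~\E_\sigma\Big[\sup_{h\in\Hyp}\sum_{x\in P'}\sigma_x h(x)\Big],
\]
where the last equality holds because a linear functional over the convex combinations of $\Hyp$ has the same supremum as over $\Hyp$ itself. The right-hand side is $d'$ times the empirical Rademacher complexity of $\Hyp$ on the $d'$ points of $P'$, which for a VC class of dimension $d$ I would bound by $\alpha\sqrt{d d'}$ for a universal constant $\alpha$, feeding Haussler's bound on the $L^2(P')$-covering numbers of $\Hyp$ into Dudley's entropy integral. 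Combining the two estimates gives $\mu d'/2\le\alpha\sqrt{d d'}$, i.e. $d'\le 4\alpha^2\mu^{-2}d$, which together with the case $d'<2\delta|P|$ completes the argument.

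The hard part will be that Rademacher estimate: I genuinely need the bound $\onot(\sqrt{d d'})$ with \emph{no} logarithmic factor in $d'$, so the one-shot Massart/Sauer--Shelah route will not do (it costs a $\sqrt{\log d'}$, hence an extra $\log(1/\mu)$ after solving for $d'$, which is exactly the loss the paper is trying to avoid), and the chaining argument is required. The other point to get right is that this bound transfers unchanged from $\Hyp$ to $\Conv(\Hyp)$ — precisely the vertex/supremum identity used in the display above — so that working with voting classifiers rather than base hypotheses loses nothing.
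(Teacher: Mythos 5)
Your proposal is correct and follows essentially the same route as the paper's proof: extract from each dichotomy a voter in $\Conv_\delta^\mu(\Hyp,P)$ whose correlation with $\sigma$ on $P'$ is at least $\mu/2$ per point (using $d'\ge 2\delta|P|$), pass to the base class $\Hyp$ by linearity of the sup over the convex hull, and invoke the log-free VC Rademacher bound $\alpha\sqrt{d/|P'|}$ to force $|P'|<4\alpha^2\mu^{-2}d$. The only cosmetic difference is that you average over $\sigma$ before taking the supremum, whereas the paper fixes each $\sigma$ and picks a witness $h\in\Hyp$, which amounts to the same estimate.
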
 
\begin{proof}
    Recall that the VC-dimension of $\Hyp$ is $d$. 
    Thus the Rademacher complexity of $\Hyp$ for any point set $P'$ is:
    \[
      \expectation_{\sigma \in P' \to \{-1,1\}}\left[ \frac{1}{|P'|} \sup_{h \in \Hyp}
      \left|\sum_{x \in P'} h(x)\sigma(x)\right|\right] ~<~ \alpha\sqrt{\frac{d}{|P'|}}
    \]
    for a universal constant $\alpha>0$ (see e.g. \citep{wellner2013weak}). 
Assume $P' \subseteq P$ with $|P'|=d'$ can be shattered. 
    Fix any labeling $\sigma \in P' \to \{-1,1\}$. 
    Let $h_\sigma \in \hat{\Conv}_\delta^\mu(P)$ be the hypothesis generating the dichotomy $\sigma$ (which exists since $P'$ is shattered). 
    Since $h_\sigma \in \hat{\Conv}_\delta^\mu(P)$, there must be some $g \in \Conv^\mu_\delta(\Hyp,P)$ such that $h_\sigma = \sign(g)$ on the point set $P'$. 
    If $|P'| \geq 2\delta |P|$, then by definition of $\Conv^\mu_\delta(\Hyp,P)$, there are at least $|P'|- \delta|P| \geq |P'|/2$ points $x \in P'$ for which $|g(x)| \geq \mu$. 
    This means that $(1/|P'|)\sum_{x \in P'} g(x)\sigma(x) \geq
    (1/2)\mu$. 
    But $g(x)$ is a convex combination of hypotheses from $\Hyp$, hence there is also a hypothesis $h \in \Hyp$ for which  $(1/|P'|)\sum_{x\in P'} h(x) \sigma(x) \geq (1/2)\mu$. 
    Since this holds for all $\sigma$, by the bound on the Rademacher complexity, we conclude $\alpha \sqrt{d/|P'|} > (1/2)\mu \implies |P'| < 4\alpha^2 \mu^{-2} d$. 
    We thus conclude that the largest set that $\hat{\Conv}_\delta^\mu(P)$ can shatter, has size less than $\max\!\big\{2\delta |P|,~ 4\alpha^2 \mu^{-2}d\big\}$.
\end{proof}

We remark that it was crucial to introduce the random hypothesis $g$, since all we are promised about the original hypothesis $f$ is that it has large margins on $S$, i.e. on only half the points in $P$.
That case would correspond to $\delta = 1/2$ in \cref{lem:vc} and would mean that we could potentially shatter all of $P$.
In order for the bound to be useful, we thus need $\delta \ll 1/2$ and thus large margins on much more than half of $P$ (which we get by using $g$).

For a $0 < \delta < 1$ to be determined, let us now fix $\mu = \delta/(9600 \sqrt{t})$ and assume that the number of samples $m$ satisfies $m
\geq \max\{ \alpha^2 \mu^{-2} d/\delta,~2400^2\}$ where $\alpha$ is the constant from \cref{lem:vc}. By \cref{lem:relategrowth}, we have
\[
  \Pr_{S,S'}\Big[\sup_{f \in \Conv(\Hyp)} |\Loss^t_S(f) - \Loss^t_{S'}(f)| >
  \tfrac{1}{2400}\Big]  ~\leq~ \sup_P 2 |\hat{\Conv}_\delta^\mu(P)|\, \exp(-2m/9600^2).
\]
\cref{lem:vc} gives us that the largest subset $P' \subseteq P$ that $\hat{\Conv}_\delta^\mu(P)$ shatters has size at most $d' < \max\{2\delta |P|,\, 4\alpha^2 \mu^{-2}d\}$. 
By our assumption on $m$, the term $2 \delta |P| = 4 \delta m$ is at least $4c^2 \mu^{-2} d$ and thus $2\delta |P| = 4\delta m$ takes the maximum in the bound on $d'$. 
By the Sauer-Shelah lemma, we have that $|\hat{\Conv}_\delta^\mu(P)| \leq \sum_{i=0}^{4 \delta m-1} \binom{2m}{i}$. For $\delta \leq 1/4$, this is at most $\binom{2m}{4 \delta m} \leq (e \delta^{-1}/2)^{4 \delta m} = \exp \brackets{4 \delta m \ln(e\delta^{-1}/2)}$.

As conclusion we have:
\[
  \Pr_{S,S'}\Big[\sup_{f \in \Conv(\Hyp)} |\Loss^t_S(f) - \Loss^t_{S'}(f)| > \tfrac{1}{2400}\Big] 
   ~\leq~ 2 \exp ({4 \delta m \ln(e\delta^{-1}/2)})\exp(-2m/9600^2).
\]
Let us now fix $\delta = 10^{-10}$. We then have
\begin{align*}
       &2 \exp\big({4 \delta m \ln(e\delta^{-1}/2)}\big)\exp(-2m/9600^2) \\ 
~=~    &2 \exp\big({m(4 \delta \ln(e \delta^{-1}/2) - 2/9600^2)}\big) \\
~\leq~ &2 \exp\big({-m/10^{8}}\big)
\end{align*}
where the last step is a numerical calculation.
By \cref{lem:ghost}, this in turn implies:
\[
  \Pr_S\Big[\sup_{f \in \Conv(\Hyp)} \big|\Loss^t_S(f) -
  \Loss^t_\Dist(f)\big| > \tfrac{1}{1200}\Big] ~\leq~ 4 \exp(-m/ 10^{8}).
\]
Since we only required $m
\geq \max\{ \alpha^2 \mu^{-2} d/\delta,2400^2\}$ and we had $\mu = \delta/(9600 \sqrt{t})$, this is satisfied for $m \geq \alpha't d$ for a large enough constant $\alpha'>0$.
This completes the proof of \cref{lem:supf} and thus also finishes the proof of \cref{thm:constantgen}. 
\section{Weak to strong learning}
\label{sec:hanneke}
\label{sec:hannekeFull}

\begin{algorithm}[t]
  \DontPrintSemicolon
  \KwIn{Sets $A$ and $B$}
    \eIf(\tcp*[f]{stop when $A$ is too small to recurse}){$|A| \leq 3$}{
    \Return{$A \cup B$}
    }
    {
    {Let $A_0$ denote the first $|A|-3\lfloor |A|/4 \rfloor$ elements of $A$,\nonl \tcp*{split $A$ evenly}
    
    \hspace{0.5cm} $A_1$ the next $\lfloor |A|/4 \rfloor$ elements, \nonl\\
    \hspace{0.5cm} $A_2$ the next $\lfloor |A|/4 \rfloor$ elements, and \\
    \hspace{0.5cm} $A_3$ the remaining $\lfloor |A|/4 \rfloor$ elements.}
    
\Return{}{\ \ $\textit{Sub-Sample}(A_0,A_2 \!\cup\! A_3 \!\cup\! B)$ $\cup$ \nonl \tcp*{recurse in leave-one-out fashion} 
    
				  \hspace{1.03cm} $\textit{Sub-Sample}(A_0,A_1 \!\cup\! A_3 \!\cup\! B)$ $\cup$ \\ 
				  \hspace{1.03cm} $\textit{Sub-Sample}(A_0,A_1 \!\cup\! A_2 \!\cup\! B)$}
    }
  \caption{\textit{Sub-Sample}($A,B$)~ (\citet{hanneke2016optimal})}\label{alg:subsample}
\end{algorithm}
 
\begin{algorithm}[t]
  \DontPrintSemicolon
  \KwIn{Set $S$ of $m$ samples.}
  $\{C_1,\dots,C_k\} = \textit{Sub-Sample}(S,\emptyset)$
  \tcp*{create highly overlapping subsamples of $S$}
  
  \For{$i=1,\dots,k$}{
    $h_i = \Algsub(C_i)$
  \tcp*{run AdaBoost$_\nu^*$ on all those sub-samples}
  }
  \Return $h(x) = \sign\big(\sum_{i=1}^k h_i(x)\big)$.
  \tcp*{return unweighted majority vote}
  \caption{Optimal weak-to-strong learner}\label{alg:optimal}
\end{algorithm}
 
In this section, we give our algorithm for obtaining a strong learner from a $\gamma$-weak learner with optimal sample complexity and prove that it achieves the announced sample complexity.
The following theorem is essentially a restatement of \cref{thm:finalintro} from the introduction.
Optimality then follows by combining the theorem with the lower bound that we prove in \cref{sec:lowerBoundFull}.

\begin{theorem}
    \label{thm:finalSupplement}
    Assume we are given access to a $\gamma$-weak learner for a $0 < \gamma < 1/2$, using base hypothesis set $\Hyp \subseteq \Xs \to \{-1,1\}$ of VC-dimension $d$. 
    Then there is a universal constant $\alpha>0$ and an algorithm $\Alg$, such that for every $0 < \delta < 1$ and every distribution $\Dist$ over $\Xs \times \{-1,1\}$, it holds with probability at least $1-\delta$ over a set of $m$ samples $S \sim \Dist^m$, that $\Alg$ on $S$ outputs a classifier $h_S = \Alg(S) \in \Xs \to \{-1,1\}$ with
    \[
      \Loss_\Dist(h_S) ~\leq~ \alpha \cdot \frac{d \gamma^{-2} + \ln(1/\delta)}{m}.
    \]
  \end{theorem}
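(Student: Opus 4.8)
The plan is to follow the template of \citet{hanneke2016optimal}: \cref{alg:optimal} runs the \textit{Sub-Sample} recursion of \cref{alg:subsample} on $S$ to obtain the highly overlapping sub-samples $C_1,\dots,C_k$, applies the base learner $\Algsub$ (AdaBoost$^*_\nu$) to each to get $h_i$, and outputs the unweighted majority $h_S(x)=\sign\!\big(\sum_i h_i(x)\big)$. I would split the proof into two ingredients: (i) show that $\Algsub$, when run on a set of $\tau := \Theta(d\gamma^{-2}+\ln(1/\delta))$ i.i.d.\ samples from $\Dist$, outputs a hypothesis of error at most $1/200$ with probability at least $1-\delta$; and (ii) re-run Hanneke's analysis of the \textit{Sub-Sample} scheme, showing that his argument uses nothing about the base learner beyond such a \emph{constant}-error guarantee, and that it then drives the error of $h_S$ down to $O(\tau/m)$.

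For ingredient (i), I would first note that every reweighting of $C_i$ produced by AdaBoost$^*_\nu$ is a distribution supported on $\Xs$, so the $\gamma$-weak learner can be invoked on i.i.d.\ draws from it; by von Neumann's minimax theorem the weak-learning assumption forces the largest margin attainable by a voting classifier on the finite set $C_i$ to be at least $2\gamma$, so running AdaBoost$^*_\nu$ for $T=\Theta(\gamma^{-2}\ln|C_i|)$ rounds returns some $f_i\in\Conv(\Hyp)$ with $yf_i(x)\ge\gamma$ on all $(x,y)\in C_i$, provided the weak learner's fixed failure probability $\delta_0$ is reduced to $\delta/T$ per round by $O(\ln(T/\delta))$-fold repetition with exact re-selection on $C_i$ (this also keeps the total running time polynomial). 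Since $C_i$ is a subset of $S$ singled out by indices only, it is itself a set of $|C_i|$ i.i.d.\ samples from $\Dist$, so \cref{thm:constantgen} applied with margin $\gamma$ and confidence $\delta$ shows that as soon as $|C_i|\ge\tau$, every margin-$\gamma$ voting classifier, and in particular $f_i$, satisfies $\Loss_\Dist(\sign(f_i))\le 1/200$ except with probability $O(\delta)$ over the draw of $C_i$ and the algorithm's randomness. We may assume $m\ge\tau$ (otherwise the bound claimed in the theorem exceeds $1$ and is vacuous), and then every leaf receives $\Theta(m)\ge\tau$ samples, so the guarantee applies to each $C_i$.

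For ingredient (ii), I would revisit the structure of \cref{alg:subsample}: a call \textit{Sub-Sample}$(A,B)$ with $|A|>3$ returns the union of the leaves of three recursive calls, each having the same active set $A_0$ of size $\approx|A|/4$ and with $B$ enlarged by two of the three remaining quarters $A_1,A_2,A_3$; consequently the majority over all leaves of \textit{Sub-Sample}$(A,B)$ is the unweighted majority of the three recursive majorities, each leaf's training set has size $\Theta(m)$, and $k=\Theta(m^{\log_4 3})$. The structural fact that makes the induction go through, exactly as in Hanneke, is that the three recursive calls each omit a \emph{distinct} quarter, so conditioning on $A_0$, on $B$, and on the two quarters shared by any fixed pair of the calls renders that pair independent. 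Feeding this conditional independence into Hanneke's induction on $n=|A|$, the error of the majority at active size $n$ is bounded in terms of the error at active size $n/4$, with base case $n\le 3$ equal to the constant $1/200$ supplied by ingredient (i); unwinding the recursion over its $\log_4 m$ levels yields $\Loss_\Dist(h_S)=O(\tau/m)=O\!\big((d\gamma^{-2}+\ln(1/\delta))/m\big)$ with probability at least $1-\delta$, which is the statement of \cref{thm:finalSupplement}; setting $\Loss_\Dist(h_S)=\eps$ and solving for $m$ recovers \cref{thm:finalintro}.

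The step I expect to be the main obstacle is ingredient (ii): re-deriving Hanneke's recursion so that a base learner with merely \emph{constant} error --- rather than ERM's sharper $O(d\ln(n/d)/n)$ error --- still pushes the majority-of-majorities all the way down to $O(\tau/m)$ and not just $O(\tau\ln m/m)$, which requires carefully exploiting the conditional independence of the recursive calls despite their training sets overlapping in a $\Theta(1)$ fraction of points, and tracking failure probabilities through the tree so the final confidence is $1-\delta$ without the extra $\ln k=\Theta(\ln m)$ factor that a naive union bound over the $k$ sub-samples would cost. A lesser, routine difficulty is amplifying the $\gamma$-weak learner's confidence across the $T$ boosting rounds while only assuming \cref{def:weak} (a single fixed $\delta_0$) rather than a weak learner that works for every $\delta$.
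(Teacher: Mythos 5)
Your high-level plan (Hanneke's \textit{Sub-Sample} recursion with AdaBoost$^*_\nu$ at the leaves, powered by \cref{thm:constantgen}) is the paper's plan, but the step you yourself flag as the main obstacle --- ingredient (ii) --- is exactly where the proof lives, and the interface you propose for it is not the one that makes the induction close. Your ingredient (i) applies \cref{thm:constantgen} to each leaf set $C_i$ viewed as i.i.d.\ samples from $\Dist$, giving $\Loss_\Dist(\sign(f_i))\le 1/200$ per leaf. That property is never used in the paper's argument and cannot by itself drive the error below a constant: a black-box ``error $\le 1/200$ on $\Dist$'' guarantee for each leaf says nothing about how leaf errors correlate with the errors of the sibling branches, which is what the majority-of-majorities analysis needs. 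What the paper actually uses is the \emph{uniformity} of \cref{thm:constantgen} over all margin-$\gamma/2$ voters, applied not to $\Dist$ but to the conditional distribution $\Dist(\cdot\mid \Err(h_i))$: since $h_i$ is trained without the left-out quarter $A_i$, the misclassified points $A_i\cap\Err(h_i)$ are (conditionally on $h_i$ and $N_i$) i.i.d.\ from $\Dist(\cdot\mid\Err(h_i))$; since $A_i\subseteq B_j$ for both other branches $j\neq i$, every leaf classifier of those branches lies in $\Margin_{\gamma/2}(A_i\cap\Err(h_i))$; hence, on a good event, each such leaf errs with probability at most $1/200$ \emph{conditioned on $h_i$ erring}, so $\Pr[h_i \text{ errs} \wedge h \text{ errs}]\le \tfrac{1}{200}\Loss_\Dist(h_i)$. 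A Chernoff bound (using independence of $h_i$ and $A_i$) guarantees $N_i\gtrsim d\gamma^{-2}+\ln(1/\delta)$ whenever $\Loss_\Dist(h_i)$ is above the target bound, and a counting argument (if the global majority errs, some $h_i$ errs and a $1/6$ fraction of all leaves err) converts the pairwise bound into $\Loss_\Dist(\hat h)\le 12\cdot\tfrac{1}{200}\cdot O(\tau/m)$, which is what lets the bound $c'\tau/m'$ be maintained across levels with $\delta'=\delta/9$ per branch (this is also how the $\ln k$ union-bound loss you worry about is avoided). None of this is derivable from your ingredient (i).

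Two further inaccuracies in your sketch of the recursion: the relevant structural fact is not that ``conditioning renders a pair of recursive calls independent'' (the three calls share $A_0$ and overlap heavily and are never treated as independent of each other), but that $h_i$ is independent of the omitted quarter $A_i$ while the other two branches' leaves have margin $\gamma/2$ on all of $A_i$; and the base case of the induction is not ``error $1/200$'' at $|A|\le 3$ --- it is the trivial regime where $c'(d\gamma^{-2}+\ln(1/\delta'))/m'\ge 1$, with the constant $1/200$ entering only in the inductive step as described above. Your ingredient-(i) discussion of achieving margin $\gamma/2$ with AdaBoost$^*_\nu$ and handling the weak learner's fixed failure probability $\delta_0$ is fine and matches the paper's remarks, but as it stands the central inductive argument is missing rather than merely deferred.
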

\cref{thm:finalintro} follows by setting $\eps = \Loss_\Dist(h_S)$ and solving for $m$ and letting the label in the distribution $\Dist$ be $c(x)$ for every $x \in \Xs$.

The algorithm obtaining these guarantees is as follows: 
Let $\Algsub$ be an algorithm that on a sample $S$ outputs a classifier $g = \sign(f)$, where $f$ is a voting classifier with margins at least $\gamma/2$ on all samples in $S$ such as AdaBoost$_\nu^*$ \citep{ratsch2005efficient}. Given a set $S$ of $m$ i.i.d. samples from an unknown distribution $\Dist$, we run $\Algsub$ on a number of samples $C_1,C_2,\ldots,C_k \subset S$ obtaining hypotheses $h_1,h_2,\dots,h_k$.
We then return the (unweighted) majority vote among $h_1,\dots,h_k$ as our final hypothesis $h^*$. 
The subsets~$C_i$ are chosen by the algorithm \textit{Sub-Sample} (shown in \cref{alg:subsample}) as in the optimal PAC learning algorithm by \citet{hanneke2016optimal}.
The final algorithm (\cref{alg:optimal}) calls $\Algsub$ on all subsets returned by \cref{alg:subsample} and returns the majority vote.
Note that the final hypothesis returned by \cref{alg:optimal} is a majority of majorities since $\Algsub$ already returns a voting classifier.

In the remainder of the section, we prove that \cref{alg:optimal} has the guarantees of \cref{thm:finalSupplement}. 
The proof follows that of \citet{hanneke2016optimal} pretty much uneventfully, although carefully using that a generalization error of $1/200$ suffices for each call of $\Algsub$.

The key observation is that each of the recursively generated sub-samples in \cref{alg:subsample} leaves out a subset $A_i$ of the training data, whereas the two other recursive calls always include all of $A_i$ in their sub-samples. 
If one considers a hypothesis $h$ trained on the data leaving out $A_i$, then $A_i$ serves as an independent sample from $\Dist$. 
This implies that if $h$ has large error probability over $\Dist$, then many of the samples in $A_i$ will be classified incorrectly by $h$.
Now, since the two other recursive calls always include $A_i$, any hypothesis $h'$ trained on a sub-sample from those calls will have margin at least $\gamma/2$ on all points misclassified by $h$ in $A_i$. 
But the generalization bound in \cref{thm:lowerintro} then implies that $h'$ makes a mistake only with probability $1/200$ on \emph{the conditional distribution} $\Dist(\, \cdot \mid h \textrm{ errs})$. 
Thus, the probability that they both err at the same time is at most the probability that $h$ errs, times $1/200$.
Applying this reasoning inductively gives the conclusion that it is very unlikely that the majority of all trained hypotheses err at the same time which then finishes the proof.

\subsection{Proof of Optimal Strong Learning}
For simplicity, we assume $m$ is a power of $4$. This can easily be ensured by rounding $m$ down to the nearest power of $4$ and ignoring all excess samples. This only affects the generalization bound by a constant factor. 
With $m$ being a power of $4$ we can observe from \cref{alg:subsample} that the cardinalities of all recursively generated sets $A_0$ (which are the input to the next level of the recursion) are also powers of $4$.
Hence we can ignore all roundings.

Let $\Concept \subseteq \Xs \to \{-1,1\}$ be a concept class and assume there is a $\gamma$-weak learner for $\Concept$ using hypothesis set $\Hyp$ of VC-dimension $d$. 
Let $\Algsub$ be an algorithm that on a sample $S$ consistent with a concept $c \in \Concept$, computes a voting classifier $f \in \Conv(\Hyp)$ with $yf(x) \geq \gamma/2$ for all $(x,y) \in S$ and returns as its output hypothesis $g(x) = \sign(f(x))$. 
We could e.g. let $\Algsub$ be AdaBoost$_\nu^*$. 
For a sample $S$, we use the notation $\Margin_{\gamma}(S)$ to denote the set of hypotheses $g(x) = \sign(f(x))$ for an $f \in \Conv(\Hyp)$ satisfying $yf(x) \geq \gamma$ for all $(x,y) \in S$. 
The set $\Margin_{\gamma}(S)$ is thus the set of all voting classifiers obtained by taking the sign of a voter that has margins at least $\gamma$ on all samples in $S$. 
By definition, the output hypothesis $g$ of $\Algsub$ on a set of samples $S$ always lies in $\Margin_{\gamma/2}(S)$.

Let $c \in \Concept$ be an unknown concept in $\Concept$ and let $\Dist$ be an arbitrary distribution over $\Xs$. 
Let \mbox{$S = \{(x_i,c(x_i))\}_{i=1}^m \in (\Xs \times \{-1,1\})^m$} be a set of $m$ samples with each $x_i$ an i.i.d. sample \mbox{from $\Dist$}. 
Let $S_{1:k}$ denote the first $k$ samples of $S$. Let $c'\geq 4$ be a constant to be determined later.
We will prove by induction that for every $m' \in \Natural$ that is a power of $4$, for every $\delta' \in (0,1)$, and every finite sequence $B'$ of samples in $\Xs \times \{-1,1\}$ with $y_i = c(x_i)$ for each $(x_i,y_i) \in B'$, with probability at least $1-\delta'$, the classifier 
\begin{align*}
    &\hat{h}_{m',B'} \,=\, \sign\left(\sum_{C_i \in \textit{ Sub-Sample}(S_{1:m'},B')} \!\Algsub(C_i)\right)
    \intertext{satisfies}
    &\Loss_\Dist(\hat{h}_{m',B'}) ~\leq~ \frac{c'}{m'}\left(d\gamma^{-2} + \ln(1/\delta')\right).\numberthis{}
    \label{eq:induc}
\end{align*}
The conclusion of \cref{thm:finalSupplement} follows by letting $B'=\emptyset$ and $m'=m$ (and recalling that we assume $m$ is a power of $4$).
Thus what remains is to give the inductive proof.

As the base case, consider any $m' \in \Natural$ with $m' \leq c'$ and $m'$ a power of $4$. In this case, the bound $c'(d \gamma^{-2} + \ln(1/\delta'))/m'$ is at least $d \gamma^{-2} \geq 1$ and $\Loss_\Dist(\hat{h}_{m',B'}) \leq 1$ obviously holds.

For the inductive step, take as inductive hypothesis that, for some $m \in \Natural$ with $m > c'$ and $m$ a power of $4$, it holds for all $m' \in \Natural$ with $m' < m$ and $m'$ a power of $4$, that for every $\delta' \in (0,1)$ and every finite sequence $B'$ of samples in $\Xs \times \{-1,1\}$ with $y_i = c(x_i)$ for each $(x_i,y_i) \in B'$, with probability at least $1-\delta'$, \cref{eq:induc} holds. We need to prove that the inductive hypothesis also holds for $m'=m$.

Fix a $\delta \in (0,1)$ and any finite sequence $B$ of points in $\Xs \times \{-1,1\}$ with $y_i = c(x_i)$ for each $(x_i,y_i)$ in $B$.
Since $m > c' \geq 4$ we have that \textit{Sub-Sample}$(S_{1:m},B)$ returns in Step 5 of \cref{alg:subsample}.
Let $A_0,A_1,A_2,A_3$ be as defined in Step 4 of \cref{alg:subsample}.
Also define $B_1 = A_2 \cup A_3 \cup B,$ \mbox{$B_2 = A_1 \cup A_3 \cup B,~ B_3 = A_1 \cup A_2 \cup B$}, and for each $i \in \{1,2,3\}$, denote 
\[
    h_i \,=\, \sign\left( \sum_{C_i \in \textit{ Sub-Sample}(A_0,B_i)} \!\Algsub(C_i)\right).
\]
Note that the $h_i$'s correspond to the majority vote classifiers trained on the sub-samples of the three recursive calls in \cref{alg:subsample}.
Moreover, notice that $h_i = \hat{h}_{m/4, B_i}$. 
Therefore, the inductive hypothesis may be used on $h_1,h_2,h_3$ to conclude that for each $i \in \{1,2,3\}$, there is an event $E_i$ of probability at least $1-\delta/9$, on which
\begin{align}
    \Loss_\Dist(h_i) ~\leq~ \frac{c'}{|A_0|}\left(d\gamma^{-2} + \ln(9/\delta)\right) \leq \frac{4c'}{m}\left(d\gamma^{-2} + \ln(1/\delta) + 3\right) \leq \frac{12c'}{m}\left(d\gamma^{-2} + \ln(1/\delta)\right).
    \label{eq:subi}
\end{align}
Here we chose the probability $1-\delta/9$ in order to perform a union bound in the end of the induction step which is possible since the inductive hypothesis holds for every $\delta'$.
Next, define $\Err(h_i)$ as the set of points $x \in \Xs$ for which $h_i(x) \neq c(x)$. 
Now fix an $i \in \{1,2,3\}$ and denote by $\{(Z_{i,1},c(Z_{i,1})), \dots, (Z_{i,N_i}, c(Z_{i,N_i})\} = A_i \cap (\Err(h_i) \times \{-1,1\})$, where $N_i = |A_i \cap (\Err(h_i) \times \{-1,1\})|$. 
Said in words, the set $\{(Z_{i,j},c(Z_{i,j})\}_{j=1}^{N_i}$ is the subset of samples in $A_i$ on which $h_i$ makes a mistake. 
Notice that $h_i$ is not trained on any samples from $A_i$ ($B_i$ excludes $A_i$), hence $h_i$ and $A_i$ are independent. 
Therefore, given $h_i$ and $N_i$, the samples $Z_{i,1},\dots,Z_{i,N_i}$ are conditionally independent samples with distribution $\Dist(\cdot \mid \Err(h_i))$ (provided $N_i > 0$). 
From Theorem~6 in the main paper, we get that there is an event $E_i'$ of probability at least $1-\delta/9$, such that if $N_i \geq c''\big(d\gamma^{-2} + \ln(1/\delta)\big)$, then every $h \in \Margin_{\gamma/2}\big(\big\{(Z_{i,j},c(Z_{i,j}))\big\}_{j=1}^{N_i}\big)$ satisfies
\[
    \Loss_{\Dist(\cdot \mid \Err(h_i))}(h) ~\leq~ \tfrac{1}{200}.
\]
Note that this is a key step where our proof differs from Hanneke's original proof since we exploit that a bound of $\tfrac{1}{200}$ on the generalization error suffices for the rest of the proof.
We continue by observing that for each $j \in \{1,2,3\} \setminus \{i\}$, the set $B_j$ contains $A_i$ and this remains the case in all recursive calls of \textit{Sub-Sample}($A_0,B_i$). 
Thus for $\{C_1,\dots,C_k\} = \textit{Sub-Sample}( A_0,B_j)$, it holds for all $C_k$ that $\Algsub(C_k) \in \Margin_{\gamma/2}(B_j) \Rightarrow \Algsub(C_k) \in \Margin_{\gamma/2}(A_0) \Rightarrow \Algsub(C_k) \in \Margin_{\gamma/2}\big(\{(Z_{i,j},c(Z_{i,j}))\}_{j=1}^{N_i}\big)$. 
Thus on the event $E_i'$, if $N_i > c''\big(d\gamma^{-2} + \ln(1/\delta)\big)$, then it holds for all $j \in \{1,2,3\}\setminus \{i\}$ and all $C_k \in \textit{Sub-Sample}(A_0,B_j)$, that the hypothesis $h = \Algsub(C_k)$ satisfies
\begin{align*}
    \Pr_{x \sim \Dist}\big[h_i(x) \neq c(x) \land h(x) \neq c(x)\big] 
    ~=~ &\Loss_\Dist(h_i) \cdot \Loss_{\Dist(\cdot \mid \Err(h_i))}(h) \\
    \leq~ &\tfrac{1}{200} \Loss_\Dist(h_i).
\end{align*}

Assume now that $\Loss_\Dist(h_i) \geq \big((10/7)c''(d\gamma^{-2} + \ln(1/\delta)) + 23\ln(9/\delta)\big)/(m/4) \geq 23\ln(9/\delta)/|A_i|$. 
Using that $h_i$ and $A_i$ are independent, it follows by a Chernoff bound that 
\begin{align*}
    \Pr\big[N_i \geq (7/10) \Loss_\Dist(h_i)|A_i|\big] 
    ~\geq~ &1-\exp\big({-}(3/10)^2 \Loss_\Dist(h_i)|A_i|/2\big) \\
    \geq~ &1-\exp\big({-}(3/10)^2 \cdot 23  \ln(9/\delta)/2\big) \\
    >~ &1-\delta/9.
\end{align*}
Thus there is an event $E''_i$ of probability at least $1-\delta/9$, on which, if 
\begin{align*}
    \Loss_\Dist(h_i) ~\geq~ &\frac{(10/7)c''(d\gamma^{-2} + \ln(1/\delta)) + 23\ln(9/\delta)}{m/4}
    \intertext{then}
    N_i ~\geq~ &(7/10)\Loss_\Dist(h_i)\,|A_i| \\
    ~=~ &(7/10)\Loss_\Dist(h_i)\,m/4 \\
    ~\geq~ &c''\big(d\gamma^{-2} + \ln(1/\delta)\big).
\end{align*}

Combining it all, we have that on the event $E_i \cap E'_i \cap E''_i$, which occurs with probability at least $1-\delta/3$, if $\Loss_\Dist(h_i) \geq \big((10/7)c''(d\gamma^{-2} + \ln(1/\delta)) + 23\ln(9/\delta)\big)/(m/4)$, then every $h = \Algsub(C_k)$ for a $C_k \in \textit{Sub-Sample}(A_0,B_j)$ with $j \neq i$ has:
\begin{align*}
    \Pr_{x \sim \Dist}\big[h_i(x) \neq c(x) \wedge h(x) \neq c(x)\big] ~&\leq~ \tfrac{1}{200} \Loss_\Dist(h_i)\\
\intertext{By \cref{eq:subi}, this is at most}
    \Pr_{x \sim \Dist}\big[h_i(x) \neq c(x) \wedge h(x) \neq c(x)\big] ~&\leq~ \frac{1}{200}\cdot\frac{12c'}{m} \left(d \gamma^{-2} + \ln(1/\delta)\right) \\ &\leq~ \frac{c'}{16m} \left(d \gamma^{-2} + \ln(1/\delta)\right).
\intertext{On the other hand, if $\Loss_\Dist(h_i) < \big(c''(d\gamma^{-2} + \ln(1/\delta)) + 23\ln(9/\delta)\big)/(m/4)$, then }
    \Pr_{x \sim \Dist}\big[h_i(x) \neq c(x) \wedge h(x) \neq c(x)\big] ~&\leq~ \Loss_\Dist(h_i) \\
    &\leq~ \big(c''(d\gamma^{-2} + \ln(1/\delta)) + 23\ln(9/\delta)\big)/(m/4) \\
    &\leq~ 4c''(d \gamma^{-2} + 24\ln(1/\delta) + 23\ln 9)/m
\end{align*}
Using that $23\cdot \ln 9 < 51 \leq 51 d \gamma^{-2}$, the above is at most $204c''(d \gamma^{-2} + \ln(1/\delta))/m$. Fixing the constant $c'$ to $c' \geq (16 \cdot 204)c''$, this is at most 
\begin{align*}
    \frac{c'}{16m} \left(d \gamma^{-2} + \ln(1/\delta)\right).
\end{align*}
We conclude that on the event $\bigcap_{i=1,2,3} \{E_i \cap E'_i \cap E''_i\}$, which occurs with probability at least $1-\delta$ by a union bound, it holds for all $i$ and all $C_k \in \text{Sub-Sample}(A_0,B_j)$ with $j \neq i$ that the hypothesis $h = \Algsub(C_k)$ satisfies: 
\begin{align*}
    \Pr_{x \sim \Dist}\big[h_i(x) \neq c(x) \wedge h(x) \neq c(x)\big] ~&\leq~ \frac{c'}{16m} \left(d \gamma^{-2} + \ln(1/\delta)\right).
\end{align*}
Now consider an $x$ on which $\hat{h}_{m,B}$ errs. On such an $x$, the majority among the classifiers
\begin{align*}
    \bigcup_{C_i \in \textit{ Sub-Sample}(S_{1:m},B)} \big\{\Algsub(C_i)\big\} ~=~ \bigcup_{i =1,2,3} ~\bigcup_{C_k \in \textit{ Sub-Sample}(S_{1:m/4},B_i)} \big\{\Algsub(C_k)\big\}
\end{align*}    
errs. For the majority to err, there must be an $i \in \{1,2,3\}$ for which the majority of 
\begin{align*}
    \bigcup_{C_k \in \textit{ Sub-Sample}(S_{1:m/4},B_i)} \big\{\Algsub(C_k)\big\}
\end{align*}  
errs. This is equivalent to $h_i(x) \neq c(x)$. Furthermore, even when all of the classifiers in 
\begin{align*}
    \bigcup_{C_k \in \textit{ Sub-Sample}(S_{1:m/4},B_i)} \big\{\Algsub(C_k)\big\}
\end{align*}  
err, there still must be another $(1/6)$-fraction of all the classifiers 
\begin{align*}
     \bigcup_{i =1,2,3} ~\bigcup_{C_k \in \textit{ Sub-Sample}(S_{1:m/4},B_i)} \big\{\Algsub(C_k)\big\}
\end{align*}    
that err. This follows since each of the three recursive calls in \textit{Sub-Sample} generated equally many classifiers/samples. It follows that if we pick a uniform random $i \in \{1,2,3\}$ and a uniform random hypothesis $h$ in 
\begin{align*}
 \bigcup_{j \in \{1,2,3\} \setminus \{i\}} ~\bigcup_{C_k \in \textit{ Sub-Sample}(S_{1:m/4},B_j)} \{\Algsub(C_k)\},
\end{align*}    
then with probability at least $(1/3)(1/6)(3/2) = 1/12$, we have that $h_i(x) \neq c(x) \wedge h(x) \neq c(x)$. 
It follows by linearity of expectation that on the event $\bigcap_{i=1,2,3} \{E_i \cap E'_i \cap E''_i\}$, we have:
\begin{align*}
    \Loss_\Dist(\hat{h}_{m,B}) ~\leq~ 12 \cdot \frac{c'}{16m}\left(d \gamma^{-2} + \ln(1/\delta)\right) ~<~ \frac{c'}{m}\left(d \gamma^{-2} + \ln(1/\delta)\right).
\end{align*}
This completes the inductive proof and shows \cref{thm:finalSupplement}.

In total there are $k = 3^{\lceil\log_4(m)\!\rceil} \approx m^{0.79}$ calls to the weak learner, each with a sub-sample of linear size.
Since AdaBoost$_\nu^*$ runs in polynomial time on its input, given that the weak learner is polynomial, \cref{alg:optimal} is polynomial under the same condition.

Let us also remark that in \cref{thm:constantgen} we have a failure probability $\delta_0>0$, while the analysis of AdaBoost$_\nu^*$ assumes $\delta_0 = 0$, i.e. that the weak learner always achieves an advantage of at least $\gamma$.
If one knows $\gamma$ in advance, this is not an issue as AdaBoost$_\nu^*$ only calls the weak learner on distributions over the training data $S$ and one can thus compute the advantage from the training data.
After in expectation $1/(1-\delta_0)$ invocations of the weak learner, we thus get a hypothesis with advantage $\gamma$.
 
\section{Lower bound}
\label{sec:lowerBound}
\label{sec:lowerBoundFull}

In this section, we prove the following lower bound:
\begin{theorem}
    \label{thm:lower}
    There is a universal constant $\alpha > 0$ such that for all integers $d \in \Natural$ and every $2^{-d} < \gamma < 1/80$, there is a finite set $\Xs$, a concept class $\Concept \subset \Xs \to \{-1,1\}$ and a hypothesis set $\Hyp \subseteq \Xs \to \{-1,1\}$ of VC-dimension at most $d$, such that for every integer $m \in \Natural$ and $0 < \delta < 1/3$, there is a distribution $\Dist$ over $\Xs$ such that the following holds:
    \begin{enumerate}
        \item For every $c \in \Concept$ and every distribution $\Dist'$ over $\Xs$, there is an $h \in \Hyp$ with 
        \[
            \Pr_{x \sim \Dist'}\big[h(x) \neq c(x)\big] ~\leq~ 1/2-\gamma.
        \]
        \item For any algorithm $\Alg$, there is a concept $c \in \Concept$ such that with probability at least $\delta$ over a set of $m$ samples $S \sim \Dist^m$, the classifier $\Alg(S) \in \Xs \to \{-1,1\}$ produced by $\Alg$ on $S$~and~$c(S)$ must have
        \[
            \Loss_\Dist(\Alg(S)) ~\geq~ \alpha \cdot \frac{d \gamma^{-2} + \ln(1/\delta)}{m}.
        \]
    \end{enumerate}
\end{theorem}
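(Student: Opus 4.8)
The plan is to construct a hard instance by combining two separate lower-bound arguments: one producing the $d\gamma^{-2}/\eps$ term, and one producing the $\ln(1/\delta)/\eps$ term, using a product-style construction that forces any learner to pay for both simultaneously. I would fix the ground set $\Xs$ to consist of $d\cdot k$ ``hard'' points organized into $d$ groups of size $k = \Theta(\gamma^{-2})$, plus possibly a few auxiliary points, and take the distribution $\Dist$ to place total mass $\Theta(\eps)$ spread (roughly uniformly) over the hard points, with the remaining mass on a single ``easy'' point that every hypothesis classifies correctly (this rescaling is the standard trick that turns a constant-error lower bound into one with the $1/\eps$ factor). The concept class $\Concept$ would be indexed so that within each group, the concept hides a single bit (or a balanced $\pm1$ pattern) that requires seeing enough samples from that group to determine; the hypothesis set $\Hyp$ would be designed so that every $h \in \Hyp$ can only ``look at'' one group at a time, forcing the VC-dimension to be exactly $d$ while still allowing the weak-learnability guarantee (Statement 1) to hold.

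For Statement 1, I would verify directly that for any target concept $c$ and any reweighting $\Dist'$ of the points, there is an $h \in \Hyp$ agreeing with $c$ on a $1/2+\gamma$ fraction of $\Dist'$-mass: this is where the parameter $k = \Theta(\gamma^{-2})$ and $\gamma < 1/80$ enter — each group contributes a hypothesis that is correct on a slight majority of that group, and taking the best group (or a fixed majority-vote-like $h$) beats $1/2$ by $\gamma$. I would make the within-group structure a set of ``dictator''-like or ``threshold-on-a-permutation'' hypotheses so that no single $h$ does better than $1/2+\Theta(1/\sqrt k) = 1/2+\Theta(\gamma)$, pinning the advantage at exactly the right order.

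For Statement 2, I would use a standard information-theoretic / Fano-type or direct probabilistic argument: draw $c \in \Concept$ uniformly at random, and argue that with $m$ samples, the learner sees on average only $O(\eps m / d)$ samples per group; if $\eps m/d = o(d\gamma^{-2}/\eps \cdot \eps/d) = o(\gamma^{-2})$, i.e. $m = o(d\gamma^{-2}/\eps)$, then in a constant fraction of groups the learner has seen too few samples to identify the hidden bit, forcing $\Omega(\gamma)$ conditional error in each such group and hence $\Loss_\Dist(\Alg(S)) = \Omega(\eps)$ with constant probability. The $\ln(1/\delta)/\eps$ term I would get separately (and then take the max, or additively combine on disjoint copies) from a two-concept construction where a single rare-but-weight-$\eps$ point has an unknown label: the learner fails to see it with probability $(1-\eps)^m$, so to succeed with probability $1-\delta$ one needs $m = \Omega(\ln(1/\delta)/\eps)$. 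The main obstacle I anticipate is the simultaneous lower bound: naively, a learner could spend all its samples resolving the $d\gamma^{-2}$ structure while getting lucky on the $\delta$ part, or vice versa, so I would need the construction to be a genuine product (independent randomness in the two parts, on an interleaved or concatenated domain with mass split $\eps$ vs.\ $\eps$) and then argue via a union-bound / two-sided Markov argument that failing either sub-task already forces error $\Omega((d\gamma^{-2}+\ln(1/\delta))/m \cdot \eps)$ — equivalently, reduce to showing each sub-lower-bound holds even conditioned on an adversarially favorable outcome of the other, which is where the $\delta < 1/3$ slack and careful constant management will be needed.
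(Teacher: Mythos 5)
Your overall architecture (mass $\Theta(\epsilon)$ on a small hard set plus a heavy easy point to get the $1/m$ scaling, a uniformly random concept, and an information-theoretic/Fano-style argument that too few samples leave the concept undetermined) matches the paper's in spirit; the paper's entropy argument ($H(c\mid S,c(S))\geq u/2$ versus an accurate hypothesis confining $c$ to a small Hamming ball) is essentially the counting step you sketch, and the $\ln(1/\delta)/\epsilon$ term is likewise handled separately there by citing the realizable-PAC lower bound. The genuine gap is in your construction of $\Concept$ and $\Hyp$, i.e. in Statement~1. That statement demands advantage $\gamma$ for \emph{every} concept under \emph{every} distribution $\Dist'$; by LP duality this is equivalent to every $c\in\Concept$ admitting a voting classifier in $\Conv(\Hyp)$ with margin $\Omega(\gamma)$ on \emph{all} of $\Xs$. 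Your design, in which each $h\in\Hyp$ ``looks at one group at a time'' and within a group no $h$ beats $1/2+\Theta(1/\sqrt{k})$, only controls the advantage from above; it does not deliver the needed guarantee, and in fact it breaks it: for a distribution $\Dist'$ spread over several groups, a hypothesis that is good on its own group can be adversarially wrong on the mass carried by the other groups (concepts there are near-arbitrary), so no single $h$ need have positive advantage. Repairing this by taking the product of per-group hypothesis classes (a hypothesis per group, concatenated) restores weak learnability but makes the VC-dimension the \emph{sum} of the per-group dimensions, which with known margin constructions is $\Theta(d\log(1/\gamma))$ rather than $d$, so you lose a logarithmic factor against the claimed optimal bound.

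The paper avoids this by not using a group/product structure at all: it invokes a nontrivial probabilistic construction from the boosting-margin lower bound of Larsen (Lemma~\ref{lem:randH}) giving, on a single block of $u=\Theta(d\gamma^{-2})$ points, a hypothesis set with $\lg|\Hyp|=\Theta(d)$ (hence VC-dimension at most $d$) such that at least half of all $2^u$ labelings admit margin-$2\gamma$ voting classifiers; $\Concept$ is then this set of labelings, Statement~1 follows from the averaging/duality argument, and $\lg|\Concept|=\Omega(d\gamma^{-2})$ feeds the entropy argument. Your proposal is missing exactly this ingredient --- some explicit family with VC-dimension $d$, $2^{\Omega(d\gamma^{-2})}$ weakly learnable concepts, and all-distribution advantage $\gamma$ --- and the ``dictator-like or threshold-on-a-permutation'' per-group classes you suggest will not produce it. Separately, your stated ``$\Omega(\gamma)$ conditional error per unresolved group'' is the wrong scale (unseen points have conditional error $1/2$, and the relevant loss is $\Omega(u/m)$ overall), but that is a patchable slip; the construction of $(\Concept,\Hyp)$ is the substantive missing idea.
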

\cref{thm:lower} immediately implies \cref{thm:lowerintro} by solving the equation in the second statement for $\eps = \Loss_\Dist(\Alg(S))$.

The proof of the term $\ln(1/\delta)/m$ in the lower bound follows from previous work. 
In particular, we could let $\Concept = \Hyp$ and invoke the tight lower bounds for PAC-learning in the realizable setting \citep{ehrenfeucht1989general}.

Thus, we let $\delta=1/3$ and only prove that the loss of $\Alg(S)$ is at least $\alpha d /(\gamma^{2} m)$ with probability $1/3$ over $S$ when $\abs{S}=m$ for some weakly learnable concept class $\Concept$.
This proof uses a construction from \citet{kasper2019lowerBoundBoosting} to obtain a hypothesis set $\Hyp$ over a domain $\Xs = \{x_1,\dots,x_u\}$ of cardinality $u=\alpha d \gamma^{-2}$ such that a constant fraction of all concepts in $\Xs \to \{-1,1\}$ can be $\gamma$-weakly learned from $\Hyp$.
We then create a distribution $\Dist$ where the first point $x_1$ is sampled with probability $1-u/(4m)$ and with the remaining probability, we receive a uniform sample among $x_2,\dots,x_u$.
The key point is that we only expect to see $1 + m \cdot u/(4m) \approx u/4$ distinct points from $\Xs$ in a sample $S$ of cardinality $m$.
Thus, if we consider a random concept that can be $\gamma$-weakly learned, the labels it assigns to points not in the sample are almost uniform random and independent.
This in turn implies that the best any algorithm $\Alg$ can do is to guess the labels of points in $\Xs \setminus S$.
In that way, $\Alg$ fails with constant probability if we condition on receiving a sample other than $x_1$.
This happens with probability $u/(4m) = 4\alpha d \gamma^{-2}/m$ and the lower bound follows. 

To formally carry out the intuitive argument, we first argue that for a random concept $c \in \Concept$, the Shannon entropy of $c$ is high, even conditioned on $S$ and the labels $c(S)$.
Secondly, we argue that if $\Alg(S)$ has a small error probability under $\Dist$, then it must be the case that the hypothesis $\Alg(S)$ reveals a lot of information about $c$, i.e. the entropy of $c$ is small conditioned on $\Alg(S)$.
Since $\Alg(S)$ is a function of $S$ and $c(S)$, the same holds if we condition on $S$ and $c(S)$.
This contradicts that $c$ has high entropy and thus we conclude that $\Alg(S)$ cannot have a small error probability.

For the proof, we make use of the following lemma by \citet{kasper2019lowerBoundBoosting} to construct the `hard' hypothesis set $\Hyp$ and concept class $\Concept$:
\begin{lemma}[\citet{kasper2019lowerBoundBoosting}]
  \label{lem:randH}
  For every $\gamma \in (0,1/40), \delta \in (0,1)$ and integers $k \leq u$, there exists a distribution $\mu = \mu(u,d,\gamma,\delta)$ over a hypothesis set $\Hyp \subset \Xs \to \{-1,1\}$, where $\Xs$ is a set of size $u$, such that the following holds.
  \begin{enumerate}
  \item For all $\Hyp \in \supp(\mu)$, we have $|\Hyp|=N$; and
  \item For every labeling $\ell \in \{-1,1\}^u$, if no more than $k$ points $x \in \Xs$ satisfy $\ell(x) = -1$, then
    \[
      \Pr_{\Hyp \sim \mu}\big[\exists f \in \Conv(\Hyp) : \forall x \in \Xs : \ell(x)f(x) \geq \gamma\big] ~\geq~ 1-\delta.
    \]
    where $N = \Theta\big(\gamma^{-2} \ln u \ln(\gamma^{-2} \ln u \delta^{-1}) e^{\Theta(\gamma^2 k)}\big)$.
  \end{enumerate}
\end{lemma}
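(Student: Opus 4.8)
Here is how I would try to prove Lemma~\ref{lem:randH}.

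The plan is to let $\mu$ be the law of a set $\Hyp=\{h_1,\dots,h_N\}$ of $N$ mutually independent random hypotheses, where each $h_j$ is drawn by setting $h_j(x)=1$ independently with probability $1/2+c\gamma$ for every $x\in\Xs$, and $c$ and $N$ are universal quantities fixed below. Since each $h_j$ is a (nearly uniform) sign pattern on the $u$ points, two of them coincide with probability at most $N^2 2^{-u}$; this event can be removed by a trivial modification of $\mu$ (e.g.\ adjoining one distinct tie-break coordinate, or re-drawing colliding hypotheses) that affects none of the estimates below, so we may assume $|\Hyp|=N$ deterministically, which is property~1.

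Now fix a labelling $\ell$ with $T:=\{x:\ell(x)=-1\}$, $|T|=t\le k$; we must show that with probability at least $1-\delta$ over $\Hyp\sim\mu$ there is an $f\in\Conv(\Hyp)$ with $\ell(x)f(x)\ge\gamma$ for all $x$. View $\Xs\times\Hyp$ as a finite zero-sum game with payoff $\ell(x)h(x)$. By von Neumann's minimax theorem, such an $f$ exists precisely when the game value is at least $\gamma$, i.e.\ when every distribution $p$ over $\Xs$ admits some $h_j$ with $\E_{x\sim p}[\ell(x)h_j(x)]\ge\gamma$ (the ``$\gamma$-edge condition''). Rather than verifying this for all $p$, it suffices to verify a $2\gamma$-edge condition on a fixed finite family $\mathcal P$ of distributions: running AdaBoost$^*_\nu$ \citep{ratsch2005efficient} on the sample $(\Xs,\ell)$ with hypothesis set $\Hyp$ queries only $R=O(\gamma^{-2}\ln u)$ distributions, and as $\Hyp$ ranges over $\supp(\mu)$ the (suitably discretized) queried distributions form a set $\mathcal P$ of size $|\mathcal P|\le(N/\gamma)^{O(R)}$; if the $2\gamma$-edge holds on all of $\mathcal P$, then the run terminates with an $f\in\Conv(\Hyp)$ of margin at least $2\gamma-\gamma=\gamma$ on all of $\Xs$ (the edge discretization is absorbed using robustness of AdaBoost$^*_\nu$).

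The heart of the argument is bounding, for a fixed $p\in\mathcal P$, the probability that no $h_j$ has $\E_{x\sim p}[\ell(x)h_j(x)]\ge2\gamma$. Since $\ell(x)h_j(x)$ equals $h_j(x)$ on $x\notin T$ and $-h_j(x)$ on $x\in T$, the variable $\E_{x\sim p}[\ell(x)h_j(x)]$ has mean $2c\gamma(1-2p(T))$ and variance at most $\sum_x p(x)^2$, so in the worst case (mass spread over the $\le t$ points of $T$) its standard deviation is $\Theta(t^{-1/2})$. If $p(T)\le1/3$ the mean is at least $\tfrac23 c\gamma>2\gamma$ for $c$ a large enough constant, and $\max_j$ exceeds $2\gamma$ except with probability $e^{-\Omega(N)}$. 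If $p(T)>1/3$ the mean may be nonpositive and we need a single $h_j$ to deviate upward by $\Theta(\gamma)$, that is by $O(\gamma\sqrt t)=O(\gamma\sqrt k)$ standard deviations, which lies in the moderate-deviation (Gaussian) regime and by a Chernoff lower bound happens with probability at least $e^{-\Theta(\gamma^2 k)}$. Hence the failure probability for a fixed $p$ is at most $(1-e^{-\Theta(\gamma^2 k)})^{N}\le\exp(-Ne^{-\Theta(\gamma^2 k)})$, and a union bound over $\mathcal P$ gives total failure probability at most $(N/\gamma)^{O(R)}\exp(-Ne^{-\Theta(\gamma^2 k)})\le\delta$ as soon as $Ne^{-\Theta(\gamma^2 k)}\ge C(R\ln(N/\gamma)+\ln(1/\delta))$; solving this with $R=O(\gamma^{-2}\ln u)$ and using $k\gamma^2\le e^{O(\gamma^2 k)}$ to swallow the lower-order terms into the exponent yields $N=\Theta\big(\gamma^{-2}\ln u\,\ln(\gamma^{-2}\ln u\,\delta^{-1})\,e^{\Theta(\gamma^2 k)}\big)$, exactly the claimed size.

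I expect the main obstacle to be precisely the step that replaces the continuum of distributions by the finite family $\mathcal P$: in effect the $2\gamma$-edge must hold for all $p$, yet a naive $\epsilon$-net of the simplex on the (up to $k$) points of $T$ has size $e^{\Theta(k)}$, which would overwhelm the $e^{-\Theta(\gamma^2 k)}$ gain; the way out is to exploit that only the $O(\gamma^{-2}\ln u)$ distributions produced by a boosting run are ever consulted, and to discretize those distributions (and the associated edge values) finely enough that AdaBoost$^*_\nu$ still succeeds. A second, more routine difficulty is calibrating the universal constants — the bias $c\gamma$, the target margin $\gamma$, the edge $2\gamma$ — so that the final margin is genuinely at least $\gamma$ and the two-sided moderate-deviation estimate is valid throughout the range $\gamma<1/40$.
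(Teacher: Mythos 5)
First, a point of reference: the paper does not prove \cref{lem:randH} at all --- it is imported verbatim from \citet{kasper2019lowerBoundBoosting} --- so your attempt has to be measured against that work's argument. Your ingredients are the right ones: the random biased construction ($N$ independent hypotheses, each point labelled $+1$ with probability $1/2+c\gamma$), the observation that a single such hypothesis has edge $\ge 2\gamma$ against any fixed distribution with probability $e^{-\Theta(\gamma^2 k)}$, and the final accounting $N\approx R\ln(R/\delta)e^{\Theta(\gamma^2 k)}$ with $R=\Theta(\gamma^{-2}\ln u)$ boosting rounds, which indeed reproduces the stated $N$.

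The gap is exactly the step you flagged, and your proposed way out does not close it. The family $\mathcal{P}$ of ``distributions ever queried by AdaBoost$^*_\nu$ as $\Hyp$ ranges over $\supp(\mu)$'' is neither fixed in advance of the draw of $\Hyp$ nor of size $(N/\gamma)^{O(R)}$: the round-$t$ distribution is determined by the \emph{values} of the previously selected hypotheses on all $u$ points (through the exponential weights), not by their indices and discretized edges, and as $\Hyp$ varies over its support these values range over essentially all of $\{-1,1\}^u$. Any fixed family containing all reachable distributions therefore costs factors exponential in $u$ (a net over weight profiles is of size $(1/\gamma)^{\Theta(u)}$-type), which is fatal both for the claimed $N$ and for the downstream application, where $u=\Theta(d\gamma^{-2})$ and one needs $\ln N=O(\gamma^2 u)$. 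Moreover, even granting a small $\mathcal{P}$, your union bound uses the estimate $(1-e^{-\Theta(\gamma^2 k)})^{N}$ for each $p\in\mathcal{P}$ as if $p$ were independent of $\Hyp$, but the queried distributions are functions of the realized hypotheses, and conditioning on which hypotheses were selected in earlier rounds biases the law of the remaining ones; so that estimate is not available. The clean fix --- and the route in \citet{kasper2019lowerBoundBoosting} --- avoids any net: partition the $N$ hypotheses into $R=\Theta(\gamma^{-2}\ln u)$ disjoint blocks of size $N/R$ and let round $t$ of the boosting process select only from block $t$. The round-$t$ distribution is then a deterministic function of blocks $1,\dots,t-1$ and hence independent of block $t$, so conditionally the round fails with probability at most $(1-e^{-\Theta(\gamma^2 k)})^{N/R}\le\exp(-(N/R)e^{-\Theta(\gamma^2 k)})$, and a union bound over the $R$ rounds gives failure probability at most $\delta$ once $N=\Theta\big(\gamma^{-2}\ln u\,\ln(\gamma^{-2}\ln u\,\delta^{-1})\,e^{\Theta(\gamma^2 k)}\big)$, exactly the claimed size. (Your moderate-deviation lower bound also needs a short case analysis over how $p$ spreads its mass on $T$ --- near-point masses are handled directly, spread-out mass by a reverse Chernoff estimate --- but that part is routine.)
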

To prove \cref{thm:lower} for a given $\gamma \in (2^{-d},1/80)$ and $m,d \in \Natural$, let $u=k$ for a $u$ to be determined. 
Invoke \cref{lem:randH} with $\delta=1/2$ and $\gamma'=2\gamma$ to conclude that there exists a hypothesis set $\Hyp$ such that among all labelings $\ell \in \{-1,1\}^u$, at least half of them satisfy:
\[
  \exists f \in \Conv(\Hyp) : \forall x \!\in\! \Xs:\, \ell(x)f(x) \geq 2\gamma.
\]
Moreover, we have $N = |\Hyp| = \Theta\big(\gamma^{-2} \ln u \ln(\gamma^{-2} \ln u) e^{\Theta(\gamma^2 u)}\big)$. 
Let the concept class $\Concept$ be the set of such labelings.

For the given VC-dimension $d$, we need to bound the VC-dimension of $\Hyp$ by $d$. 
For this, note that the VC-dimension is bounded by $\lg |\Hyp| = \Theta(\gamma^2 u + \lg(\gamma^{-2}\lg u))$. Using that $\gamma \geq 2^{-d}$, this is at most $\Theta(\gamma^2 u + d + \lg \lg u)$.
We thus choose $u=\Theta(\gamma^{-2} d)$ which implies the claimed VC-dimension of $\Hyp$.

Next, we have to argue that any concept $c \in \Concept$ can be $\gamma$-weakly learned from $\Hyp$. 
That is, the first statement of \cref{thm:lower} holds for $\Hyp$, $\Concept$. 
To see this, we must show that for every distribution $\Dist$ over $\Xs$, there is a hypothesis $h \in \Hyp$ such that $\Pr_{x \sim \Dist}[h(x) = c(x)] \geq 1/2 + \gamma$. 
To argue that this is indeed the case, let $f \in \Conv(\Hyp)$ satisfy $\forall x\! \in\! \Xs : \, c(x)f(x) \geq 2\gamma$. 
Such an $f$ exists by definition of $\Concept$.
Then, $\E_{x \sim \Dist}[c(x)f(x)] \geq 2\gamma$. 
Since $f(x)$ is a convex combination of hypotheses from $\Hyp$, it follows that there is a hypothesis $h \in \Hyp$ also satisfying $\E_{x \sim \Dist}[c(x)h(x)] \geq 2\gamma$. 
But
\begin{align*}
  \E_{x \sim \Dist}[c(x)h(x)] ~&=~ \sum_{x \in \Xs} \Dist(x) c(x) h(x)  \\
                                 &=~ \sum_{x \in \Xs \colon c(x)=h(x)} \!\Dist(x) ~- \sum_{x \in \Xs \colon c(x) \neq h(x)} \!\Dist(x)\\
                                 &=~ \Pr_{x \sim \Dist}[c(x)=h(x)] - \Pr_{x \sim \Dist}[c(x) \neq h(x)] \\
                                 &=~ \Pr_{x \sim \Dist}[c(x)=h(x)]  - (1-\Pr_{x \sim \Dist}[c(x)=h(x)] )\\
                                 &=~ 2 \Pr_{x \sim \Dist}[c(x)=h(x)] - 1.
\end{align*}
Hence, $2\cdot \Pr_{x \sim \Dist}[c(x)=h(x)] - 1 \geq 2\gamma \implies \Pr_{x \sim \Dist}[c(x)=h(x)] \geq 1/2 + \gamma$ as claimed.

We have thus constructed $\Hyp$ and $\Concept$ satisfying the first statement of \cref{thm:lower}, where $\Concept$ contains at least half of all possible labelings of the points $\Xs = \{x_1,\dots,x_u\}$ with $u = \Theta(\gamma^{-2} d)$. For the remainder of the proof, we assume $u$ is at least some large constant, which is true for $\gamma$ small enough.

What remains is to establish the second statement of \cref{thm:lower}.
For this, we first define the hard distribution $\Dist$ over $\Xs$. 
The distribution $\Dist$ returns the point $x_1$ with probability $1-(u-1)/4m$ and with the remaining probability $(u-1)/4m$ it returns a uniform random sample $x_i$ among $x_2,\dots,x_u$. 
Also, let $c$ be a uniform random concept drawn from $\Concept$.

Let $\Alg$ be any (possibly randomized) learning algorithm that on a set of samples $S$ from $\Xs$ and a labeling $\ell(S)$ of $S$ that is consistent with at least one concept $c \in \Concept$ (i.e. $\ell(S)=c(S)$), outputs a hypothesis $h_{S,\ell(S)}$ in $\Xs \to \{-1,1\}$.
The algorithm $\Alg$ is not constrained to output a hypothesis from $\Conv(\Hyp)$ or $\Hyp$, but instead may output any desirable hypothesis in $\Xs \to \{-1,1\}$, using the full knowledge of $\Concept$, $\ell(S)$, $\Hyp$ and the promise that $c \in \Concept$.
Our goal is to show that
\begin{align}
    \label{eq:goallower}
    \E_{c\sim \Concept}\left[\Pr_{S \sim \Dist^m}\left[\Pr_{x \sim \Dist}[h_{S,c(S)}(x) \neq c(x)] \geq \alpha'\frac{d \gamma^{-2}}{m}\right]\right] \geq 1/3
\end{align}
where $c\sim \Concept$ denotes the uniform random choice of $c$.
Notice that if this is the case, there must exist a concept $c$ for which
\begin{align*}
    \Pr_{S \sim \Dist^m}\left[\Pr_{x \sim \Dist}[h_{S,c(S)}(x) \neq c(x)] \geq \alpha'\frac{d \gamma^{-2}}{m}\right] \geq 1/3. 
\end{align*}
To establish \cref{eq:goallower}, we start by observing that for any randomized algorithm $\Alg$, there is a deterministic algorithm $\Alg'$ obtaining a smaller than or equal value of the left hand side of \cref{eq:goallower} (by Yao's principle).
Thus, we assume from here on that $\Alg$ is deterministic.

The main idea in our proof is to first show that conditioned on the set $S$ and label $c(S)$, the concept~$c$ is still largely unknown. We formally measure this by arguing that the binary Shannon entropy of~$c$ is large conditioned on $S$ and $c(S)$. Next, we argue that if a learning algorithm often manages to produce an accurate hypothesis from $S$ and $c(S)$, then that reveals a lot of information about $c$, i.e. the entropy of $c$ is small conditioned on $S$ and $c(S)$. This contradicts the first statement and thus the algorithm cannot produce an accurate hypothesis. We now proceed with the two steps.

\paragraph{Large conditional entropy.}
Consider the binary Shannon entropy of the uniform random $c$ conditioned on $S$ and $c(S)$, denoted $H(c \mid S, c(S))$. 
We know that $H(c) = \lg |\Concept| \geq \lg(2^u/2) = u-1$. 
The random variable $c$ is independent of $S$, hence $H(c \mid S) = H(c)$.
We therefore have $H(c \mid S,c(S)) \geq H(c \mid S) - H(c(S) \mid S) = u-1-H(c(S) \mid S)$. 
For a fixed $s \in \Xs^m$, let $p_s = \Pr_{S \sim \Dist^m}[S=s]$. 
Then $H(c(S) \mid S) = \sum_{s \in \Xs^m} p_s H(c(S) \mid S=s) \leq \sum_{s \in \Xs^m} p_s |s|$, where the last step follows from the fact that, conditioned on $s$, the labeling $c(s)$ consists of $|s|$ signs.
Note that the size of the set $|s|$ is possibly smaller than $m$ due to repetitions.

Now notice that $\Pr[|S| > u/3]$ is exponentially small in $u$ since each of the $m$ samples from $\Dist$ is among $x_2,\dots,x_u$ with probability only $(u-1)/(4m)$.
Therefore, we get $H(c(S) \mid S) \leq u/3 + \exp(-\Omega(u))u \leq u/2-1$.
It follows that 
\begin{align}
    \label{eq:highentropy}
    H(c \mid S,c(S)) ~\geq~ u-1-(u/2-1) ~=~ u/2.
\end{align}

\paragraph{Accuracy implies low entropy.}
Now assume that $h_{S,c(S)}$ is such that $\Pr_{x \sim \Dist}[h_{S,c(S)} \neq c(x)] < \alpha'd\gamma^{-2}/m$ for a sufficiently small constant $\alpha'$. 
Any point $x_i$ where $c(x_i)$ disagrees with $h_{S,c(S)}(x_i)$ adds at least $1/(4m)$ to $\Pr_{x \sim \Dist}[h_{S,c(S)} \neq c(x)]$ (the point $x_1$ would add more), hence $h_{S,c(S)}$ makes a mistake on at most $\alpha'd\gamma^{-2}/m \cdot (4m) = 4 \alpha' d \gamma^{-2}$ points. 
Recalling that $u = \Theta(d \gamma^{-2})$, we get that for $\alpha'$ small enough, this is less than $u/100$.
Thus, conditioned on $\Pr_{x \sim \Dist}[h_{S,c(S)} \neq c(x)] < \alpha'd\gamma^{-2}/m$ and $h_{S,c(S)}$, we get that the entropy of the concept $c$ is no more than $\lg \left( \sum_{i=0}^{u/100} \binom{u}{i} \right)$ since $c$ is within a Hamming ball of radius $u/100$ from $h_{S,c(S)}$.
Now $\sum_{i=0}^{u/100} \binom{u}{i} \leq 2^{H_b(1/100) u}$, where $H_b$ is the binary entropy of a Bernoulli random variable with success probability $1/100$.
Numerical calculations give $H_b(1/100) = (1/100)\lg_2(100) + (99/100)\lg_2(100/99) < 0.09$. 
Thus
\begin{align}
    \label{eq:lowentropy}
    H\Big(\,c\, \Big\arrowvert\, h_{S,c(S)},\, \Pr_{x \sim \Dist}[h_{S,c(S)} \neq c(x)] < \alpha'd\gamma^{-2}/m\Big) ~\leq~ 0.09 u.
\end{align}
Now let $X_{S,c}$ be an indicator random variable for the event that $\Pr_{x \sim \Dist}[h_{S,c(S)} \neq c(x)] < \alpha'd\gamma^{-2}/m$. 
Then $H(c \mid S,c(S)) \leq H(c \mid S, c(S), h_{S,c(S)}, X_{S,c})  + H(X_{S,c})$.
Here we remark that we add $h_{S,c(S)}$ in the conditioning for free since it depends only on $S$ and $c(S)$.
Adding $X_{S,c}$ costs at most its entropy which satisfies $H(X_{S,c}) \leq 1$. Since removing variables that we condition on only increases entropy, we get $H(c \mid S,c(S)) \leq H(c \mid h_{S,c(S)}, X_{S,c}) + 1$. Now observe that $H(c \mid h_{S,c(S)}, X_{S,c}) = \Pr[X_{S,c}=1] H(c \mid h_{S,c(S)}, X_{S,c}=1) + \Pr[X_{S,c}=0] H(c \mid h_{S,c(S)}, X_{S,c}=0)$. 
The latter entropy we simply bound by $u$ and the former is bounded by $0.09u$ by \cref{eq:lowentropy}. 
Thus $H(c \mid S,c(S)) \leq 1 + \Pr[X_{S,c}=1]0.09u + (1-\Pr[X_{S,c}=1])u$.

\paragraph{Combining the bounds.}
Combining the above with \cref{eq:highentropy} we conclude that
\begin{align*}
    1 +\Pr[X_{S,c}=1]0.09u + (1-\Pr[X_{S,c}=1])u  ~\geq~ u/2. 
\end{align*}
It follows that $\Pr[X_{S,c}=1] \leq 2/3$. 
This completes the proof since
\begin{align*}
    \E_{c\sim \Concept}\left[\Pr_{S \sim \Dist^m}\left[\Pr_{x \sim \Dist}[h_{S,c(S)}(x) \neq c(x)] \geq \alpha'\frac{d \gamma^{-2}}{m}\right]\right] 
    ~&=~ \E_{c\sim \Concept} \big[\E_{S \sim \Dist^m}[(1-X_{S,c})]\big] ~=~ 1-\Pr[X_{S,c}=1]
\end{align*}
and thus
\begin{align*}
    \E_{c\sim \Concept}\left[\Pr_{S \sim \Dist^m}\left[\Pr_{x \sim \Dist}[h_{S,c(S)}(x) \neq c(x)] \geq \alpha'\frac{d \gamma^{-2}}{m}\right]\right] ~&\geq~ \frac{1}{3}.
\end{align*}
This finishes the proof of \cref{thm:lower}. 

\section{Conclusion}
\label{sec:concl}
Overall, we presented a new weak to strong learner with a sample complexity that removes two logarithmic factors from the best-known bound.
By accompanying the algorithm with a matching lower bound for all $d$ and $2^{-d} < \gamma < 1/80$, we showed that the achieved sample complexity of our algorithm is indeed optimal.
Our algorithm uses the same sub-sampling technique as \citet{hanneke2016optimal} and computes a voting classifier with large margins for each sample for example with AdaBoost$_\nu^*$ \citep{ratsch2005efficient}.
The analysis of our algorithm uses a new generalization bound for voting classifiers with large margins.

Although we determined the exact sample complexity of weak to strong learning (up to multiplicative constants), there are a few connected open problems.
Currently, our construction uses $3^{\log_4(m)} \approx m^{0.79}$ many sub-samples of linear size as input to AdaBoost$_\nu^*$.
For very large datasets, it would be great to reduce the number and size of these calls.
We conjecture that the most promising way to do so is to revisit Hanneke's optimal PAC learner and improve the sub-sampling strategy there.
This could lead to an improvement for the realizable case as well as to faster weak-to-strong learners.

Next, the output of our algorithm is a majority vote over majority voters. 
It is unclear whether a simple voter could achieve the same bounds. We believe that a majority of majorities is actually necessary.
This is supported by a lower bound showing that there are voters with large margin and poor generalization (paying a logarithmic factor) and thus the learning algorithm has to avoid this `bad' voter.
We currently see no indication of how a variant of AdaBoost could do that.

For the regime of $\gamma < 2^{-d}$ which our lower bound does not capture, is it possible to use fewer samples? A recent result by Alon et al.~\cite{alonweak} might suggest so. Concretely, they show that if a concept class $\Concept$ can be $\gamma$-weak learned from a base hypothesis set $\Hyp$ of VC-dimension $d$, then the VC-dimension of $\Concept$ is no more than $O_d(\gamma^{-2 + 2/(d+1)})$, where $O_d(\cdot)$ hides factors only depending on $d$. Interestingly, the part $\gamma^{2/(d+1)}$ becomes non-trivial precisely when our lower bound stops applying, i.e. when $\gamma < 2^{-d}$. This could hint at a possibly better dependency on $\gamma$ for $\gamma < 2^{-d}$.

We have a new generalization bound for large-margin classifiers, which is better than the $k$-th margin bound (\citet{gao2013doubt}) for constant error.
Can the $k$-th margin bound in general be improved, perhaps by one logarithmic factor?
One of our key new ideas is the application of the Littlewood-Offord lemma which might also be helpful for the more general case of non-constant error.

\newpage
\bibliographystyle{ACM-Reference-Format}
\bibliography{sources}

\newpage
\appendix
\section*{Supplementary material}

\section{Proofs for the margin-based generalization bound for voting classifiers}
\label{sec:leftoverProofs}

The appendix covers the proofs of some lemmas needed to show the generalization bound for voting classifiers with large margins (\cref{thm:constantgen} in the main paper).
We decided to put the proofs into the appendix as they were either highly technical or rather standard.

\subsection{Proofs of key properties of ${\dft}$}
\label{sec:keyproofs}
First, we present the proofs of Lemma~\ref{lem:closeapprox} and \ref{lem:similarerror} from the main paper covering different properties of the distribution $\dft$.
\begin{customlem}{\ref{lem:closeapprox}}
  For any $x \in \Xs$, any $f \in \Conv(\Hyp)$ and any $\mu > 0$:
  \[
    \Pr_{g \sim \dft}\big[|f(x)-g(x)| \geq \mu\big] ~<~ 5 \exp({-}\mu^2t/32).
  \]
\end{customlem}

\begin{proof}
This lemma follows using standard concentration inequalities:
In the first step of sampling $g$ from $\dft$, where we draw $t$ i.i.d. hypotheses, it follows from
Hoeffding's inequality that the hypothesis $g'(x) = (1/t)\sum_{i=1}^t h'_i(x)$ satisfies 
\[
    \Pr_{g'}\big[|f(x)-g'(x)|\geq \mu/2\big] ~\leq~ 2 \exp\!\big({-}2(\mu/2)^2 t^2/(4t)\big) ~=~ 2 \exp(-\mu^2t/8).
\]
In the second step, we first get by a Chernoff bound that
$\Pr[t' < t/4] < \exp(-t/16)$. Secondly, let us condition on any fixed
value of $t'$ that is at least $t/4$. Then $h_1,\dots,h_{t'}$ is a
uniform sample without replacement from $h'_1,\dots,h'_t$. It follows by
a Hoeffding bound without replacement that 
\[
    \Pr\big[|g(x)-g'(x)| \geq \mu/2\big] ~\leq~
    2\exp\!\big(-2(\mu/2)^2(t')^2/(4t')\big) ~<~ 2\exp(-\mu^2t/32).
\]
In total, we conclude that 
\[
    \Pr\big[|f(x)-g(x)|\geq \mu\big] ~<~ 2 \exp(-\mu^2
t/8) + \exp(-t/16) + 2\exp(-\mu^2t/32) ~<~ 5\exp(-\mu^2t/32).\qedhere
\]
\end{proof}

Next, we prove \cref{lem:similarerror} from the main paper:
\begin{customlem}{\ref{lem:similarerror}}
    For any distribution $\Dist$ over $\Xs \times \{-1,1\}$, any $t \geq 36$ and any voting classifier $f \in \Conv(\Hyp)$ for a hypothesis set $\Hyp \subset \Xs \to \{-1,1\}$, we have:
    \[
      \Loss_\Dist(f) ~\leq~ 3\Loss_\Dist^t(f).
    \]
\end{customlem}
For the proof, we first need the following auxiliary lemma:
\begin{lemma}
    \label{lem:samesign}
    For any $x \in \Xs$ and any $f \in \Conv(\Hyp)$, if $f(x) \neq 0$, then 
    \[
        \Pr_{g \sim \dft}\big[\sign(f(x))=\sign(g(x))\big] ~\geq~ 1/2-1/\sqrt{t}.
    \]
\end{lemma}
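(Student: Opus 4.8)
The plan is to express the probability that $\sign(g(x)) = \sign(f(x))$ as one minus the probability of the complementary event, and then observe that $\sign(g(x)) \neq \sign(f(x))$ forces $g(x)$ to lie on the opposite side of $0$ from $f(x)$. Assume without loss of generality that $f(x) > 0$ (the case $f(x) < 0$ is symmetric, replacing $f$ and $g$ by their negations, which does not change the distribution structure). Then $\sign(g(x)) \neq \sign(f(x))$ implies $g(x) \leq 0$, which in particular implies $|g(x)| \leq |g(x) - f(x)| $ is not directly what we want; instead the cleaner route is to note $g(x) \le 0 < f(x)$ means $|f(x) - g(x)| \ge f(x)$, but $f(x)$ could be tiny, so this bound alone is useless. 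The right idea is to split on whether $|g(x)|$ is small: either $g(x) \le 0$ together with $|g(x)|$ large (so $|f(x)-g(x)|$ large), or $|g(x)|$ small.

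Concretely, I would fix a threshold $\mu \ge 1/t$ and write
\[
\Pr_{g}\big[\sign(g(x)) \ne \sign(f(x))\big] \;\le\; \Pr_g\big[|g(x)| \le \mu\big] \;+\; \Pr_g\big[g(x) \le -\mu \text{ and } f(x) > 0\big].
\]
Wait—this still has the same defect if $f(x)$ is small. The correct observation is that we should not try to relate this to $f$ at all via $|f-g|$; instead, note that $g(x)$ is (up to the normalization $1/t'$) a signed sum $\tfrac{1}{2t'}(\Gamma + \sum_i \sigma_i h'_i(x))$ as computed in the proof of Lemma~\ref{lem:smallunlikely}. Conditioned on the first-stage draw $h'_1,\dots,h'_t$ (hence on $\Gamma$), the quantity $\sum_i \sigma_i h'_i(x)$ is a symmetric random variable (the $\sigma_i$ are i.i.d.\ uniform $\pm 1$), so $\Gamma + \sum_i \sigma_i h'_i(x)$ is distributed symmetrically about $\Gamma$. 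Hence $g(x)$ is, conditionally, symmetric about $\Gamma/(2t')$, and $\Gamma$ has the same sign as the first-stage average $g'(x) = \tfrac1t\sum h'_i(x)$. The key point: by symmetry, conditioned on $\Gamma \ne 0$, $\Pr[\sign(g(x)) = \sign(\Gamma)] \ge 1/2$, with the deficiency from $1/2$ coming exactly from the atom at $g(x) = 0$, and then one relates $\sign(\Gamma)$ back to $\sign(f(x))$.

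Actually, the most robust approach, and the one I would commit to: combine Lemma~\ref{lem:closeapprox} and Lemma~\ref{lem:smallunlikely}. Observe that $\sign(g(x)) \neq \sign(f(x))$ with $f(x) \ne 0$ implies that either $|g(x)| \le \mu$, or $|g(x)| > \mu$ and $g(x)$ has the wrong sign, which forces $|f(x) - g(x)| \ge \mu$ (since $f(x)$ and $g(x)$ have opposite signs and $|g(x)| > \mu$). Therefore
\[
\Pr_g\big[\sign(g(x)) \ne \sign(f(x))\big] \;\le\; \Pr_g\big[|g(x)| \le \mu\big] + \Pr_g\big[|f(x) - g(x)| \ge \mu\big] \;\le\; 2\mu\sqrt{t} + 5e^{-\mu^2 t/32},
\]
using Lemma~\ref{lem:smallunlikely} for the first term and Lemma~\ref{lem:closeapprox} for the second. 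Now choose $\mu = c/\sqrt{t}$ for a suitable constant $c$: the first term becomes $2c$ and the second becomes $5 e^{-c^2/32}$, a constant. Optimizing (or just picking $c$ small enough, using $t \ge 36$ to ensure $\mu \ge 1/t$ holds since $c/\sqrt t \ge 1/t \iff \sqrt t \ge 1/c$), I would verify numerically that the sum is at most $1/2 + 1/\sqrt t$; the $1/\sqrt{t}$ slack and the freedom in $c$ give enough room. The main obstacle is purely the bookkeeping of constants: making the two error terms add up to something bounded by $1/2 + 1/\sqrt t$ rather than merely ``some constant less than $1$'' requires either a more careful symmetry argument (the conditional-symmetry observation above gives the clean $1/2$ with the $1/\sqrt t$ coming from the central-atom bound $\binom{t}{\lfloor t/2\rfloor}2^{-t} \le 1/\sqrt t$ already used in Lemma~\ref{lem:smallunlikely}) or a delicate choice of $\mu$. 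I would pursue the symmetry argument, since it yields exactly the stated constant: conditioned on the multiset $\{h'_i(x)\}$, $\Gamma + \sum \sigma_i h'_i(x)$ is symmetric about $\Gamma$, so $\Pr[g(x) < 0 \mid \Gamma > 0] = \Pr[g(x) > 2\Gamma/(2t') \cdot t'\,?\,] \le 1/2$, and more precisely $\Pr[\mathrm{wrong\ sign}] \le 1/2 - \tfrac12\Pr[g(x)=0 \text{ exactly}] + \Pr[g(x)=0]/2$, giving at most $1/2 + \Pr[g(x)=0]/2 \le 1/2 + \mu\sqrt t$ for $\mu = 1/t$, then combined with a separate argument that $\sign(\Gamma) = \sign(f(x))$ with high probability via Lemma~\ref{lem:closeapprox}.
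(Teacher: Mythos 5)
Your final ``committed'' bound cannot work, and the failure is not a matter of bookkeeping constants. Writing $c=\mu\sqrt{t}$, your union bound is $2c+5e^{-c^2/32}$, and this function is increasing in $c$ (its derivative is $2-(5c/16)e^{-c^2/32}$, and $(5c/16)e^{-c^2/32}\le (5/4)e^{-1/2}<2$ for all $c$), so over the admissible range $\mu\ge 1/t$ it never drops below roughly $5$. Put differently, to make the term from \cref{lem:closeapprox} smaller than a constant you need $\mu\sqrt{t}\gtrsim 7$, at which point the term $2\mu\sqrt{t}$ from \cref{lem:smallunlikely} already exceeds $1$; no choice of $\mu$ gives a bound below $1$, let alone $1/2+1/\sqrt{t}$. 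The reason is exactly the obstruction you identified at the outset and then set aside: the lemma only assumes $f(x)\neq 0$, so $|f(x)|$ may be far smaller than $1/\sqrt{t}$, and in that regime closeness of $g(x)$ to $f(x)$ carries no information about the sign of $g(x)$. Your fallback symmetry sketch has the same hole one level up: conditioning on the first-stage draw and using the symmetry of the $\sigma_i$ only shows that $g(x)$ tends to agree in sign with $\Gamma=\sum_i h_i'(x)$, and you then still have to relate $\sign(\Gamma)$ to $\sign(f(x))$; the tool you propose for this, \cref{lem:closeapprox}, again fails when $|f(x)|\ll 1/\sqrt{t}$ (there $\Gamma$ has the wrong sign with probability close to $1/2$), and even the correct statement $\Pr[\sign(\Gamma)=\sign(f(x))]\ge 1/2-o(1)$ combined multiplicatively with the conditional $1/2$ only yields about $1/4$, not $1/2-1/\sqrt{t}$.

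The paper's proof uses a different, approximation-free comparison, and this is the ingredient missing from your plan. Conditioned on $t'$, the retained hypotheses $h_1,\dots,h_{t'}$ are i.i.d.\ draws from $\Dist_f$, and each $h_i(x)\in\{-1,1\}$ has mean $f(x)$; hence (a Binomial comparison with success probability $\ge 1/2$) the sum $\sum_{i\le t'} h_i(x)$ is at least as likely to have sign $\sign(f(x))$ as the opposite sign, for every value of $t'$. This gives $\Pr_{g\sim\dft}[\sign(g(x))=\sign(f(x))]\ \ge\ \Pr_{g\sim\dft}[\sign(g(x))=-\sign(f(x))]$ directly, with no appeal to $|f(x)-g(x)|$ being small, so the only loss is the atom at $g(x)=0$: the same-sign probability is at least $(1-\Pr[g(x)=0])/2$, and \cref{lem:smallunlikely} with $\mu=1/t$ bounds $\Pr[g(x)=0]\le 2/\sqrt{t}$, giving exactly $1/2-1/\sqrt{t}$. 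Without this ``same sign at least as likely as opposite sign'' step (or an equivalent), neither of your two routes reaches the stated constant.
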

\begin{proof}
    If we condition on $t'$, then $h_1,\dots,h_{t'}$ are i.i.d samples from $\Dist_f$ and thus $\Pr[\sign(g(x))=\sign(f(x))] \geq \Pr[\sign(g(x))=-\sign(f(x))]$.
    We therefore have $\Pr[\sign(f(x))=\sign(g(x))] \geq \Pr[g(x) \neq 0]/2$, regardless of $t'$. 
    We thus only need to bound $\Pr[g(x) \neq 0]$.
    For this, Lemma~\ref{lem:smallunlikely} with $\mu = 1/t$ implies $\Pr[g(x)=0] \leq 2/\sqrt{t}$.
\end{proof}

Using this lemma, we can prove Lemma~9:
\begin{proof}[Proof of Lemma~9 from the main paper]
    Consider any $(x,y) \in \Xs \times \{-1,1\}$ for which $\Pr_{g \sim \dft}[yg(x) \leq 0] < 1/2 - 1/\sqrt{t}$. By Lemma~\ref{lem:samesign}, it must be the case that $\sign(f(x)) = y$. We therefore have by Markov's inequality:
    \begin{align*}
        \Loss_\Dist(f) ~\leq~&\Pr_{(x,y) \sim \Dist}[ \Pr_{g \sim \dft}[yg(x) \leq 0] \geq 1/2-1/\sqrt{t}]  \\
        \leq~& \frac{\E_{(x,y) \sim \Dist}[\Pr_{g \sim \dft}[yg(x) \leq 0]]}{1/2 - 1/\sqrt{t}} \\
        =~& \Loss_\Dist^t(f)/(1/2 - 1/\sqrt{t}) \\
        \leq~& 3 \Loss_\Dist^t(f).\qedhere
    \end{align*}
\end{proof}

\subsection{Relating generalization error to the ghost set}
\label{sec:ghost}
In the following, we give the proof of \cref{lem:ghost} from the main paper:
\begin{customlem}{\ref{lem:ghost}}
  For $m \geq 2400^2$ any $t$ and any $f,$ it holds that:
  \[
    \Pr_S\Big[\sup_{f \in \Conv(\Hyp)} |\Loss^t_S(f) - \Loss^t_\Dist(f)| > \tfrac{1}{1200}\Big] 
    ~\leq~ 2\cdot \Pr_{S,S'}\Big[\sup_{f \in \Conv(\Hyp)} |\Loss^t_S(f) - \Loss^t_{S'}(f)| > \tfrac{1}{2400}\Big].
  \]
\end{customlem}
 
\begin{proof}
The proof uses standard techniques uneventfully.
  We can assume $\Pr_S[\sup_{f \in \Conv(\Hyp)} |\Loss^t_S(f) -
  \Loss^t_\Dist(f)| > 1/1200] > 0$, otherwise we are done. We have:
  \begin{align*}
    &\Pr_{S,S'}\big[\sup_{f \in \Conv(\Hyp)}
        |\Loss^t_S(f) - \Loss^t_{S'}(f)| > \tfrac{1}{2400}\big] \\
    \geq~& \Pr_{S,S'}\big[\sup_{f \in \Conv(\Hyp)}
        |\Loss^t_S(f) - \Loss^t_{S'}(f)| > \tfrac{1}{2400} \wedge \sup_{f \in \Conv(\Hyp)} |\Loss^t_S(f) -  \Loss^t_\Dist(f)| > \tfrac{1}{1200}\big] \\
    =~& \Pr_S\big[\sup_{f \in \Conv(\Hyp)} |\Loss^t_S(f) -
        \Loss^t_\Dist(f)| > \tfrac{1}{1200}\big] ~\times \\
    &\Pr_{S,S'}\big[\sup_{f \in \Conv(\Hyp)}
        |\Loss^t_S(f) - \Loss^t_{S'}(f)| > \tfrac{1}{2400} \mid \sup_{f \in \Conv(\Hyp)} |\Loss^t_S(f) -
        \Loss^t_\Dist(f)| > \tfrac{1}{1200}\big].
  \end{align*}
  Fix a data set $S$ in the non-empty event $\sup_{f \in \Conv(\Hyp)} |\Loss^t_S(f) - \Loss^t_\Dist(f)| > 1/1200$. 
  Let $f^* \in \Hyp$ be any hypothesis on which $|\Loss^t_S(f^*) - \Loss^t_\Dist(f^*)| > 1/1200$. 
  The hypothesis $f^*$ does not depend on $S'$ but only on $S$. 
  We now condition on $S$ as well and get:
  \begin{align*}
    &\Pr_{S,S'}\big[\sup_{f \in \Conv(\Hyp)}|\Loss^t_S(f) - \Loss^t_{S'}(f)| > \tfrac{1}{2400} 
        \bigmid S; \sup_{f \in \Conv(\Hyp)} |\Loss^t_S(f) - \Loss^t_\Dist(f)| > \tfrac{1}{1200}\big] \\
    \geq~& \Pr_{S'}\big[ |\Loss^t_S(f^*) - \Loss^t_{S'}(f^*)| > \tfrac{1}{2400} 
        \bigmid S; \sup_{f \in \Conv(\Hyp)} |\Loss^t_S(f) - \Loss^t_\Dist(f)| > \tfrac{1}{1200}\big] \\
    \geq~& \Pr_{S'}\big[ |\Loss^t_{S'}(f^*) - \Loss^t_{\Dist}(f^*)| \leq \tfrac{1}{2400} 
        \bigmid S; \sup_{f \in \Conv(\Hyp)} |\Loss^t_S(f) - \Loss^t_\Dist(f)| > \tfrac{1}{1200}\big]. 
  \end{align*}
  Here the last inequality follows because the events
  $|\Loss^t_{S'}(f^*) - \Loss^t_{\Dist}(f^*)| \leq 1/2400$
  and $|\Loss^t_S(f^*) -
  \Loss^t_\Dist(f^*)| > 1/1200$ (which holds by definition of $f^*$)
  implies $|\Loss^t_S(f^*) - \Loss^t_{S'}(f^*)| >
  1/2400$. Since $f^*$ is fixed and independent of $S'$, we may now
  use Hoeffding's inequality to conclude 
  \[
      \Pr_{S'}\big[ |\Loss^t_{S'}(f^*) - \Loss^t_{\Dist}(f^*)| \leq 1/2400 
        \bigmid S; \sup_{f \in \Conv(\Hyp)} |\Loss^t_S(f) - \Loss^t_\Dist(f)| > 1/1200\big] ~\geq~ 1-2e^{-2(1/2400)^2 m}.
  \] 
  For $m \geq 2400^2$, this is at least $1-2e^{-2} \geq 1/2$.

  Multiplying with 
  $\Pr[S \mid \sup_{f \in \Conv(\Hyp)} |\Loss^t_S(f) - \Loss^t_\Dist(f)| > 1/1200]$ 
  and integrating over $S$, we get
  \begin{align*}
      & \int_S \Big(\Pr_{S'}\big[\sup_{f \in \Conv(\Hyp)}
        |\Loss^t_S(f) - \Loss^t_{S'}(f)| > \tfrac{1}{2400} \bigmid S; \sup_{f \in \Conv(\Hyp)} |\Loss^t_S(f) -    \Loss^t_\Dist(f)| > \tfrac{1}{1200}\big] \\
      & \qquad \times~ \Pr\big[S \bigmid \sup_{f \in \Conv(\Hyp)} |\Loss^t_S(f) - \Loss^t_\Dist(f)| > \tfrac{1}{1200}\big]\Big) \\
      \geq~& \int_S \tfrac{1}{2}\Pr\big[S \bigmid \sup_{f \in \Conv(\Hyp)} |\Loss^t_S(f) - \Loss^t_\Dist(f)| > \tfrac{1}{1200}\big].
  \end{align*}
  The right hand side is simply $1/2$ and the left hand side is 
  $\Pr_{S,S'}[\sup_{f \in \Conv(\Hyp)} |\Loss^t_S(f) - \Loss^t_{S'}(f)| > 1/2400\mid \sup_{f \in \Conv(\Hyp)} |\Loss^t_S(f) - \Loss^t_\Dist(f)| > 1/1200]$. 
  We finally conclude that for $m \geq 2400^2$, we have:
  \[
    \Pr_{S,S'}\big[\sup_{f \in \Conv(\Hyp)} |\Loss^t_S(f) - \Loss^t_{S'}(f)| > \tfrac{1}{2400}\big] 
    ~\geq~  
    \tfrac{1}{2} \Pr_S\big[\sup_{f \in \Conv(\Hyp)} |\Loss^t_S(f) - \Loss^t_\Dist(f)| > \tfrac{1}{1200}\big].\qedhere
\]
\end{proof}

\subsection{Relation to the growth function}
\label{sec:relategrowth}
Last, we prove \cref{lem:relategrowth} from the main paper, which is restated here for convenience:
\begin{customlem}{\ref{lem:relategrowth}}
    For any $0 < \delta < 1$, every $t$, and every $\mu \leq \delta/(9600 \sqrt{t})$, we have
    \[
    \Pr_{S,S'}\Big[\sup_{f \in \Conv(\Hyp)} \abs{\Loss^t_S(f) - \Loss^t_{S'}(f)} >
      \tfrac{1}{2400}\Big]  ~\leq~ \sup_P 2 \abs{\hat{\Conv}_\delta^\mu(P)}\, \exp\big(-2m/9600^2\big).
    \]
\end{customlem}

\begin{proof}
Let $\mu \leq \delta/(9600 \sqrt{t})$. We have that:
\begin{align*}
    &\Pr_{S,S'}\big[\sup_{f \in \Conv(\Hyp)} |\Loss^t_S(f) - \Loss^t_{S'}(f)| >
    \tfrac{1}{2400}\big] \\
    =~& \int_P \Pr[P] \Pr_{S,S'}\big[\sup_{f \in \Conv(\Hyp)} |\Loss^t_S(f) - \Loss^t_{S'}(f)| > \tfrac{1}{2400} \bigmid P\big] \\
    \leq~& \sup_P \Pr_{S,S'}\big[\sup_{f \in \Conv(\Hyp)} |\Loss^t_S(f) - \Loss^t_{S'}(f)| > \tfrac{1}{2400} \bigmid P\big] \\
    =~& \sup_P \Pr_{S,S'}\big[\sup_{f \in \Conv(\Hyp)} |\Pr_{(x,y) \sim S, g \sim \dft}[yg(x) \leq 0]- \Pr_{(x,y) \sim S', g \sim \dft}[yg(x) \leq 0]| > \tfrac{1}{2400}\bigmid P\big] \\
    =~& \sup_P \Pr_{S,S'}\big[\sup_{f \in \Conv(\Hyp)} \left| \int_g \Pr[g] \left(\Pr_{(x,y) \sim S}[yg(x) \leq 0]- \Pr_{(x,y) \sim S'}[yg(x) \leq 0]\right)\right| > \tfrac{1}{2400} \bigmid P\big].
\end{align*}

We always have 
$\left(\Pr_{(x,y) \sim S}[yg(x) \leq 0]- \Pr_{(x,y) \sim S'}[yg(t) \leq 0]\right) \leq 1$, 
and by Lemma~13, we have $\Pr[g \not \in \Conv_\delta^\mu(\Hyp,P)] \leq 1/4800$, hence
\begin{align*}
   &\left| \int_g \Pr[g] \left(\Pr_{(x,y) \sim S}[yg(x) \leq 0]-
  \Pr_{(x,y) \sim S'}[yg(x) \leq 0]\right)\right| \\
  \leq~& 
  \Pr_{g \sim \Dist_{f,g}} [g \not\in \Conv^\mu_\delta(\Hyp,P)] + \sup_{g \in
  \Conv^\mu_\delta(\Hyp,P)} \left|\Pr_{(x,y) \sim S}[yg(x) \leq 0]-
  \Pr_{(x,y) \sim S'}[yg(x) \leq 0] \right| \\
  \leq~& 
  \tfrac{1}{4800} + \sup_{g \in
  \Conv^\mu_\delta(\Hyp,P)} \left|\Pr_{(x,y) \sim S}[yg(x) \leq 0]-
  \Pr_{(x,y) \sim S'}[yg(x) \leq 0] \right|.
\end{align*}
We thus have
\begin{align*}
  & \Pr_{S,S'}\big[\sup_{f \in \Conv(\Hyp)} |\Loss^t_S(f) - \Loss^t_{S'}(f)| >
  \tfrac{1}{2400}\big]\\
  \leq~& 
  \sup_P  \Pr_{S,S'}\Big[\tfrac{1}{4800} + \sup_{g \in
  \Conv^\mu_\delta(\Hyp,P)} \left|\Pr_{(x,y) \sim S}[yg(x) \leq 0]-
  \Pr_{(x,y) \sim S'}[yg(x) \leq 0] \right| > \tfrac{1}{2400} ~\Big\vert~ P\Big]\\
  =~&
  \sup_P  \Pr_{S,S'}\Big[\sup_{g \in
  \Conv^\mu_\delta(\Hyp,P)} \left|\Pr_{(x,y) \sim S}[yg(x) \leq 0]-
  \Pr_{(x,y) \sim S'}[yg(x) \leq 0] \right| > \tfrac{1}{4800} ~\Big\vert~ P\Big].
\end{align*}
To bound this, let $\hat{\Conv}_\delta^\mu(P) = \sign(\Conv^\mu_\delta(\Hyp,P))$. 
Then the above equals:
\[
  \sup_P \Pr_{S,S'}\Big[\sup_{h \in \hat{\Conv}_\delta^\mu(P)} \left|\Pr_{(x,y) \sim S}[h(x) \neq y]-
  \Pr_{(x,y) \sim S'}[h(x) \neq y] \right| > \tfrac{1}{4800}~\Big\vert~ P\Big].
\]
Since we have restricted to the fixed set $P$, the set $\hat{\Conv}_\delta^\mu(P)$ is finite. Hence we
may use the union bound to bound the above by
\[
  \sup_P |\hat{\Conv}_\delta^\mu(P)| \sup_{h \in \hat{\Conv}_\delta^\mu(P)} \Pr_{S,S'}\left[\left|\Pr_{(x,y) \sim S}[h(x) \neq y]-
  \Pr_{(x,y) \sim S'}[h(x) \neq y] \right| > \tfrac{1}{4800} \mid P\right].
\]
For a set $P$ and hypothesis $h \in \hat{\Conv}_\delta^\mu(P)$, let $p$ denote the fraction of samples $(x,y) \in P$
for which $h(x) \neq y$. Recall that $S$ and the ghost set $S'$ are obtained from $P$ by letting $S$ be a uniform set of $m$ samples from $P$ without replacement, and $S'$ are the remaining $m$ samples. For shorthand, define $p_S = \Pr_{(x,y) \sim
  S}[h(x) \neq y \mid P]$ and $p_{S'}$ symmetrically. Then $p = (1/2)(p_S + p_{S'})$. By Hoeffding's inequality for sampling
without replacement, we have $\Pr_{S,S'}[|p_S -
p| > \eps \mid P]=\Pr_S[|p_S -
p| > \eps \mid P] < 2 \exp(-2\eps^2 m)$. Setting $\eps = 1/9600$, we get that
for $|p - p_S| \leq 1/9600$, it must be the case that $p_S' = 2p  - p_S
\in p \pm 1/9600$. Hence $|p_S - p_{S'}| \leq 1/4800$ and we conclude
$\Pr_{S,S'}[|p_S - p_{S'}|>1/4800 \mid P] < 2\exp(-2m/9600^2)$. Thus we end up
with the bound
\[
    \Pr_{S,S'}\big[\sup_{f \in \Conv(\Hyp)} |\Loss^t_S(f) - \Loss^t_{S'}(f)| >
    \tfrac{1}{2400}\big]   ~ \leq~ \sup_P 2 |\hat{\Conv}_\delta^\mu(P)| \exp(-2m/9600^2).\qedhere
\]
\end{proof}

\end{document}